\theoremstyle{plain}
\newtheorem{theorem}{Theorem}
\newtheorem{lemma}{Lemma}
\newtheorem{remark}{Remark}
\newtheorem{corollary}{Corollary}
\def\cA{{\mathcal{A}}}
\def\cB{{\mathcal{B}}}
\def\cC{{\mathcal{C}}}
\def\cE{{\mathcal{E}}}
\def\cF{{\mathcal{F}}}
\def\cL{{\mathcal{L}}}
\def\cN{{\mathcal{N}}}
\def\cZ{{\mathcal{Z}}}
\def\BE{{\mathbb{E}}}
\def\BI{{\mathbb{I}}}
\newcommand{\Var}{\mathrm{Var}}
\DeclareMathOperator*{\argmax}{arg\,max}
\DeclareMathOperator*{\argmin}{arg\,min}
\title{Robust Decentralized Multi-armed Bandits:\\ From Corruption-Resilience to Byzantine-Resilience}
\author{
    Zicheng Hu, Yuchen Wang, Cheng Chen\thanks{The corresponding author.} \\
}
\begin{document}

\maketitle

\begin{abstract}
    Decentralized cooperative multi-agent multi-armed bandits (DeCMA2B) considers how multiple agents collaborate in a decentralized multi-armed bandit setting. Though this problem has been extensively studied in previous work, most existing methods remain susceptible to various adversarial attacks. In this paper, we first study DeCMA2B with adversarial corruption, where an adversary can corrupt reward observations of all agents with a limited corruption budget. We propose a robust algorithm, called DeMABAR, which ensures that each agent’s individual regret suffers only an additive term proportional to the corruption budget. Then we consider a more realistic scenario where the adversary can only attack a small number of agents. Our theoretical analysis shows that the DeMABAR algorithm can also almost completely eliminate the influence of adversarial attacks and is inherently robust in the Byzantine setting, where an unknown fraction of the agents can be Byzantine, i.e., may arbitrarily select arms and communicate wrong information. We also conduct numerical experiments to illustrate the robustness and effectiveness of the proposed method.
\end{abstract}


\section{Introduction}
\label{sec:introduction}
The multi-armed bandit (MAB) problem is a classical online learning model. It has been widely applied in many real-world scenarios such as wireless monitoring~\cite{le2014sequential}, medical trials~\cite{villar2015multi}, and online advertising~\cite{schwartz2017customer}. In a typical MAB setup, there are $K$ arms, each with an unknown reward distribution.

With advancements in social networks, data centers, and communication devices, multi-agent MAB problems have gained considerable attention~\cite{boursier2019sic,chawla2020gossiping,huang2021federated,liu2021cooperative,wang2022achieving,wang2022distributed,zuo2023adversarial}. Among the diverse multi-agent settings, we focus on the decentralized cooperative multi-agent multi-armed bandits (DeCMA2B), where $V$ agents collaborate on a network, sharing information with their neighbors. Many recent studies focus on improving the communication efficiency and achieving near-optimal regret bounds~\cite{martinez2019decentralized,lalitha2021bayesian,zhu2021decentralized,zhu2023byzantine}. However, few of them consider the robustness of the algorithms.

In real-world applications, multi-agent systems could be disrupted by many factors, such as click fraud~\cite{lykouris2018stochastic}, denial-of-service (DoS) attacks in routing~\cite{zhou2019privacy}, and the presence of malicious agents~\cite{ferdowsi2019cyber}. These are mainly studied under two regimes: (i) \textit{adversarial corruptions}~\cite{liu2021cooperative,ghaffari2024multi,hu2025near}, where an adversary can maliciously corrupt the rewards of an unknown proportion $\beta \in [0,1]$ of the $V$ agents with a total corruption level $C$, and (ii) \textit{Byzantine agents}~\cite{vial2022robust,zhu2023byzantine}, where compromised agents can behave arbitrarily and send conflicting information to neighbors. Thus, a natural and important question arises:

\textit{Is there a robust algorithm that can defend against both adversarial corruptions and Byzantine agents?}

\begin{table*}[t]
    \centering
    \renewcommand\arraystretch{2}
    \begin{tabular}{|c|c|c|c|}
    \hline
        \multicolumn{2}{|c|}{Methods} & Centralized & Decentralized \\
        \cline{1-4}
        \multicolumn{2}{|c|}{\cite{liu2021cooperative}} & $VC + \frac{K\ln^2(T)}{\Delta}$ & -- \\
         \cline{1-4}

         \multicolumn{2}{|c|}{\cite{ghaffari2024multi}} & $\frac{C}{V} + \frac{K\ln^2(T)}{V\Delta}$ & -- \\
         \cline{1-4}

         \multicolumn{2}{|c|}{\cite{hu2025near}} & $\frac{C}{V} + \sum_{\Delta_k>0}\frac{\ln^2(T)}{V\Delta_k} + \frac{K}{V\Delta}$ & -- \\
         \cline{1-4}
          \multirow{2}{*}[-0.5em]{\makecell{DeMABAR (Ours)}} &
          $\beta \leq \alpha$  & $\frac{1}{1-2\alpha}\bigg(\sum_{\Delta_k > 0}\frac{\ln^2(T)}{V\Delta_k} + \frac{K\ln(T)}{V\Delta}\bigg)$ & $\frac{1}{1-2\alpha}\bigg(\sum_{\Delta_k > 0}\frac{\ln^2(T)}{v_i^w\Delta_k} + \frac{K\ln(T)}{v_{\min}^w\Delta}\bigg)$ \\
         \cline{2-4}
          & $\beta > \alpha$ & $\frac{1}{1-2\alpha}\bigg(\frac{C}{V} + \sum_{\Delta_k > 0}\frac{\ln^2(T)}{V\Delta_k} + \frac{K\ln(T)}{V\Delta}\bigg)$ & $\frac{1}{1-2\alpha}\bigg(\frac{C}{v_{\min}^w} + \sum_{\Delta_k > 0}\frac{\ln^2(T)}{v_i^w\Delta_k} + \frac{K\ln(T)}{v_{\min}^w\Delta}\bigg)$ \\
         \cline{1-4}
    \end{tabular}
    \caption{Overview of expected individual regret in multi-agent MAB with adversarial corruption. We omit constant terms that are independent of $T$. Notice that all the above algorithms in the centralized setting need a communication cost of $O(V\ln(T))$.}
    \label{tab:corruption}
\end{table*}
\begin{table}[t]
    \scriptsize
    \renewcommand\arraystretch{2}
    \begin{tabular}{|c|c|c|}
    \hline
    Methods & Individual regret & \makecell{Communication \\ cost} \\
    \hline
    \cite{zhu2023byzantine} & $\sum_{\Delta_k > 0}\frac{\ln(T)}{\Delta_k}$ & $VT$\\
    \cline{1-3}
    DeMABAR (Ours) & \makecell{$\frac{1}{1-2\alpha}\Big(\sum\limits_{\Delta_k > 0}\tfrac{\ln^2(T)}{v_i\Delta_k}
    + \tfrac{K\ln(T)}{v_{\min}\Delta}\Big)$} & $V\ln(T)$\\
    \cline{1-3}
    \end{tabular}
    \caption{Overview of expected individual regret and communication times in Byzantine DeCMA2B problems.}
    \label{tab:byzantine}
\end{table}
In this paper, we provide a positive answer to this question by proposing the DeMABAR (\textbf{De}centralized \textbf{M}ulti-\textbf{A}gent \textbf{B}andit \textbf{A}lgorithm with \textbf{R}obustness). Our method leverages the idea from the BARBAR algorithm~\cite{gupta2019better}, which is robust to adversarial corruptions in single-agent scenarios. Unlike BARBAR where the epoch length depends on the instance, our DeMABAR algorithm uses an instance-independent epoch length, ensuring that all agents have the same epoch length. In this way, DeMABAR allows agents to share information with their neighbors only at the beginning and the end of each epoch, thus improving communication efficiency. Theoretical analysis reveals that DeMABAR achieves a near-optimal regret bound in DeCMA2B under adversarial corruptions, with only a communication cost of $O(wV\ln(T))$. In addition, our DeMABAR includes a novel filtering mechanism to mitigate the influence of up to $\alpha V$ corrupted agents, where the hyperparameter $\alpha \in [0, \frac{1}{2})$ represents the fraction of malicious agents the system can tolerate. This filtering mechanism guarantees the robustness of DeMABAR in the presence of up to $\alpha V$ Byzantine agents.

We summarize the individual regret comparison for the adversarial corruption setting and the Byzantine setting in Table~\ref{tab:corruption} and Table~\ref{tab:byzantine}, respectively. The main contributions of this paper are summarized as follows:
\begin{itemize}
    \item We propose the novel DeMABAR algorithm for DeCMA2B and achieve near-optimal regret in both the adversarial corruption and Byzantine settings, with only a logarithmic communication cost.
    \item For DeCMA2B with adversarial corruptions, our DeMABAR algorithm achieves the following regret upper bounds for each agent $i$:
    \begin{align*}
        &\text{If $\beta \leq \alpha$, we have} \\
        &R_i(T){\le} O\Big(\frac{1}{1-2\alpha}\Big(\sum\limits_{\Delta_k > 0}\frac{\ln^2(T)}{v_i^w\Delta_k} + \frac{K\ln(T)}{v_{\min}^w\Delta}\Big)\Big), \\
        &\text{If $\beta > \alpha$, we have} \\
        &R_i(T){\le} O\Big(\frac{1}{1-2\alpha}\Big(\frac{C}{v_{\min}^w} {+} \sum\limits_{\Delta_k > 0}\frac{\ln^2(T)}{v_i^w\Delta_k} + \frac{K\ln(T)}{v_{\min}^w\Delta}\Big)\Big).
    \end{align*}
    The definitions of $v_i^w$ and $v_{\min}^w$ are introduced in the notation part of the next section.
    \item For DeCMA2B with Byzantine agents, our DeMABAR algorithm achieves the following regret bound for each agent $i$:
    \[
        R_i(T)\le O\left(\frac{1}{1-2\alpha}\bigg(\sum_{\Delta_k > 0}\frac{\ln^2(T)}{v_i\Delta_k} + \frac{K\ln(T)}{v_{\min}\Delta}\bigg)\right).
    \]
    The definitions of $v_i$ and $v_{\min}$ are introduced in the notation part of the next section.
    \item We also perform experiments to verify the robustness and effectiveness of our method.
\end{itemize}

\section{Related Work}
\label{sec:related work}
\paragraph{DeCMA2B.} 
Most prior works on DeCMA2B~\citep{martinez2019decentralized,lalitha2021bayesian,chawla2020gossiping} typically use gossip-based communication protocols to achieve consensus among agents. However, the algorithms in these works are \emph{not} robust to adversarial corruptions~\citep{jun2018adversarial,zuo2023adversarial}: even a small amount of adversarial corruption can cause such algorithms to suffer linear regret.

\paragraph{DeCMA2B with Adversarial Corruptions.} Adversarial corruptions in bandits were first considered by~\citet{lykouris2018stochastic}, and have attracted significant recent interest.~\citeauthor{lykouris2018stochastic} introduced an arm-elimination algorithm with regret scaling linearly in the total corruption $C$, and showed that a linear dependence on $C$ is unavoidable in general.~\citet{gupta2019better} proposed the BARBAR algorithm, which improves the dependence on $C$ by more judiciously sampling suboptimal arms. Building on this idea, several works have designed robust multi-agent bandit algorithms for adversarial corruption in \emph{centralized} settings~\citep{liu2021cooperative,ghaffari2024multi,hu2025near}, leveraging inter-agent collaboration to improve individual regret. However, to our knowledge, there is still no algorithm that is robust to adversarial corruptions in DeCMA2B.

\paragraph{DeCMA2B with Byzantine Agents.} Several recent works consider bandit learning in the presence of Byzantine agents.~\citet{madhushani2021one} studied an adaptive Byzantine communication model where any communicated reward can be arbitrarily altered.~\citet{vial2021robust,vial2022robust} and~\citet{mitra2022collaborative} considered settings where an unknown fraction of agents are Byzantine and can act arbitrarily. The approach of~\citeauthor{mitra2022collaborative} is specialized to linear contextual bandits and relies on a central coordinator, whereas~\citeauthor{vial2021robust} mitigate Byzantine influence by partitioning arms among agents (limiting the damage any single Byzantine can do). Most relevant to us,~\citet{zhu2023byzantine} were the first to propose a robust algorithm for DeCMA2B with Byzantine agents. They guarantee that the \emph{individual} regret of each normal (non-Byzantine) agent is strictly smaller than in the non-cooperative case; however, the improvement is only by a constant factor, rather than scaling inversely with the number of agents as is typical in benign cooperative settings. By contrast, our approach nearly retains the $\Theta(1/v_i)$ per-agent regret improvement even in the presence of Byzantine agents (see Table~\ref{tab:byzantine}).

\section{Preliminaries}
\label{sec:preliminaries}
In this section, we first describe the problem settings of multi-agent multi-armed bandits with the adversarial corruption and Byzantine settings. Then we introduce the notation used in this paper.

\subsection{Problem Setup}
\paragraph{Multi-agent Multi-armed Bandits}
Let $[V] = \{1, 2, \dots, V\}$ denote the set of $V$ agents and $[K] = \{1, 2, \dots, K\}$ denote the set of $K$ arms. The multi-agent network of $V$ agents is represented by the nodes of an undirected connected graph $G=([V],E)$, where $E$ is the set of edges. All agents face the same stochastic $K$-armed bandit problem over a horizon of $T$ rounds. In each round $t$, every agent $i$ selects an arm $k_{i,t}$ and receives a reward $r_{i,t}$ that is drawn i.i.d. from a fixed but unknown distribution with mean $\mu_{k_{i,t}} \in [0,1]$. After obtaining the reward, each agent may broadcast messages to its neighbors and receive messages from its neighbors. The received information can be used in the next round if desired.

Let $k^* \in \argmax_k \mu_k$ be an optimal arm, and we define $\Delta_k = \mu_{k^*} - \mu_k$ as the suboptimality gap of arm $k$, and let $\Delta = \min_{\Delta_k>0} \Delta_k$ be the smallest positive suboptimality gap. Let $n^k_{i,t}$ be the number of times that agent $i$ has pulled arm $k$ up to round $t$. The individual pseudo-regret of agent $i$ over $T$ rounds is defined as
\[ R_i(T) = T \mu_{k^*} - \BE\bigg[\sum_{t=1}^T r_{i,t}\bigg] 
= \sum_{k=1}^K \Delta_k \BE[n^k_{i,T}]\,.\]

For simplicity, we quantify communication cost as the total number of messages broadcast by all agents. The total communication cost over $T$ rounds is defined as
\[ \textstyle \mathrm{Cost}(T) = \sum_{i=1}^V \sum_{t=1}^T \mathbb{I}\{\text{agent $i$ broadcasts at time $t$}\}. \]

\paragraph{Adversarially corrupted setting} In this setting, at each round~$t\in[T]$, the protocol between the agents and the adversary is as follows:
\begin{enumerate}
    \item The environment generates a reward vector $(r_{i,t}(1), \dots, r_{i,t}(K))$ for each agent $i$, according to the reward distributions.
    \item The adversary observes all reward vectors and generates a \emph{corrupted} reward vector $(\tilde r_{i,t}(1), \dots, \tilde r_{i,t}(K))$ for each agent $i$, based on the history of the previous $t-1$ rounds.
    \item Each agent $i$ chooses an arm $k_{i,t}$ and observes only the corrupted reward $\tilde r_{i,t}(k_{i,t})$ for that arm.
\end{enumerate}
The corruption level of the adversary is defined as 
\[ C \;=\; \sum_{i=1}^V \sum_{t=1}^T \max_{k \in [K]} \big|\,\tilde r_{i,t}(k) - r_{i,t}(k)\,\big|. \]
We assume that for each agent $i$, the adversary can corrupt at most a fraction $\beta \in [0,1]$ of its neighbors. Note that both $C$ and $\beta$ are \emph{unknown} to the agents.
\paragraph{Byzantine setting} In the Byzantine agent model, a subset of the agents (called Byzantine agents) may act adversarially. A Byzantine agent can select arbitrary arms in each round and send arbitrary messages to its neighbors, potentially sending different messages to different neighbors. Normal agents do not know which of their neighbors are Byzantine, but for each normal agent $i$, we assume that at most a fraction $\alpha \in [0, 0.5)$ of its neighbors are Byzantine. As in previous work on the Byzantine model~\cite{vial2021robust,vial2022robust,mitra2022collaborative,zhu2023byzantine}, we assume that $\alpha$ is known to the algorithm. In the Byzantine setting, we only focus on the regret of the normal agents, since Byzantine agents can behave arbitrarily.

\paragraph{Relationship between adversarially corrupted and Byzantine settings} In the adversarially corrupted setting, an adversary can manipulate the rewards generated by the environment with a budget $C$. Conversely, Byzantine agents can send arbitrary information to other agents in every round, acting as if they had an infinite corruption budget. Additionally, in the Byzantine setting, we only consider the individual regret of normal agents, while in the adversarially corrupted setting, we consider the individual regret of all agents.

\subsection{Notation}
Given a graph $G = ([V], E)$, we let $d(u,v)$ denote the number of edges of a shortest path connecting nodes $u$ and $v$ in $G$. Note that we have $d(v,v)=0$ for any node $v$. For an integer $w \ge 0$, we define $\cN_w(i) = \{\, j \in V : d(i,j) \le w \,\}$ as the set of nodes located within distance $w$ from node $i$, which is also referred to as the $w$-neighborhood of node $i$. Note that we have $\{i\} = \cN_0(i) \subseteq \cN_1(i) \subseteq \cN_2(i) \subseteq \cdots$. Let $D = \max_{u,v \in [V]}d(u,v)$ denote the diameter of the graph $G$. We define $v_i^w = \min_{j \in \cN_w(i)} |\cN_w(j)|$ as the minimum number of nodes within distance $w$ of any node in the $w$-neighborhood of node $i$. We define $v_{\min}^w = \min_{j \in [V]} |\cN_w(j)|$ as the smallest $w$-neighborhood size among all nodes, so we have $v_{\min}^w = \min_{i \in [V]}v_i^w$. For simplicity, we define $v_i = v_i^1$ and $v_{\min} = v_{\min}^1$.

\section{Algorithm}
\label{sec:algorithm}
In this section, we present our robust algorithm DeMABAR, summarized in Algorithm~\ref{algs:DeMABAR}. For clarity, we first consider the adversarial corruption setting, followed by the Byzantine agent model.

\subsection{DeCMA2B with Adversarial Corruptions}
Our DeMABAR algorithm operates in synchronized epochs, with agents allowed to broadcast information at the end of each epoch. We denote by $w\in[D]$ the \emph{collaboration distance}, meaning each agent $i$ will exchange messages with the agents in its $w$-neighborhood $\cN_w(i)$, which incurs a delay of $w - 1$ rounds for information to propagate $w$ hops. The hyperparameter $\alpha \in [0,0.5)$ serves as an estimate of $\beta$, the maximum fraction of corrupted agents among any node’s neighbors.

At the start of epoch $m$, each agent $i$ computes an empirical suboptimality-gap estimate $\Delta^{m-1}_{i,k}$ for all arms $k\in[K]$, based on data from the previous epoch. Each arm $k$ is expected to be pulled about $(\Delta^{m-1}_{i,k})^{-2}$ times, but capped by $2^{2m}$ pulls to avoid over-exploring any arm. For each agent $i$, all agents $j$ in its $w$-neighborhood are responsible for roughly a $\frac{1}{(1-2\alpha)|\cN_w(i)|}$ fraction of the pulls for each arm $k$, i.e., 
$n^m_{i,k} = \frac{\lambda \, (\Delta^{\,m-1}_{i,k})^{-2}}{(1-2\alpha)\,|\cN_w(i)|}$.
However, to satisfy the collaboration requirements of all agents in $i$’s $w$-neighborhood, it may need to pull slightly more than $n^m_{i,k}$ times; thus, we define $\tilde n^m_{i,k}$ (line 7) as the expected number of pulls for agent $i$ on arm $k$ in epoch $m$. On the other hand, if $\sum_{k=1}^K \tilde n_{i,k}^m < N_m$, we select the arm $k_i^m$ that exhibited the best performance in the previous epoch and adjust its number of pulls to be $\tilde n_{i,k_i^m}^m$. After $N_m$ rounds, each agent $i$ broadcasts the received information $(i, \{S_{i,k}^m\}_{k=1}^K, \{\tilde n_{i,k}^m\}_{k=1}^K)$ to its $w$-neighborhood, and this step requires $w$ rounds. This communication process requires $w$ rounds, during which each agent $i$ selects the arm $k_i^m$ but does not record the received reward. At the end of epoch $m$, each agent $i$ uses Algorithm~\ref{alg:filter} to filter out corrupted data before the next epoch’s estimates are computed.
\begin{algorithm}[t]
    \caption{\textbf{DeMABAR}}
    \label{algs:DeMABAR}
    \begin{algorithmic}[1]
        \STATE \textbf{Input:} collaboration distance $w$, fraction $\alpha \in [0,0.5)$.
        \STATE \textbf{Initialize:} $T_0 \leftarrow 0$, $\Delta^0_{i,k} \leftarrow 1$, and $\lambda \leftarrow 2^9 \ln(2VT)$.
        \FOR{all agent $i\in[V]$ in parallel}
        \FOR{epoch $m = 1,2,\ldots$}
            \STATE $N_m \leftarrow \left\lceil \frac{\lambda K \, 2^{\,m-1}}{(1-2\alpha)\,v_{\min}^w} \right\rceil$,\;\; $T_m \leftarrow T_{m-1} + N_m$.  
            \STATE $ \tilde n^m_{i,k} \leftarrow \min\Big\{\frac{16\,\lambda\,(\Delta^{\,m-1}_{i,k})^{-2}}{(1-2\alpha)\,v_i^w}, \frac{\lambda\,2^{\,2(m-1)}}{(1-2\alpha)\,v_i^w}\Big\}$.
            \STATE Select arm $k^m_i$ such that $\Delta^{\,m-1}_{i,k^m_i} = 2^{-(m-1)}$.
            \STATE Set $\tilde n^m_{i,\,k^m_i} \leftarrow N_m- \sum_{k \ne k^m_i} \tilde n^m_{i,k}$.
            \FOR{$t = T_{m-1} + 1$ {\bf to} $T_m$}
                \STATE Pull arm $k_{i,t} \sim p^m_i$, where $p^m_i(k) = \tilde n^m_{i,k} / N_m$.
                \STATE Observe corrupted reward $\tilde r_{i,t}(k_{i,t})$.
                \STATE Update $S^m_{i,k_{i,t}} \leftarrow S^m_{i,k_{i,t}} + \tilde r_{i,t}(k_{i,t})$.
            \ENDFOR
            \STATE\textit{Communication step:} 
            \FOR{$t = T_m + 1$ {\bf to} $T_m + w$}  
                \STATE Pull arm $k_i^m$ and observe corrupted reward.
                \STATE Send message $( i, \{S_{i,k}^m\}_{k=1}^K, \{\tilde n_{i,k}^m\}_{k=1}^K)$ and all messages received at $t-1$ round to neighbors.
                \STATE Receive messages from the neighboring agents.
            \ENDFOR
            \STATE \textit{Filter step:} Run Algorithm~\ref{alg:filter} to obtain $r_{i,k}^m$.
            \STATE Set $T_m \leftarrow T_m + w$.
            \STATE Set $r^m_{i,*} \leftarrow \max_{k}\{r^m_{i,k} - \frac{1}{8}\Delta^{m-1}_{i,k}\}$.
            \STATE Set $\Delta^m_{i,k} \leftarrow \max\{2^{-m}, r^m_{i,*} - r^m_{i,k}\}$ for each arm $k$.
            \ENDFOR
            \ENDFOR
    \end{algorithmic}
\end{algorithm}
\paragraph{Filtering mechanism} 
First, for each arm $k$, each agent $i$ removes the messages from agents whose number of pulls $\tilde n^m_{j,k}$ is lower than $n^m_{i,k}$. If too many neighbors are removed, leaving fewer than $(1-2\alpha)|\cN_w(i)|$ neighbors, we define this event as $\cL^m_{i,k}$, and agent $i$ resets $n^m_{i,k}$ to the minimum observation count and restores all neighbors. Then, each agent $i$ sets $\cB^m_{i,k} = \cA^m_{i,k}$ as the set of neighbors whose data will be used for its final estimate on arm $k$. Each agent $i$ sorts $S^m_{j,k}/\tilde n^m_{j,k}$ in descending order for $j \in \cB_{i,k}^m$ and removes the indices corresponding to the $f$ largest and $f$ smallest values from $\cB_{i,k}^m$. Finally, it uses the filtered data to compute $r^m_{i,k}$ as the trimmed average.
\begin{algorithm}[t]
    \caption{\textbf{Filter} (agent $i$)}
    \label{alg:filter}
    \begin{algorithmic}[1]    
        \FOR{arm $k = 1,2,\ldots,K$}
            \STATE Initialize an available set $\cA_{i,k}^m = \cN_w(i)$.
            \STATE Remove agent $j$ from $\cA^m_{i,k}$ if $\tilde n^m_{j,k} > n^m_{i,k}$.
            \IF{$|\cA_{i,k}^m| < (1-2\alpha) |\cN_w(i)|$} 
                \STATE Set $n^m_{i,k} \leftarrow \min\limits_{j \in \cN_w(i)} \tilde n^m_{j,k}$, and $\cA^m_{i,k} \leftarrow \cN_w(i)$.
            \ENDIF
            \STATE Let $\cB^m_{i,k} \leftarrow \cA^m_{i,k}$.
            \STATE  Let $f = \frac{1}{2}\big\lfloor(|\cB_{i,k}^m| - (1-2\alpha)|\cN_w(i)|)\big\rfloor$.
            \STATE Sort $S^m_{j,k}/\tilde n^m_{j,k}$ in descending order for $j \in \cB_{i,k}^m$ and remove the indices corresponding to the $f$ largest and $f$ smallest values from $\cB_{i,k}^m$.
            \STATE Set $
                r^m_{i,k} \leftarrow \min\!\big\{\frac{1}{|\cB_{i,k}^m|} \sum_{j \in \cB^m_{i,k}} \frac{S^m_{j,k}}{\tilde n^m_{j,k}}, 1\big\}. 
            $
        \ENDFOR
        \STATE \textbf{Output:} $r^m_{i,k}$ for $k\in[K]$.
    \end{algorithmic}
\end{algorithm}
\paragraph{Robustness when $\boldsymbol{\beta \leq \alpha}$} 
When the fraction of adversarially corrupted neighbors satisfies $\beta \leq \alpha$, at most $\big\lfloor \alpha |\cN_w(i)|\big\rfloor$ of agent $i$’s neighbors can be corrupted in any epoch. Consequently, the set $\cB^m_{i,k}$ (after filtering) will contain exactly $|\cN_w(i)|$ agents, resulting in $f = \lfloor \alpha |\cN_w(i)| \rfloor$. Even if up to $f$ corrupted agents remain in $\cB^m_{i,k}$, their average rewards $S^m_{j,k}/\tilde n^m_{j,k}$ will be bounded above and below by at least an equal number of uncorrupted agents. Consequently, removing the $f$ largest and $f$ smallest values ensures that any artificially inflated or deflated contributions from corrupted agents are eliminated, which guarantees that the estimate $r^m_{i,k}$ is close to the true mean reward.
\paragraph{Robustness when $\boldsymbol{\beta > \alpha}$} 
If the adversary can corrupt more than an $\alpha$ fraction of its neighbors, DeMABAR still maintains robustness by never permanently eliminating any arm based on possibly corrupted data. Instead, it continues to occasionally explore every arm, allocating a limited number of pulls to each arm in each epoch. Our DeMABAR algorithm guarantees that a corruption amount of $C_m$ in epoch $m$ will only result in $O(C_m 2^{-(s-m)})$ additional pulls for all suboptimal arms in subsequent epochs $s>m$. Thus, the regret of our DeMABAR algorithm will only suffer an additive term that depends on the total corruption budget $C$.

We show that the DeMABAR algorithm achieves the following regret bound, and the proof is deferred to the Appendix.
\begin{theorem}\label{the:DeMABAR}
In DeCMA2B with adversarial corruptions, our DeMABAR algorithm only requires a communication cost of $O(wV \ln(VT))$ to achieve the following individual regret for each agent $i$:\\
If $\beta \le \alpha$, we have
{\small \begin{align*}
    R_i(T) = O\left(\frac{\ln(VT)}{1-2\alpha}\bigg(\sum_{\Delta_k>0}\frac{\ln(VT)}{v_i^w\Delta_k} + \frac{K\ln(VT)\ln(\frac{1}{\Delta})}{v_{\min}^w\Delta}\bigg)\right),
\end{align*}}
If $\beta > \alpha$, we have
{\small \begin{align*}
    R_i(T) &= O\left(\frac{\ln(VT)}{1-2\alpha}\bigg(\sum_{\Delta_k>0}\frac{\ln(VT)}{v_i^w\Delta_k} + \frac{K\ln(VT)\ln(\frac{1}{\Delta})}{v_{\min}^w\Delta}\bigg)\right)\\
    &\quad \quad + O\bigg(\frac{C}{(1-2\alpha)v_{\min}^w}\bigg).
\end{align*}}
\end{theorem}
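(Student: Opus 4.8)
The plan is to adapt the BARBAR-style analysis of \citet{gupta2019better} to the decentralized, multi-agent, partially-corrupted setting, tracking per-agent and per-epoch corruption carefully. I would first set up the key invariant that the analysis maintains: with high probability, at the start of every epoch $m$ the estimated gap $\Delta^{m-1}_{i,k}$ is, up to constant factors, a valid surrogate for the true gap $\Delta_k$ in the sense that $\Delta_k/C \le \Delta^{m-1}_{i,k} \le C\Delta_k$ plus a correction term that absorbs the corruption injected in recent epochs. Concretely, I would define, for each agent $i$ and epoch $m$, the local corruption mass $C^m_i = \sum_{j \in \cN_w(i)} \sum_{t \in \text{epoch } m} \max_k |\tilde r_{j,t}(k) - r_{j,t}(k)|$, and show by induction that $\Delta^m_{i,k} \le O\big(\Delta_k + \sum_{s \le m} 2^{-(m-s)} \rho^s_i\big)$ where $\rho^s_i$ measures the per-epoch estimation error contributed by corruption, so that the geometric discounting yields an additive-in-$C$ blowup rather than a multiplicative one.

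The heart of the argument is the accuracy guarantee for the trimmed mean $r^m_{i,k}$ produced by Algorithm~\ref{alg:filter}. Here I would split into the two regimes. When $\beta \le \alpha$: after the filtering step the set $\cB^m_{i,k}$ has exactly $|\cN_w(i)|$ members with $f = \lfloor \alpha |\cN_w(i)| \rfloor$, at most $f$ of which are corrupted; since the trimmed average discards the $f$ largest and $f$ smallest values, every remaining corrupted entry is sandwiched between two honest entries, so $r^m_{i,k}$ lies within the range of the honest agents' empirical means. A Bernstein/Chernoff bound on each honest agent's empirical mean $S^m_{j,k}/\tilde n^m_{j,k}$ — using that arm $k$ is pulled $\Theta(\lambda (\Delta^{m-1}_{i,k})^{-2}/((1-2\alpha)v^w_i))$ times in aggregate across the $w$-neighborhood, and $\lambda = 2^9\ln(2VT)$ — then gives $|r^m_{i,k} - \mu_k| \le \tfrac{1}{c}\Delta^{m-1}_{i,k}$ for a suitable constant, with no residual corruption term, which is what drives the clean (corruption-free) bound in this regime. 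When $\beta > \alpha$: the filter can no longer guarantee all corrupted agents are trimmed, so I would instead bound the deviation of $r^m_{i,k}$ by $O(C^m_i / (\,\tilde n^m_{i,k}(1-2\alpha) v^w_i\,))$ — corrupted contributions are divided by the (large) pull counts — plus the same statistical error; summing the resulting extra pulls over epochs $s > m$ with the geometric factor $2^{-(s-m)}$ collapses to the additive $O(C/((1-2\alpha)v^w_{\min}))$ term. I also need to handle the event $\cL^m_{i,k}$ (too many neighbors removed in line 3–5 of Algorithm~\ref{alg:filter}): there $n^m_{i,k}$ is reset to $\min_j \tilde n^m_{j,k}$ and all neighbors restored, and I would argue this event only occurs when some honest neighbor genuinely pulled arm $k$ few times, so it does not corrupt the estimate but merely weakens the pull-count bookkeeping by a constant.

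Given the per-epoch accuracy guarantee, the regret decomposition is routine: in epoch $m$ agent $i$ pulls arm $k$ about $\tilde n^m_{i,k} = \Theta(\lambda (\Delta^{m-1}_{i,k})^{-2}/((1-2\alpha)v^w_i))$ times (capped by the $2^{2(m-1)}$ term), contributing $\Delta_k \tilde n^m_{i,k}$ to the regret; using $\Delta^{m-1}_{i,k} = \Theta(\Delta_k)$ on the clean event, $\Delta_k \tilde n^m_{i,k} = \Theta(\lambda/((1-2\alpha)v^w_i \Delta_k))$ per epoch, and summing over the $O(\ln(VT))$ epochs before the cap binds gives the $\sum_{\Delta_k>0} \ln^2(VT)/((1-2\alpha)v^w_i\Delta_k)$ term; the forced-exploration / cap term $\lambda 2^{2(m-1)}/((1-2\alpha)v^w_i)$, summed over the $O(\ln(1/\Delta))$ epochs where it is active for each of the $K$ arms, yields the $K\ln(VT)\ln(1/\Delta)/((1-2\alpha)v^w_{\min}\Delta)$ term (the $v^w_{\min}$ rather than $v^w_i$ coming from the definition of $N_m$). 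The communication cost is immediate: each agent broadcasts once per epoch during the $w$-round communication phase, there are $O(\ln(VT))$ epochs since $N_m$ grows geometrically and $\sum_m N_m \ge T$ forces $m = O(\ln(VT))$, and relaying takes $w$ hops, for a total of $O(wV\ln(VT))$ messages. The main obstacle I anticipate is making the induction on $\Delta^m_{i,k}$ airtight across agents simultaneously — the estimates of different agents in overlapping neighborhoods are coupled through shared data and shared corruption, so the high-probability event must be defined once, globally, via a union bound over all $V$ agents, $K$ arms and $O(\ln(VT))$ epochs (this is exactly why $\lambda$ carries $\ln(VT)$ rather than $\ln(T)$), and the discounted-corruption recursion must be shown to close with the \emph{same} constants for every agent despite the heterogeneous $v^w_i$.
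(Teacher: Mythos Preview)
Your plan matches the paper's: BARBAR-style epochs, an induction on $\Delta^m_{i,k}$ with geometrically discounted corruption, the two-regime split for the filter, and a global high-probability event via a union bound over agents, arms, and $O(\ln(VT))$ epochs. Two concrete steps, however, do not go through as you state them.

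First, in the $\beta>\alpha$ regime your corruption term $O\big(C^m_i/(\tilde n^m_{i,k}(1-2\alpha)v^w_i)\big)$ is the crude deterministic bound (total corruption divided by total pulls of arm $k$) and is too weak. The correct deviation has the full epoch length $N_m$ in the denominator, not the arm-specific count $\tilde n^m_{i,k}$: since the adversary fixes the corruption vector \emph{before} the agent's random arm draw, the corruption actually landing on arm $k$ is only a $\tilde n^m_{i,k}/N_m$ fraction of the budget, which the paper makes precise via a martingale argument on $(Y^t_{j,k}-q^m_{j,k})c^t_{j,k}$. This $N_m$ is exactly what makes the telescoping close: with $\rho_m=\sum_{s\le m}C_s/\big(8^{m-s}(1-2\alpha)v^w_{\min}N_s\big)$ the corruption-dominated regret is $\sum_m\rho_{m-1}N_m$, and the fixed ratio $N_m/N_s=O(4^{m-s})$ collapses it to $O\big(C/((1-2\alpha)v^w_{\min})\big)$; with $\tilde n^m_{i,k}$ in place of $N_m$ the ratios $\tilde n^m_{i,k}/\tilde n^s_{i,k}$ are uncontrolled and the sum does not close. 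Second, in the $\beta\le\alpha$ regime your sandwich argument only shows $r^m_{i,k}$ lies in the range of the individual honest means, giving deviation $\max_j\sqrt{\ln(VT)/\tilde n^m_{j,k}}\approx\Delta^{m-1}_{i,k}\sqrt{(1-2\alpha)v^w_j/\lambda}$, which is a factor $\sqrt{v^w_j}$ too large and would erase the cooperative speedup. The paper instead writes each surviving corrupted value as a convex combination of two \emph{discarded} honest values, so the trimmed mean becomes a single weighted sum over all honest pulls with effective sample size $(1-2\alpha)|\cN_w(i)|\,n^m_{i,k}=\lambda(\Delta^{m-1}_{i,k})^{-2}$, and applies one Chernoff bound to that aggregate to obtain the needed $\Delta^{m-1}_{i,k}/8$ accuracy.
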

\begin{remark}
    In the case $\beta \le \alpha$, the regret bound in Theorem~\ref{the:DeMABAR} is independent of $C$. Even a strong adversary~\cite{zuo2024near} cannot force the DeMABAR algorithm to suffer linear regret.
\end{remark}
\begin{corollary}
For centralized CMA2B with adversarial corruptions and $\beta = 1$, our DeMABAR algorithm with $\alpha = \frac{1}{3}$ has the following individual regret for each agent $i$:
\begin{align*}
    R_i(T) = O\left(\frac{C}{V} +\sum_{\Delta_k>0}\frac{\ln^2(VT)}{V\Delta_k} + \frac{K\ln(VT)\ln(\frac{1}{\Delta})}{V\Delta}\right).
\end{align*} 
\end{corollary}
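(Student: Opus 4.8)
The plan is to obtain the corollary as a direct specialization of Theorem~\ref{the:DeMABAR} to a fully connected communication graph. First I would observe that in centralized CMA2B every pair of agents can exchange messages in a single round, so the relevant graph is the complete graph on $[V]$; running DeMABAR with collaboration distance $w = 1$ then gives $\cN_w(i) = [V]$ for every agent $i$, so $|\cN_w(i)| = V$ and, by the definitions in the notation section, $v_i^w = \min_{j\in\cN_w(i)}|\cN_w(j)| = V$ as well as $v_{\min}^w = V$. In particular the communication cost $O(wV\ln(VT))$ from Theorem~\ref{the:DeMABAR} becomes $O(V\ln(VT))$, matching the centralized entry of Table~\ref{tab:corruption}.

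Next I would identify which branch of Theorem~\ref{the:DeMABAR} to invoke. Since the corollary fixes $\beta = 1$ and $\alpha = \tfrac{1}{3}$, we are in the regime $\beta > \alpha$, so the bound carrying the additive $C$ term applies. Substituting $\alpha = \tfrac{1}{3}$ gives $\tfrac{1}{1-2\alpha} = 3 = O(1)$, and substituting $v_i^w = v_{\min}^w = V$ turns the two stochastic terms into $\sum_{\Delta_k>0}\tfrac{\ln^2(VT)}{V\Delta_k}$ and $\tfrac{K\ln(VT)\ln(1/\Delta)}{V\Delta}$ and the corruption term into $\tfrac{C}{V}$, after absorbing the constant $3$ (and any residual logarithmic slack in the elimination term) into the $O(\cdot)$. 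This is exactly the claimed bound.

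The only substantive content beyond the substitution is to confirm that DeMABAR is well-defined and that the hypotheses behind Theorem~\ref{the:DeMABAR} still hold on the complete graph with $w = 1$: one should check that the trimming index $f = \tfrac{1}{2}\big\lfloor V - (1-2\alpha)V\big\rfloor = \Theta(V)$ is a legitimate nonnegative integer, that $N_m$, $\tilde n^m_{i,k}$, the reset event $\cL^m_{i,k}$ in Algorithm~\ref{alg:filter}, and the resulting sampling distribution $p^m_i$ remain consistent, and that no step of the proof of Theorem~\ref{the:DeMABAR} secretly required $w < D$. I expect this check to be routine. Note that with $\beta = 1$ every agent's rewards may be corrupted, so the trimmed mean offers no protection against the bounded budget $C$; the robustness is carried entirely by the ``never permanently eliminate an arm'' mechanism, which is precisely the $\beta > \alpha$ regime already handled by Theorem~\ref{the:DeMABAR}. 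Hence the main (and only minor) obstacle is reconciling the logarithmic factors between the general theorem and the slightly cleaner form stated in the corollary.
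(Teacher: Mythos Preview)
Your proposal is correct and matches the paper's treatment: the corollary is stated without a separate proof and is meant to follow immediately from Theorem~\ref{the:DeMABAR} by taking the complete graph with $w=1$ (so $v_i^w=v_{\min}^w=V$), invoking the $\beta>\alpha$ branch, and absorbing $\tfrac{1}{1-2\alpha}=3$ into the $O(\cdot)$. Your additional sanity checks on $f$, $N_m$, and $w\le D$ are harmless but not needed, since the corollary is purely a specialization of the theorem's hypotheses.
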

\begin{remark}
As highlighted in Table~\ref{tab:corruption}, even in this setting, our algorithm’s individual regret bound is strictly smaller (by logarithmic factors or more) than prior results for robust multi-agent bandits~\cite{liu2021cooperative,ghaffari2024multi}. Compared with~\cite{hu2025near}, the main part of our regret bound is consistent with theirs.
\end{remark}
\subsection{DeCMA2B with Byzantine agents}
Recalling the setup of the Byzantine setting, for each normal agent, at most a fraction $\alpha \in [0, 0.5)$ of its neighbors are Byzantine agents. In the Byzantine setting, we think communication at distances greater than $1$ is inherently unsafe because Byzantine agents may maliciously modify the received messages and send wrong information to their neighbors.
\begin{figure*}[t]
    \centering
    \renewcommand{\arraystretch}{1.5}
    \includegraphics[width=\linewidth]{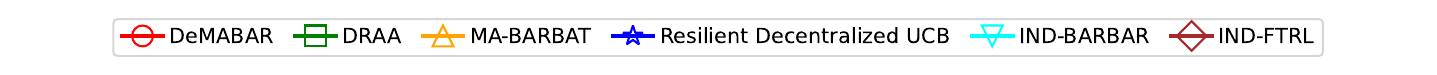}
    \begin{tabular}{cccc}
        \includegraphics[width = 0.22\textwidth]{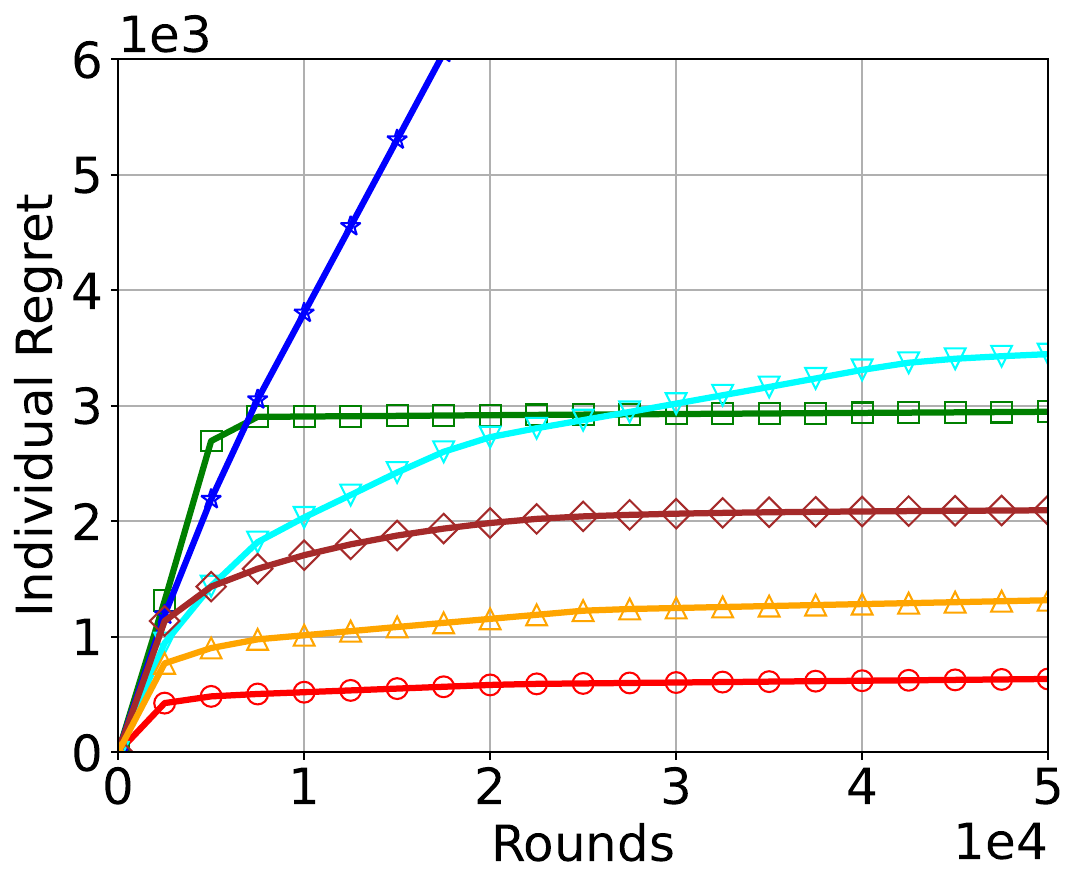} &  
        \includegraphics[width = 0.22\textwidth]{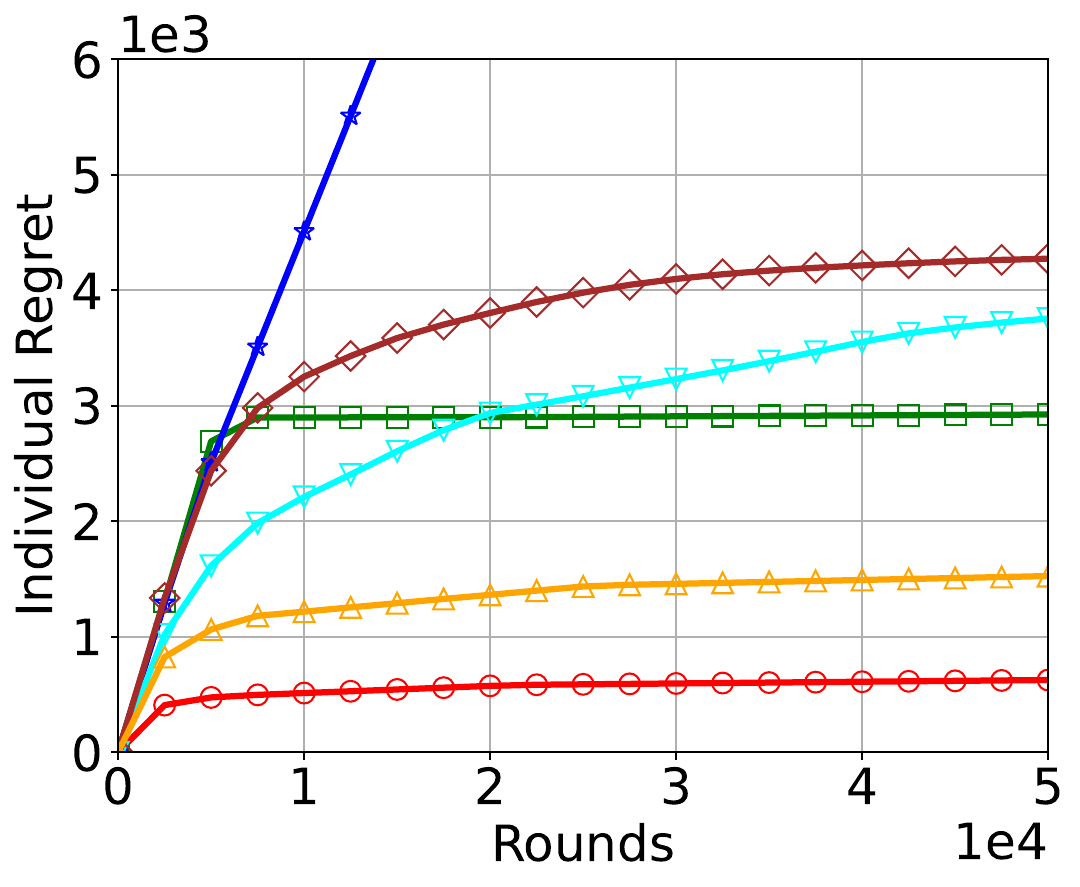} &
        \includegraphics[width = 0.22\textwidth]{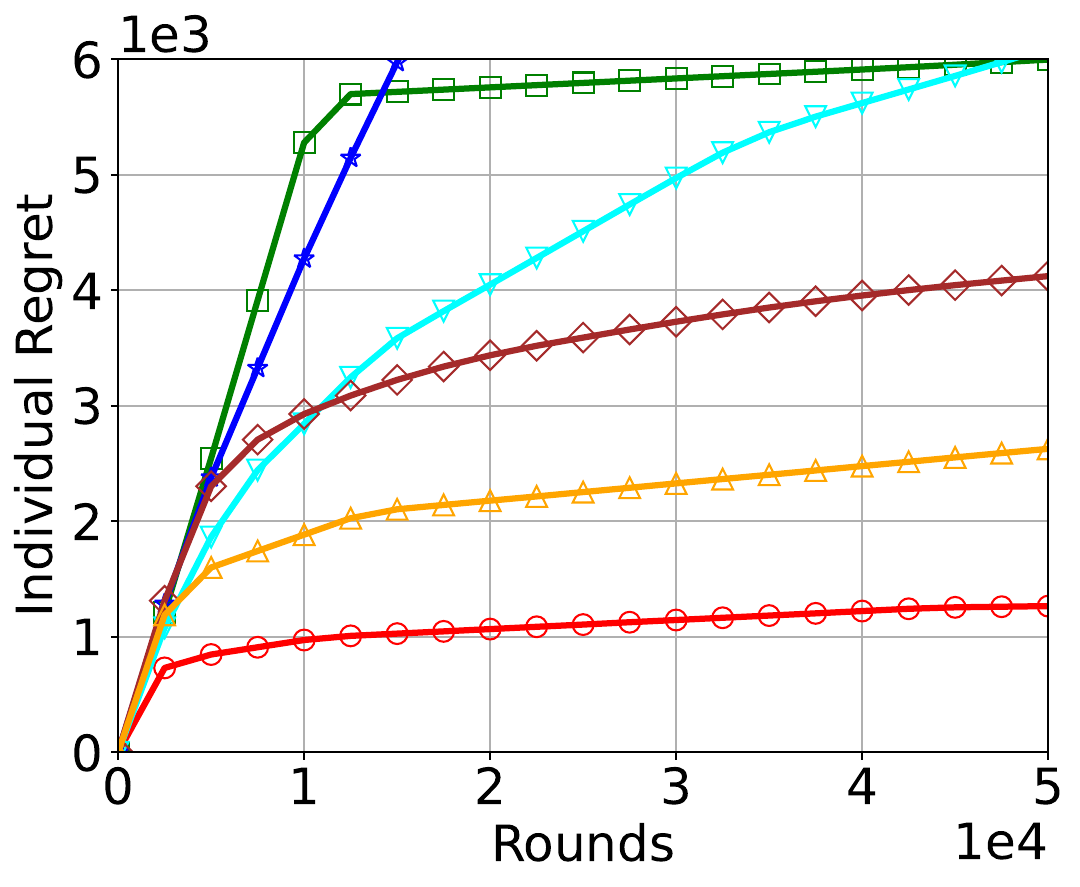} & 
        \includegraphics[width = 0.22\textwidth]{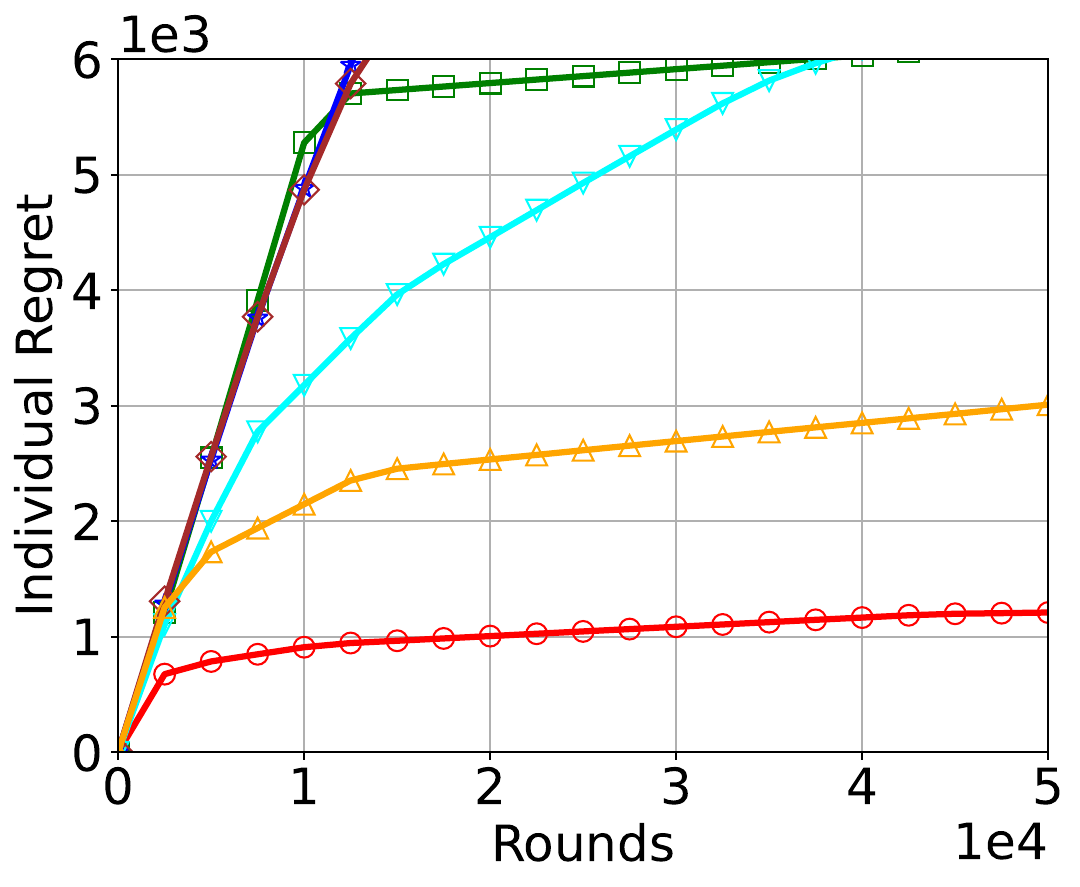}\\
        (a)  K = 10, C = 1500 &
        (b)  K = 10, C = 2000 &
        (c)  K = 20, C = 3000 &
        (d)  K = 20, C = 4000 \\
        \multicolumn{4}{c}{\textbf{(i) The adversary attacks all agents.}} \\
        \includegraphics[width = 0.22\textwidth]{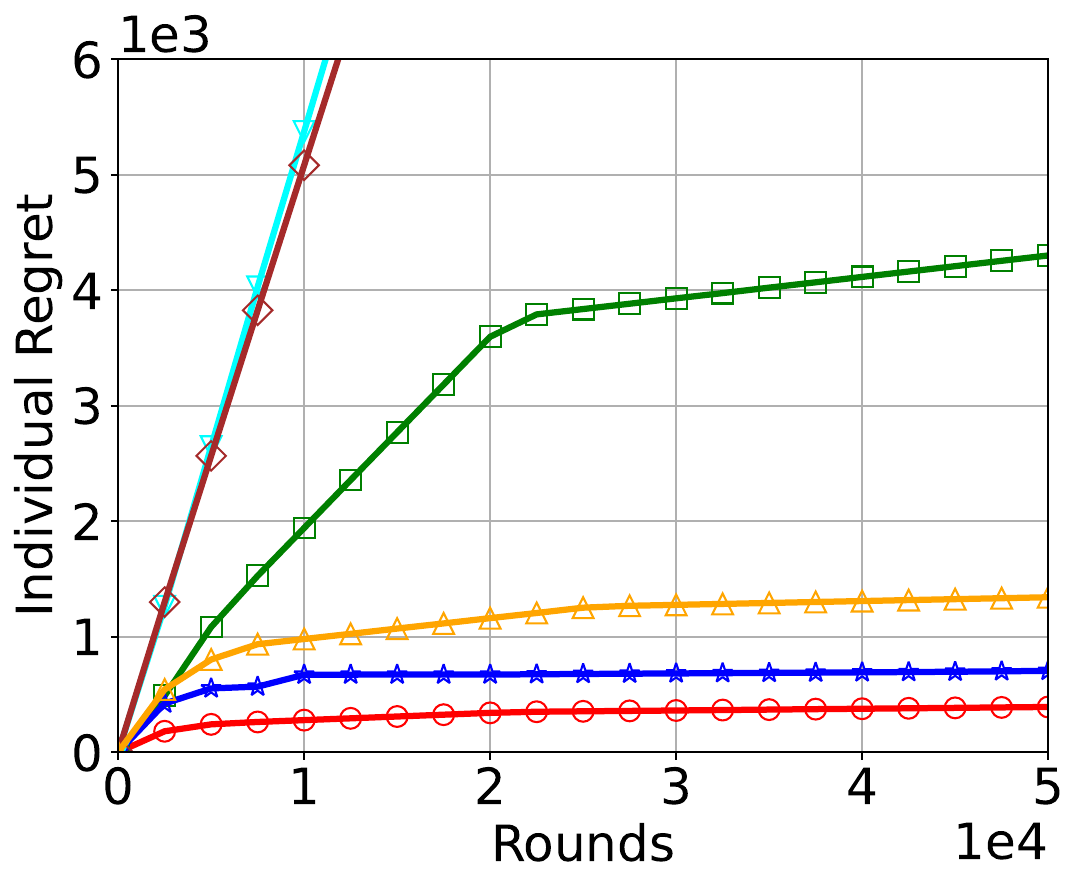} &  
        \includegraphics[width = 0.22\textwidth]{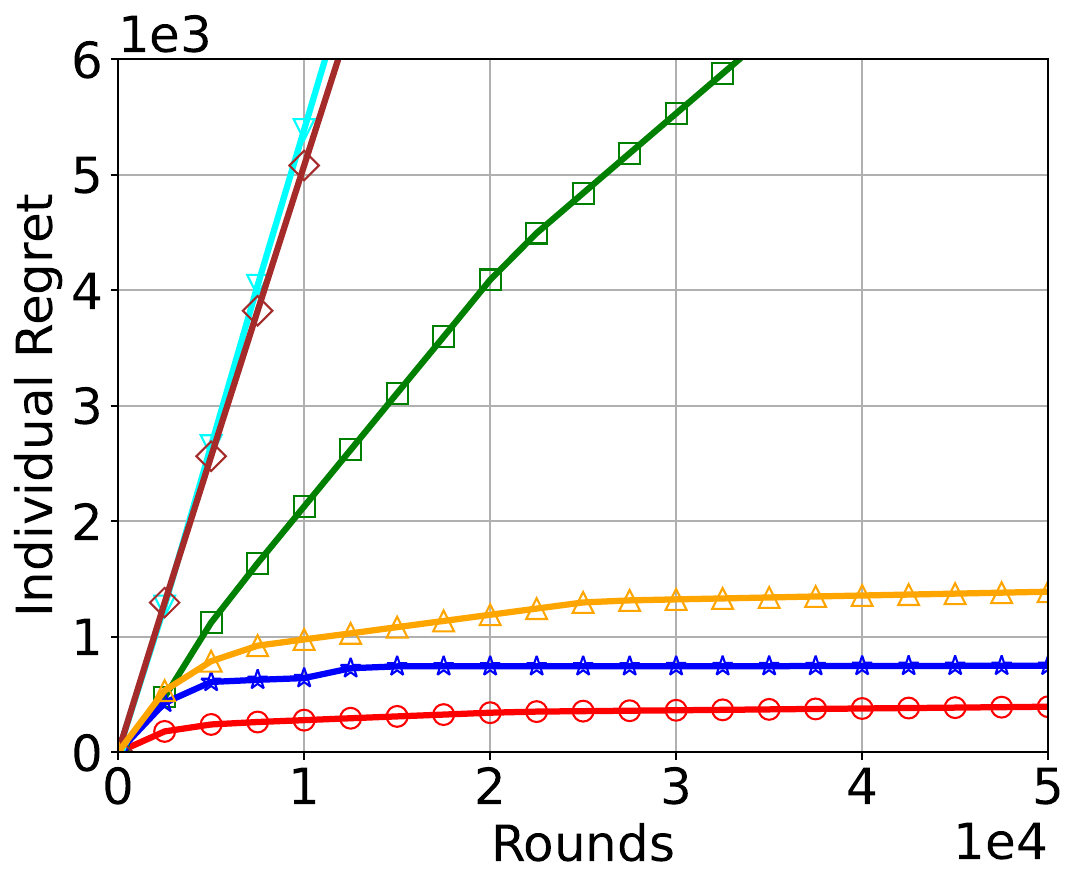} &
        \includegraphics[width = 0.22\textwidth]{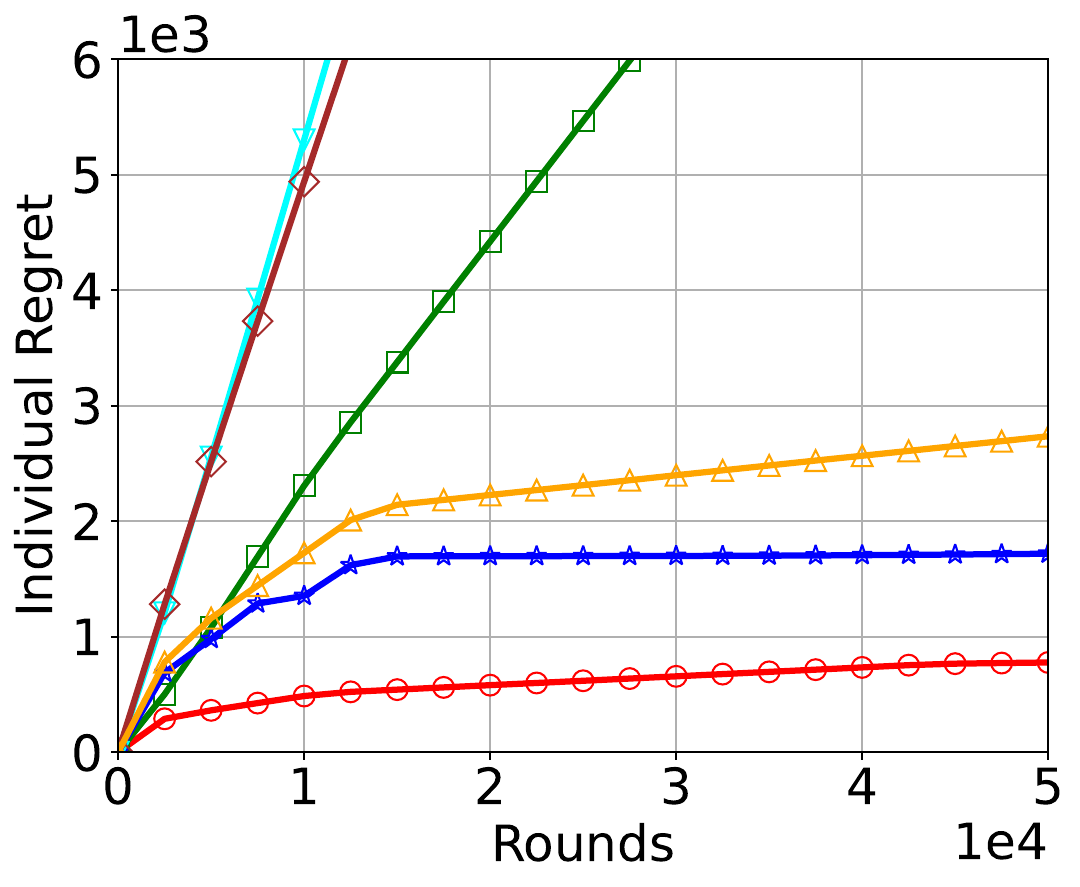} & 
        \includegraphics[width = 0.22\textwidth]{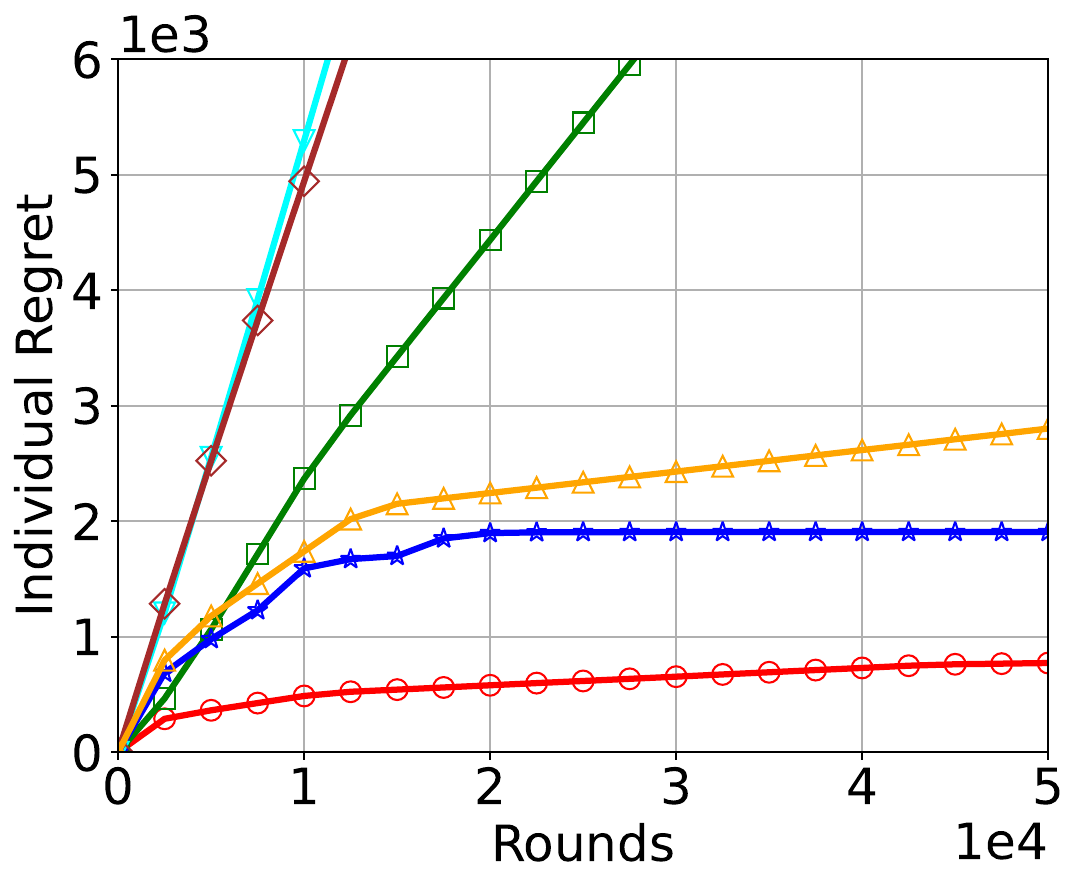}\\
        (a)  K = 10, C = 6000 &
        (b)  K = 10, C = 8000 &
        (c)  K = 20, C = 12000 &
        (d)  K = 20, C = 16000 \\
        \multicolumn{4}{c}{\textbf{(ii) The adversary attacks three agents.}} 
    \end{tabular}
    \caption{DeMABAR vs. DRAA, Resilient Decentralized UCB, MA-BARBAT, IND-BARBAR, and IND-FTRL in centralized CMA2B under adversarial corruption.}
    \label{fig:cor-dis}
\end{figure*}
\begin{figure}[t]
    \centering
    \includegraphics[width=0.4\linewidth]{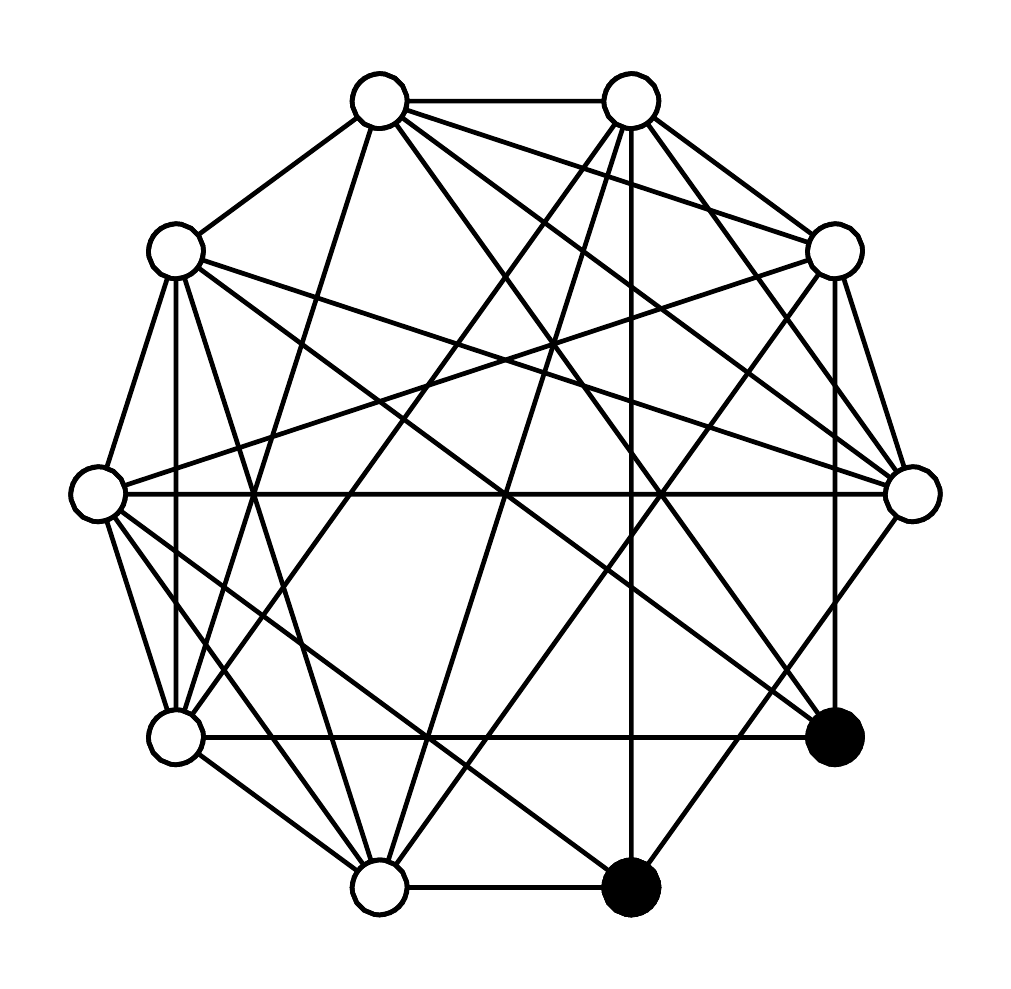}
    \caption{The network structure used in the experiment.}
    \label{fig:gra}
\end{figure} 
Thus we set $w = 1$, meaning that each agent only receives messages from its immediate neighbors. This method that leverages only one-hop neighbor information has also been employed in prior studies~\citep{zhu2023byzantine,wang2023online,liu2025offline} to preserve robustness.

Fortunately, the DeMABAR algorithm described earlier already includes two filtering mechanisms (in Algorithm~\ref{alg:filter}, lines 4-5 and 12-13) specifically designed to handle potentially Byzantine neighbors. These filters ensure that even if some neighbors are Byzantine agents, their influence on an agent’s estimates is negligible.

We can thus bound the regret of our algorithm in the Byzantine setting. The proof is deferred to the Appendix.
\begin{theorem}\label{the:regret case3}
    In DeCMA2B with Byzantine agents, with $O(V \ln(VT))$ communication cost, Algorithm~\ref{algs:DeMABAR} achieves the following regret for each normal agent $i$:
    \[ 
        R_i(T) = O\left(\frac{\ln(VT)}{\,1 - 2\alpha\,}\bigg(\;\sum_{\Delta_k>0} \frac{\ln(VT)}{v_i\,\Delta_k} + \frac{K\,\ln(\frac{1}{\Delta})}{\,v_{\min}\,\Delta\,}\bigg)\right).
    \]
\end{theorem}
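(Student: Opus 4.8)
The plan is to reduce Theorem~\ref{the:regret case3} to the $\beta\le\alpha$ analysis already developed for Theorem~\ref{the:DeMABAR}, specialized to $w=1$. A Byzantine neighbor is exactly a neighbor on which the adversary has an \emph{unbounded} corruption budget, but the structural constraint is identical: at most an $\alpha$-fraction of any normal agent's one-hop neighbors are bad, and (with $w=1$) no forwarded information is ever trusted. So the single new ingredient to establish is that Algorithm~\ref{alg:filter} still returns, for every normal agent $i$, epoch $m$ and arm $k$, an estimate with $|r^m_{i,k}-\mu_k|\le\frac18\Delta^{m-1}_{i,k}$ on a high-probability event, with \emph{no} additive corruption term (the Byzantine "budget'' is cancelled by trimming rather than paid for). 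Given that per-epoch estimation guarantee, the epoch-doubling structure, the invariant $\Delta^m_{i,k}=\Theta(\max\{2^{-m},\Delta_k\})$, and the regret summation are exactly as in the $\beta\le\alpha$ part of Theorem~\ref{the:DeMABAR}.

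First I would set up the good event. Since Byzantine agents choose $(S^m_{j,k},\tilde n^m_{j,k})$ adversarially, concentration is only needed for normal agents: for every normal $j$, epoch $m$, arm $k$, the empirical mean $S^m_{j,k}/\tilde n^m_{j,k}$ deviates from $\mu_k$ by at most a Hoeffding/Bernstein radius $\approx\sqrt{\ln(VT)/\tilde n^m_{j,k}}$; a union bound over the $O(VK\ln T)$ triples together with $\lambda=2^9\ln(2VT)$ makes the relevant aggregate radius at most $\frac18\Delta^{m-1}_{j,k}$ and fails with probability $O(1/T)$. I would also record the deterministic facts $\Delta^{m-1}_{i,k}\ge 2^{-(m-1)}$ and $(\Delta^{m-1}_{i,k})^{-2}\le 2^{2(m-1)}$, so that the cap in line~6 is never binding where the full sample size is needed.

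Next comes the filter analysis at $w=1$. On the good event I would show, by induction on $m$: (i) no normal neighbor is ever deleted in line~3, because $\tilde n^m_{j,k}\ge n^m_{i,k}$ for every normal $j\in\cN_1(i)$ once all normal gap-estimates are within a constant factor of one another (here the factor $16/v_j$ in line~6 dominates $1/((1-2\alpha)|\cN_1(i)|)$); hence at most $\lfloor\alpha|\cN_1(i)|\rfloor$ agents are removed, the reset in lines~4--5 never triggers, and $\cA^m_{i,k}\supseteq\cN_1(i)\cap\text{Normal}$; (ii) therefore $\cB^m_{i,k}=\cA^m_{i,k}$ contains all normal neighbors plus at most $\lfloor\alpha|\cN_1(i)|\rfloor$ surviving Byzantine ones, $f=\frac12\lfloor|\cB^m_{i,k}|-(1-2\alpha)|\cN_1(i)|\rfloor$ is at least the number of surviving Byzantine agents, and $|\cB^m_{i,k}|-2f=(1-2\alpha)|\cN_1(i)|$ exactly; (iii) each surviving Byzantine value is then bracketed above and below by normal values, so the trimmed mean is a convex combination of normal empirical means and bracketed Byzantine values, and a careful accounting that pairs each surviving Byzantine value with normal values on both sides — using the \emph{aggregate} concentration of the $\Theta(v_i)$ normal means, whose pooled sample size is $\gtrsim\lambda(\Delta^{m-1}_{i,k})^{-2}/(1-2\alpha)$ — yields $|r^m_{i,k}-\mu_k|\le\frac18\Delta^{m-1}_{i,k}$. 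Step (iii) is the main obstacle: one must trim aggressively enough to neutralize every Byzantine agent that can masquerade as "median-looking,'' yet still retain the $\Theta(1/v_i)$ (respectively $\Theta(1/v_{\min})$) variance reduction, and show that the $\frac{1}{1-2\alpha}$ inflation of the sample allocation is precisely what pays for the trimming.

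Finally I would close the loop as in Theorem~\ref{the:DeMABAR}: feeding $|r^m_{i,k}-\mu_k|\le\frac18\Delta^{m-1}_{i,k}$ into the updates for $r^m_{i,*}$ and $\Delta^m_{i,k}$ gives $\frac14\max\{2^{-m},\Delta_k\}\le\Delta^m_{i,k}\le 4\max\{2^{-m},\Delta_k\}$ by induction; there are $M=O(\ln T)$ epochs since $T_M\ge T$; the expected pulls of a suboptimal arm $k$ by agent $i$ in epoch $m$ are $\tilde n^m_{i,k}=O(\lambda/((1-2\alpha)v_i(\Delta^{m-1}_{i,k})^2))$, which after the first $O(\ln(1/\Delta_k))$ epochs is $O(\lambda/((1-2\alpha)v_i\Delta_k^2))$ and contributes $O(\lambda/((1-2\alpha)v_i\Delta_k))$ per epoch, while in early epochs it is dominated by the exploration floor $\lambda 2^{2(m-1)}/((1-2\alpha)v_{\min})$; summing over arms and epochs, and adding the one silent communication round per epoch (so $O(V)$ messages per epoch, $O(V\ln(VT))$ total), yields $R_i(T)=O\big(\frac{\ln(VT)}{1-2\alpha}(\sum_{\Delta_k>0}\frac{\ln(VT)}{v_i\Delta_k}+\frac{K\ln(1/\Delta)}{v_{\min}\Delta})\big)$.
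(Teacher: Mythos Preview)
Your proposal is correct and follows essentially the same route as the paper's proof: the convex-combination/bracketing argument to show the filter neutralizes Byzantine neighbors, the aggregate concentration giving $|r^m_{i,k}-\mu_k|\le\tfrac18\Delta^{m-1}_{i,k}$, the inductive gap-estimate bounds $\Delta^m_{i,k}=\Theta(\max\{2^{-m},\Delta_k\})$, and the epoch-wise regret summation all mirror the Appendix's Byzantine section specialized to $w=1$. Your explicit induction on $m$ (intertwining the event that $\tilde n^m_{j,k}\ge n^m_{i,k}$ for every normal $j$ with the per-epoch estimation guarantee) is in fact slightly cleaner than the paper's presentation, which states these as separate lemmas with a forward reference.
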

\begin{remark}
    For the case of $\alpha \le 1/3$, \citeauthor{zhu2023byzantine}(\citeyear{zhu2023byzantine}) achieve regret of $\sum_{\Delta_k>0} \frac{\ln T}{\Delta_k}$. In contrast, our regret bound in Theorem~\ref{the:regret case3} has additional $v_i$ and $v_{\min}$ terms in the denominators, explicitly quantifying the benefit of collaboration.
\end{remark}
\begin{remark}
    \citeauthor{zhu2023byzantine}(\citeyear{zhu2023byzantine}) show that for sufficiently large $T$, the regret of DeCMA2B with Byzantine agents satisfies
    \[ 
        R_i(T) \ge \Omega\Bigg(\sum_{\Delta_k>0} \frac{\ln(T)}{(\,1-2\alpha\,)|\cN_1(i)|\,\Delta_k}\Bigg)\,. 
    \]
    There remains a gap between this lower bound and our upper bound. Closing this gap or determining if it is unavoidable is an interesting open question for future work.
\end{remark}

\section{Experiments}
\label{sec:exp}
\begin{figure*}[t]
    \centering
    \renewcommand{\arraystretch}{1.5}
    \includegraphics[width = \textwidth]{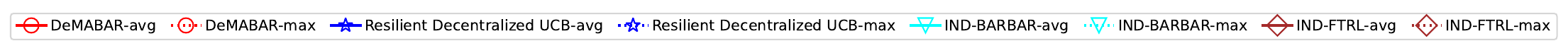}
    \begin{tabular}{cccc}
        \includegraphics[width = 0.22\textwidth]{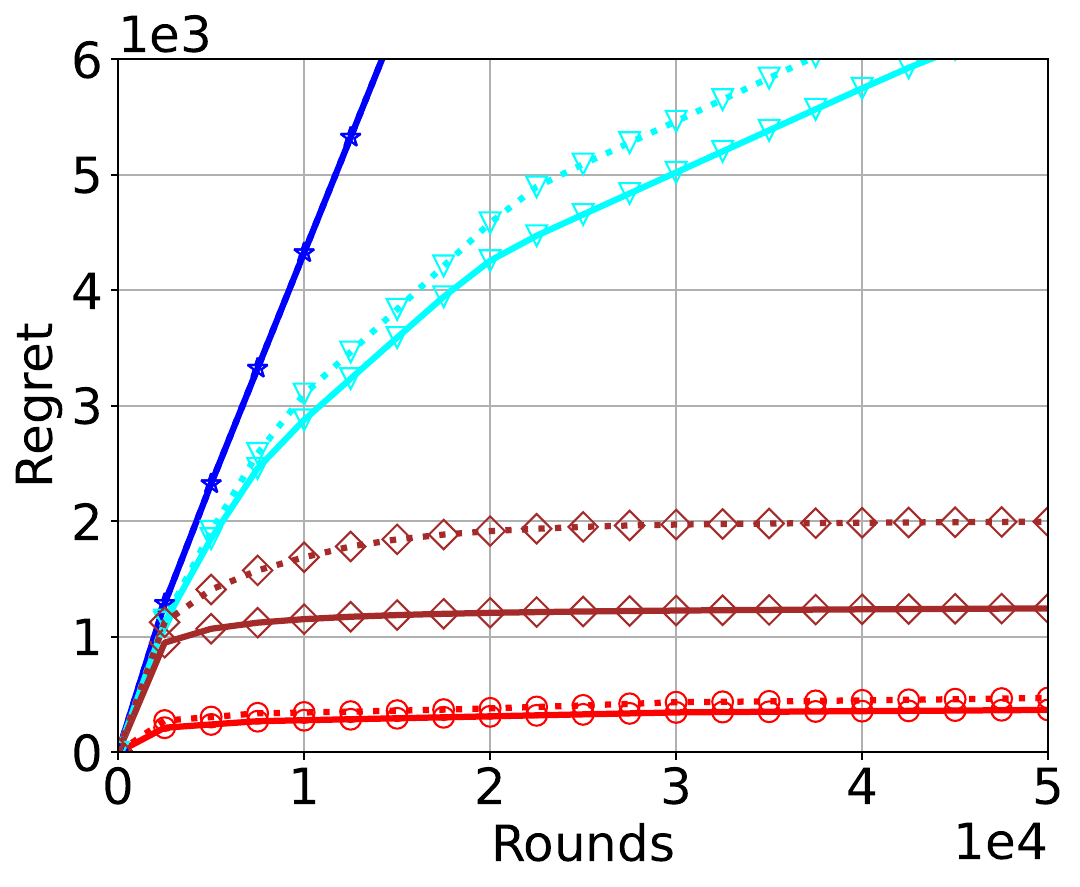} &  
        \includegraphics[width = 0.22\textwidth]{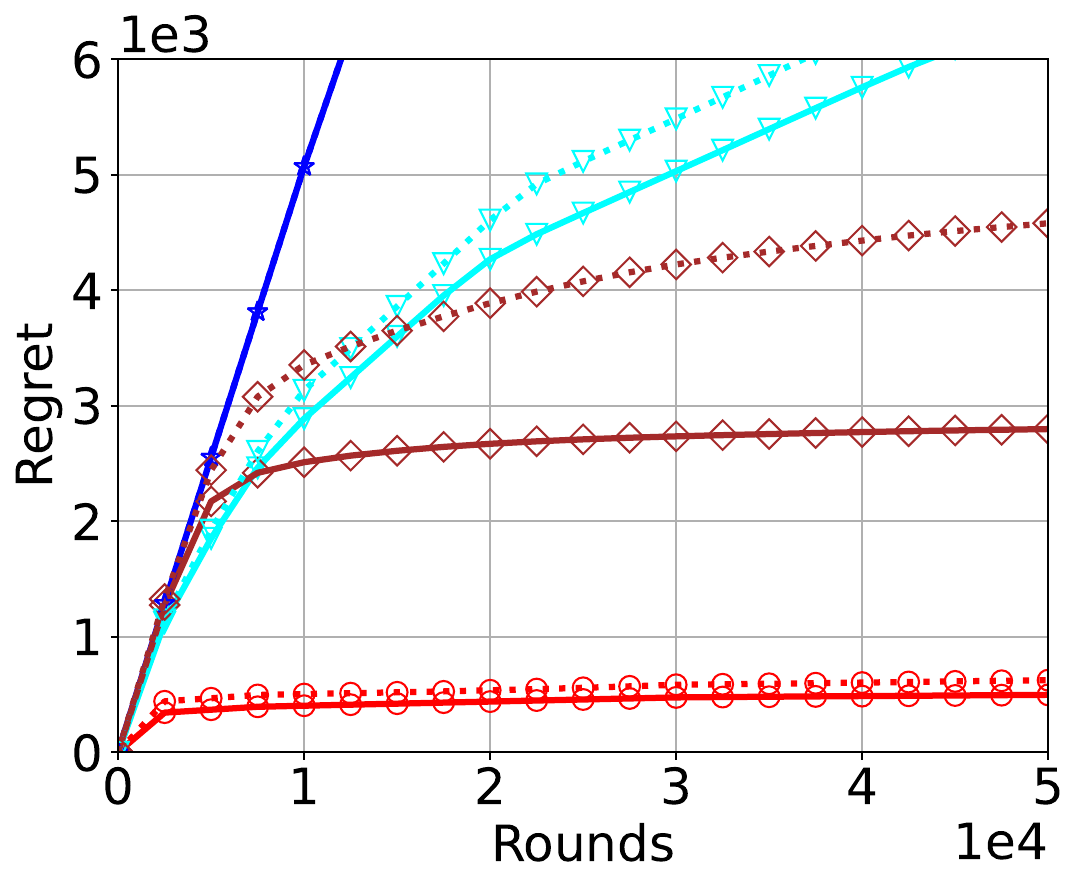} &
        \includegraphics[width = 0.22\textwidth]{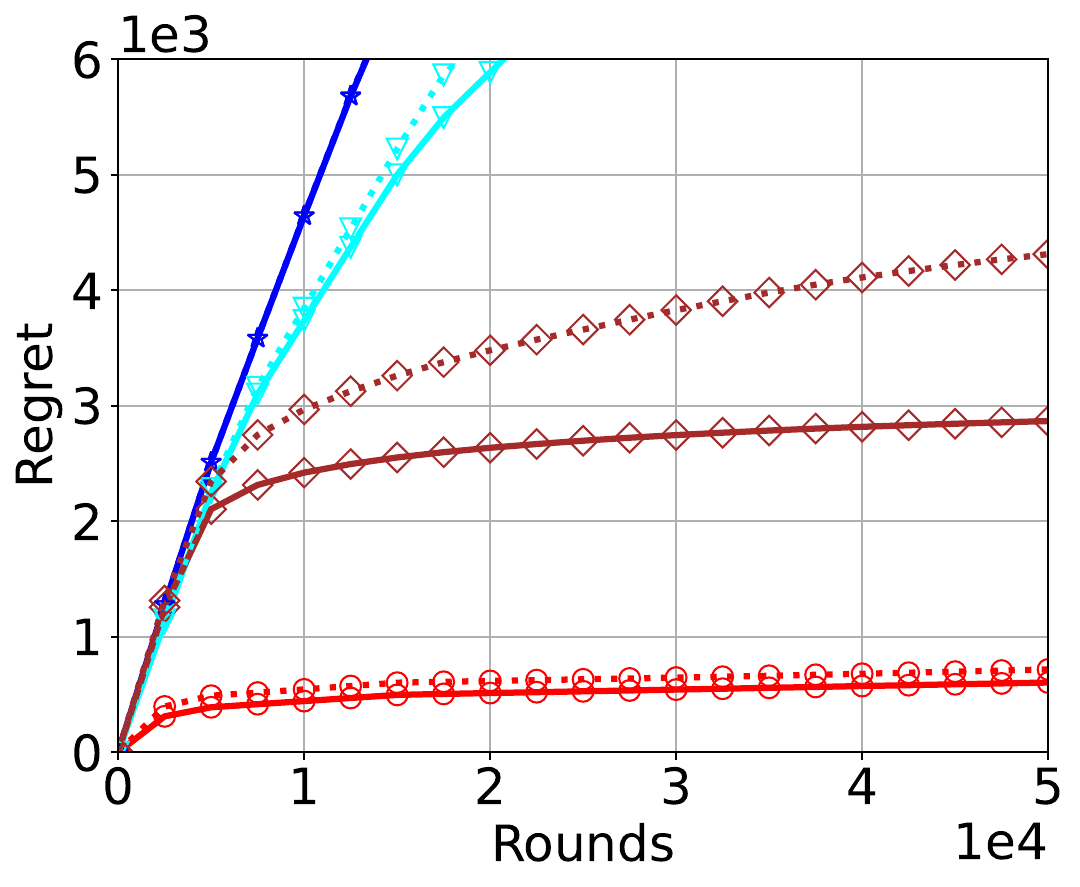} & 
        \includegraphics[width = 0.22\textwidth]{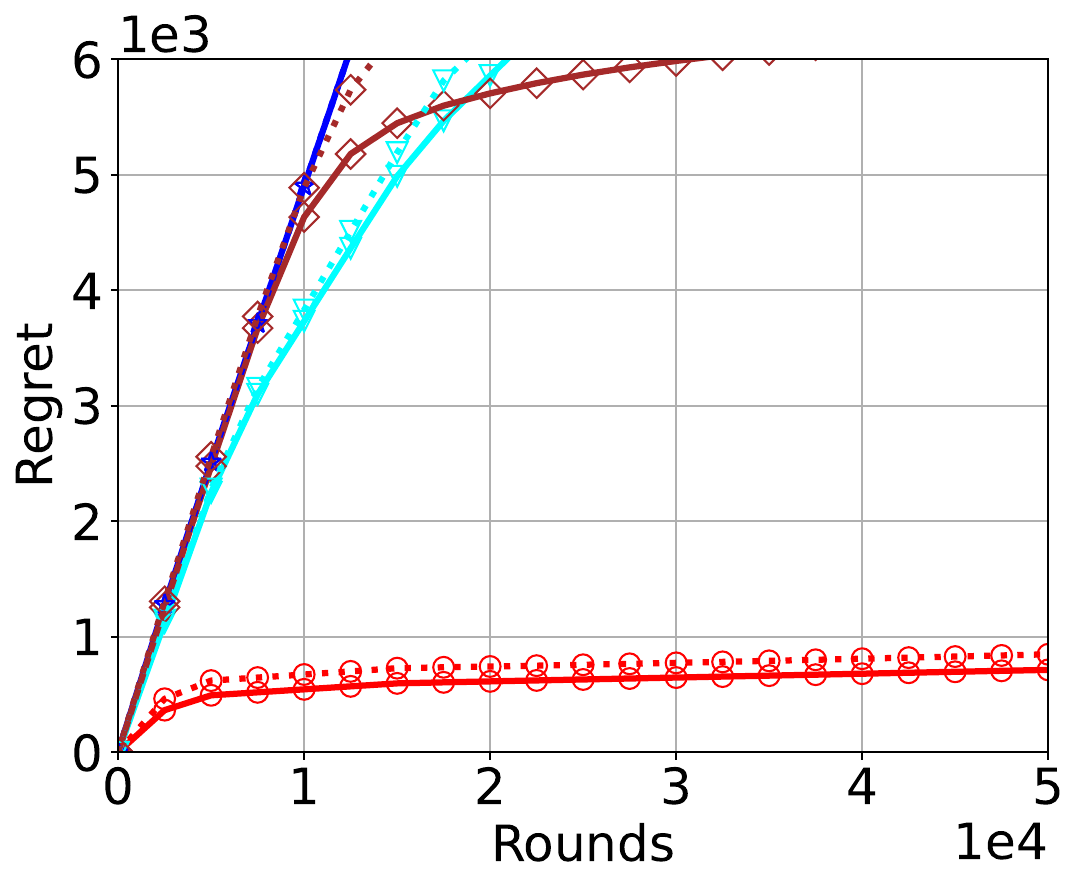}\\
        (a)  K = 10, C = 1500 &
        (b)  K = 10, C = 2000 &
        (c)  K = 20, C = 3000 &
        (d)  K = 20, C = 4000 \\
        \multicolumn{4}{c}{\textbf{(i) The adversary attacks all agents.}} \\
        \includegraphics[width = 0.22\textwidth]{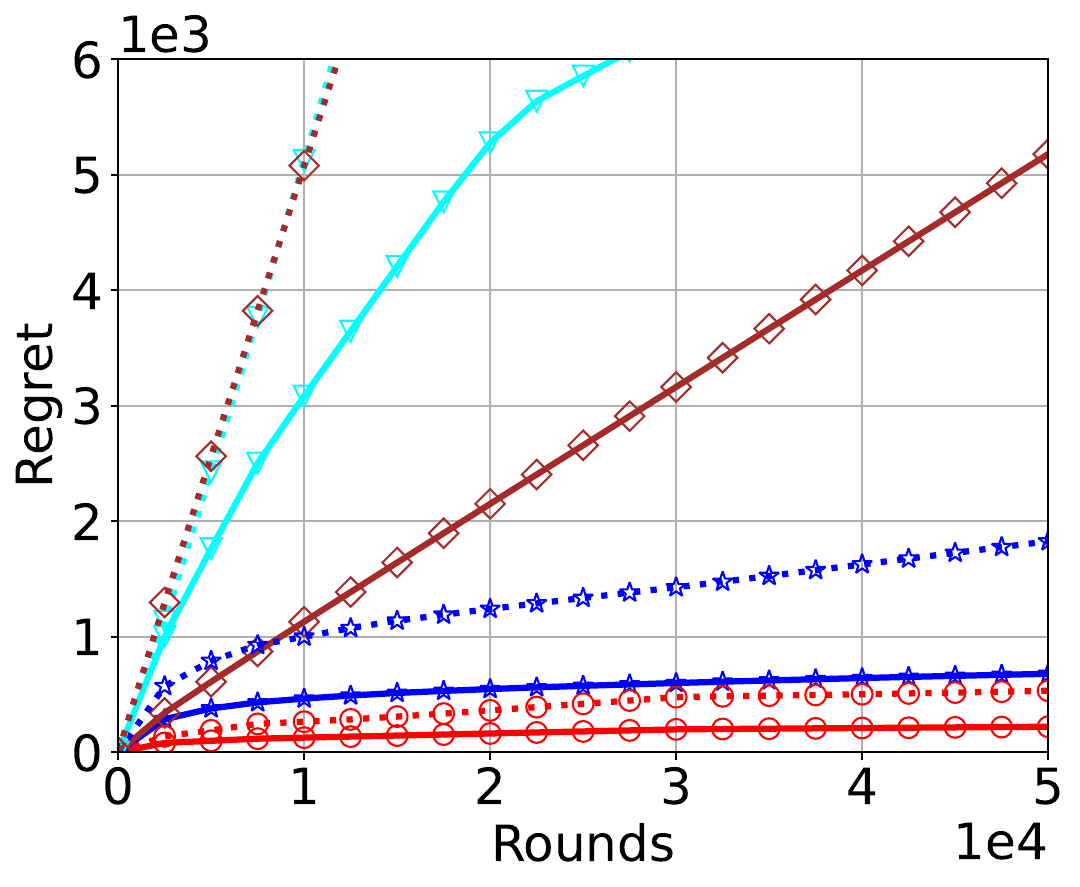} &  
        \includegraphics[width = 0.22\textwidth]{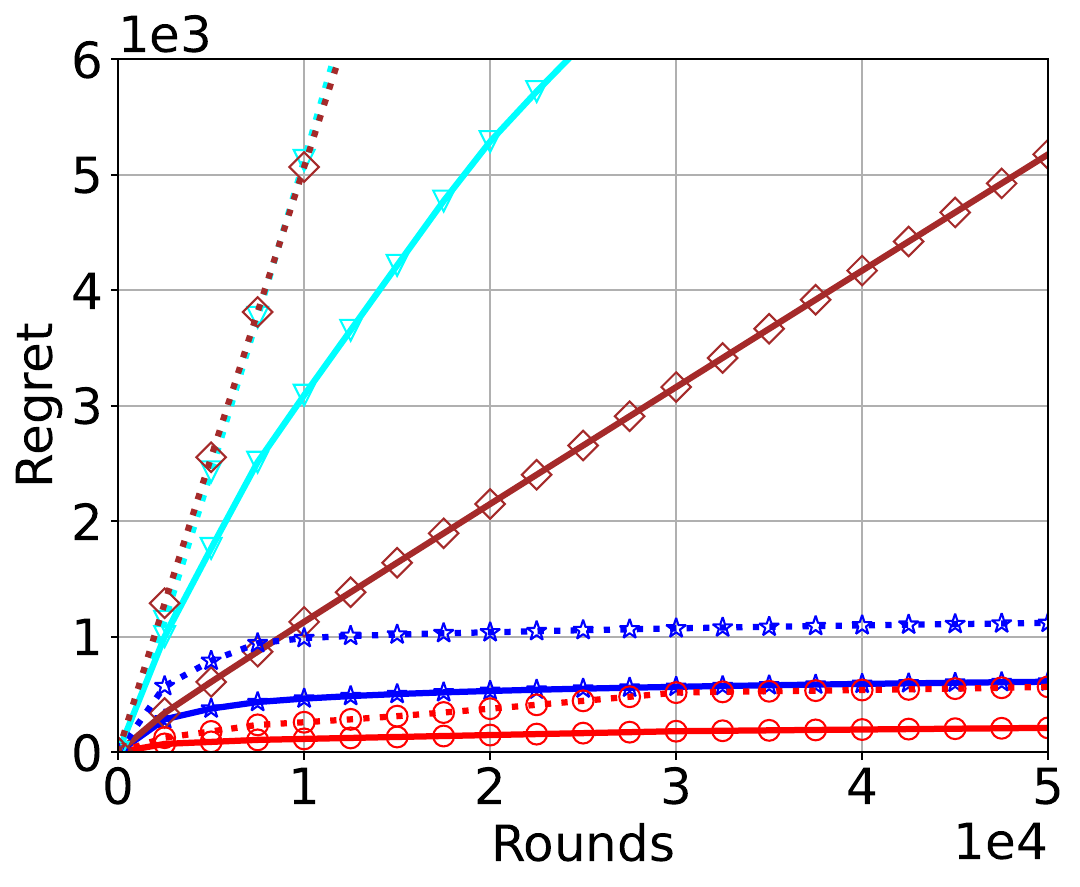} &
        \includegraphics[width = 0.22\textwidth]{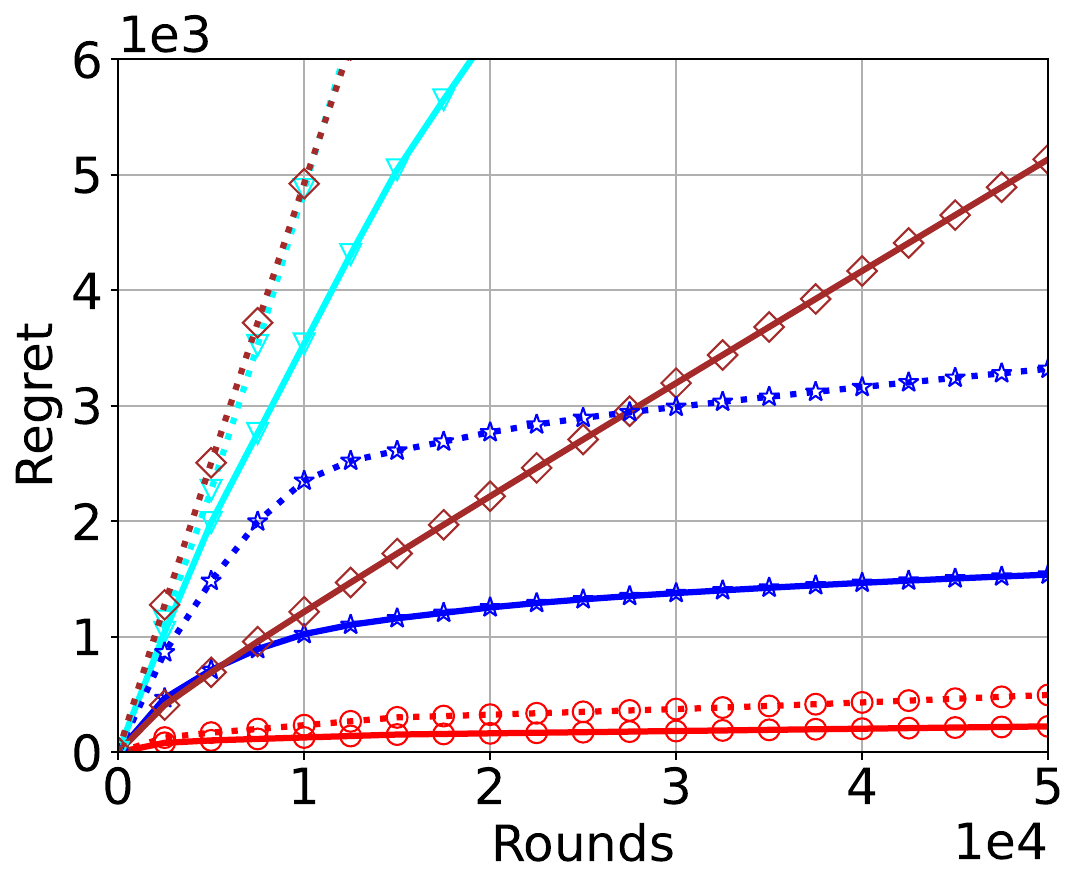} & 
        \includegraphics[width = 0.22\textwidth]{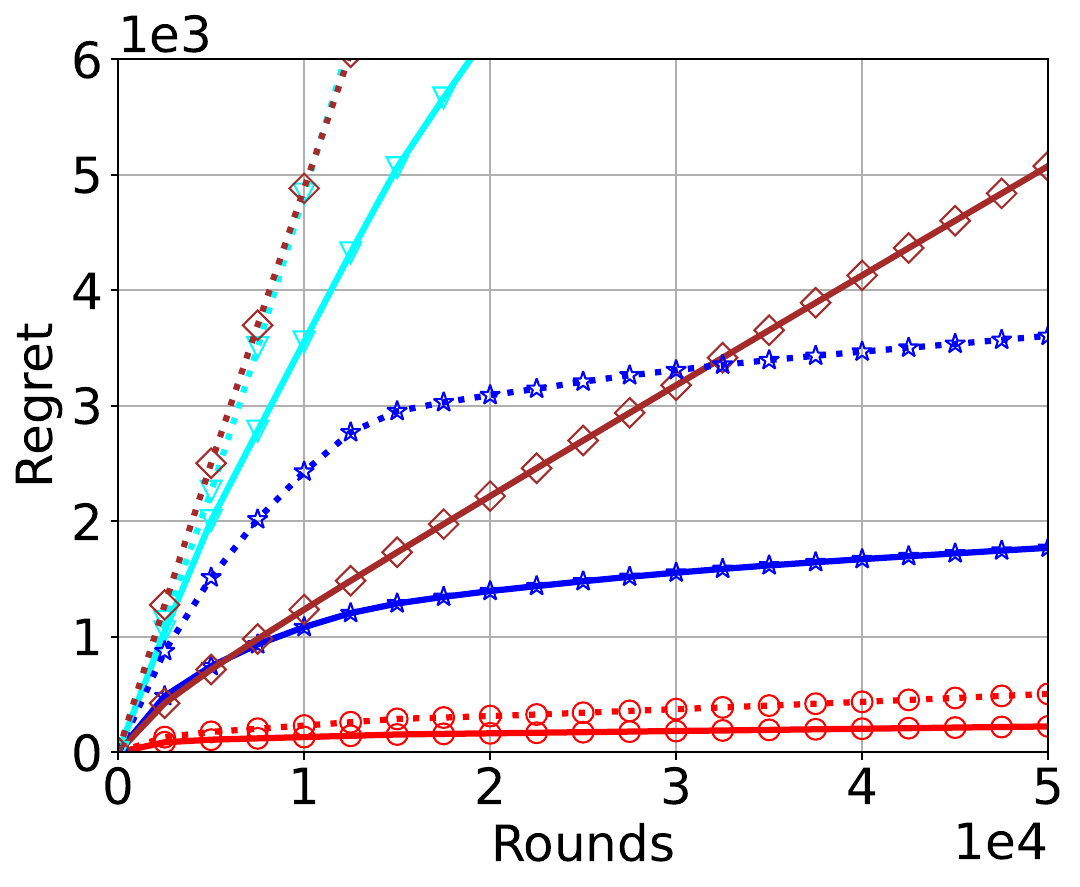}\\
        (a)  K = 10, C = 6000 &
        (b)  K = 10, C = 8000 &
        (c)  K = 20, C = 12000 &
        (d)  K = 20, C = 16000 \\
        \multicolumn{4}{c}{\textbf{(ii) The adversary attacks two agents (black nodes in Figure~\ref{fig:gra}).}} \\
    \end{tabular}
    \caption{DeMABAR vs. Resilient Decentralized UCB, IND-BARBAR,  and IND-FTRL in DeCMA2B under adversarial corruption.}
    \label{fig:cor-dec}
\end{figure*}

\begin{figure*}[!ht]
    \centering
    \begin{tabular}{cccc}
           \includegraphics[width = 0.22\textwidth]{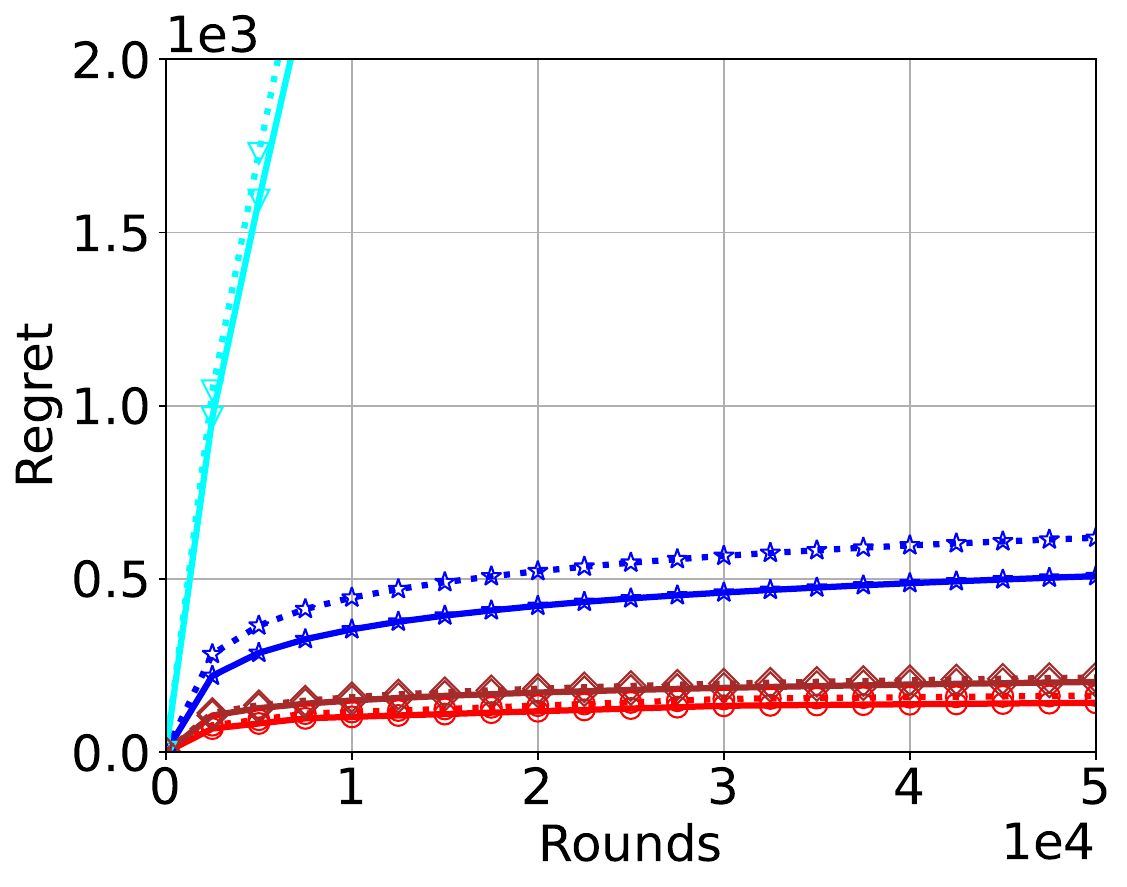} &  \includegraphics[width = 0.22\textwidth]{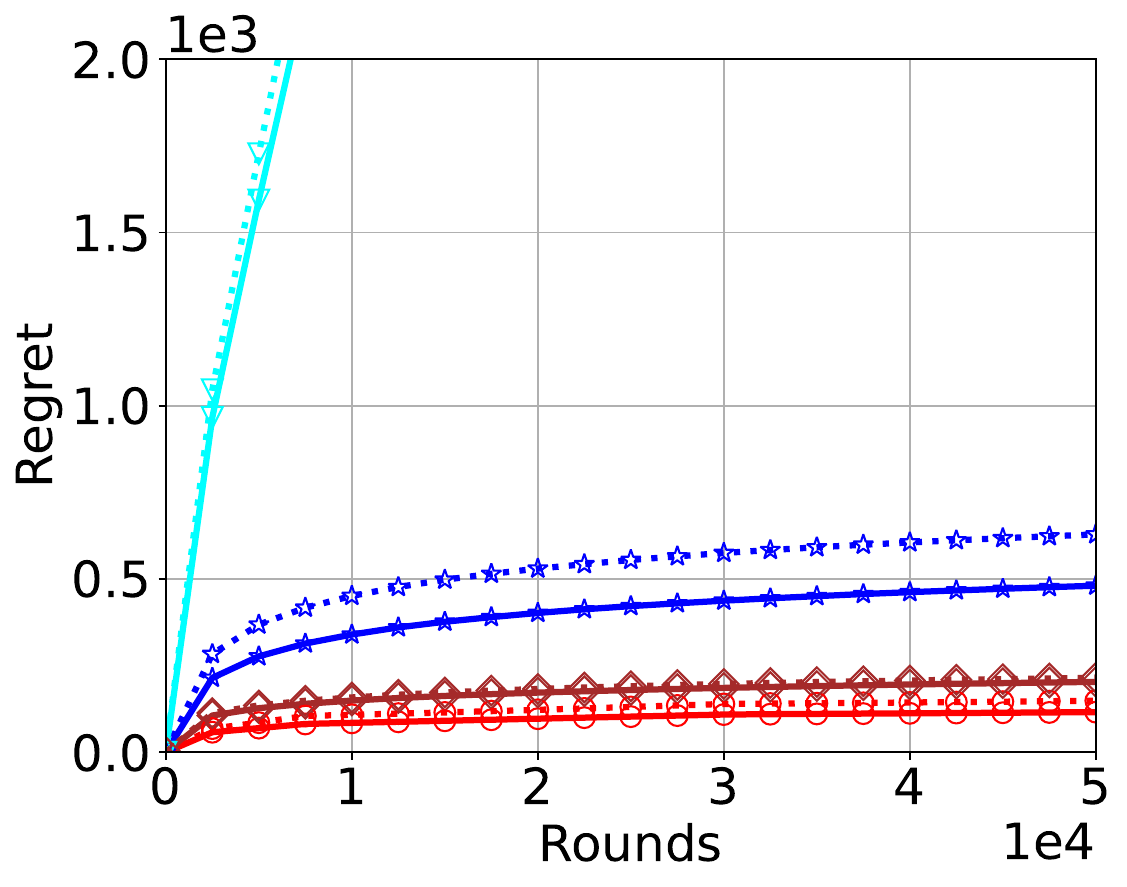} &
           \includegraphics[width = 0.22\textwidth]{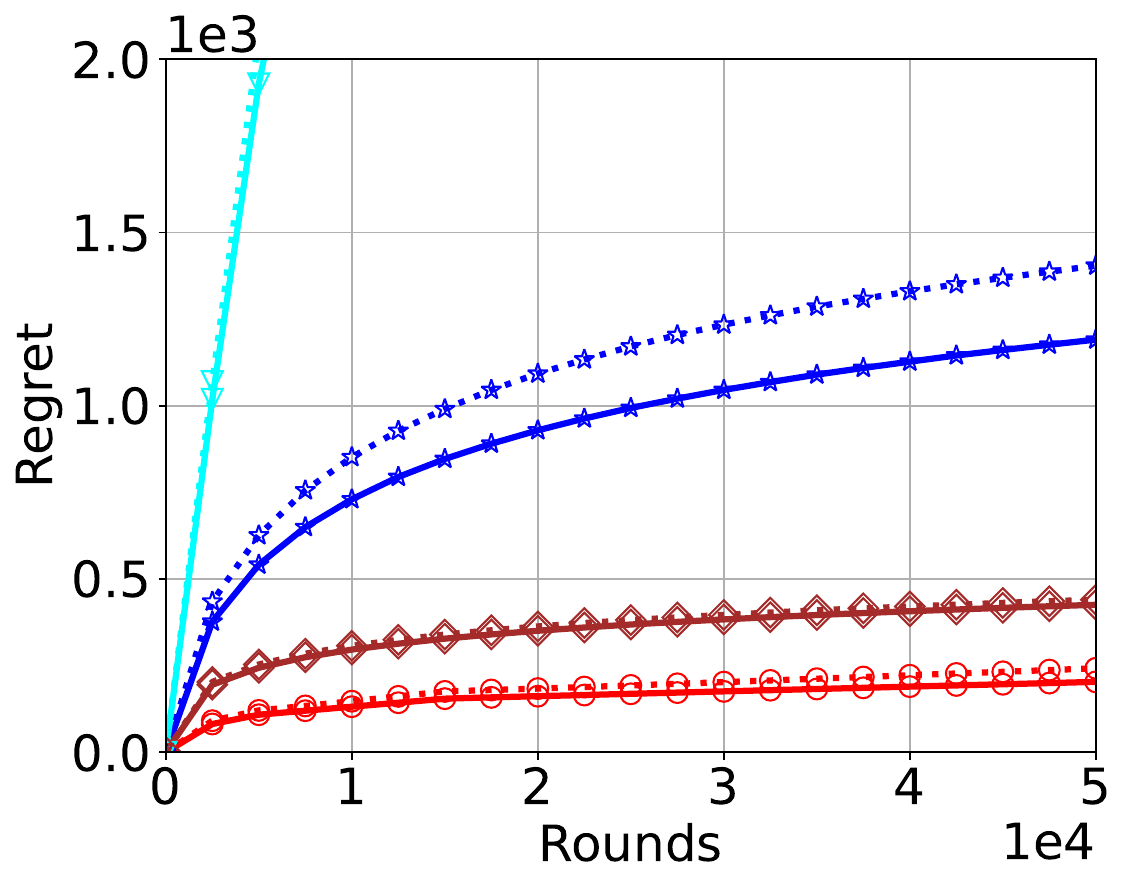} &  \includegraphics[width = 0.22\textwidth]{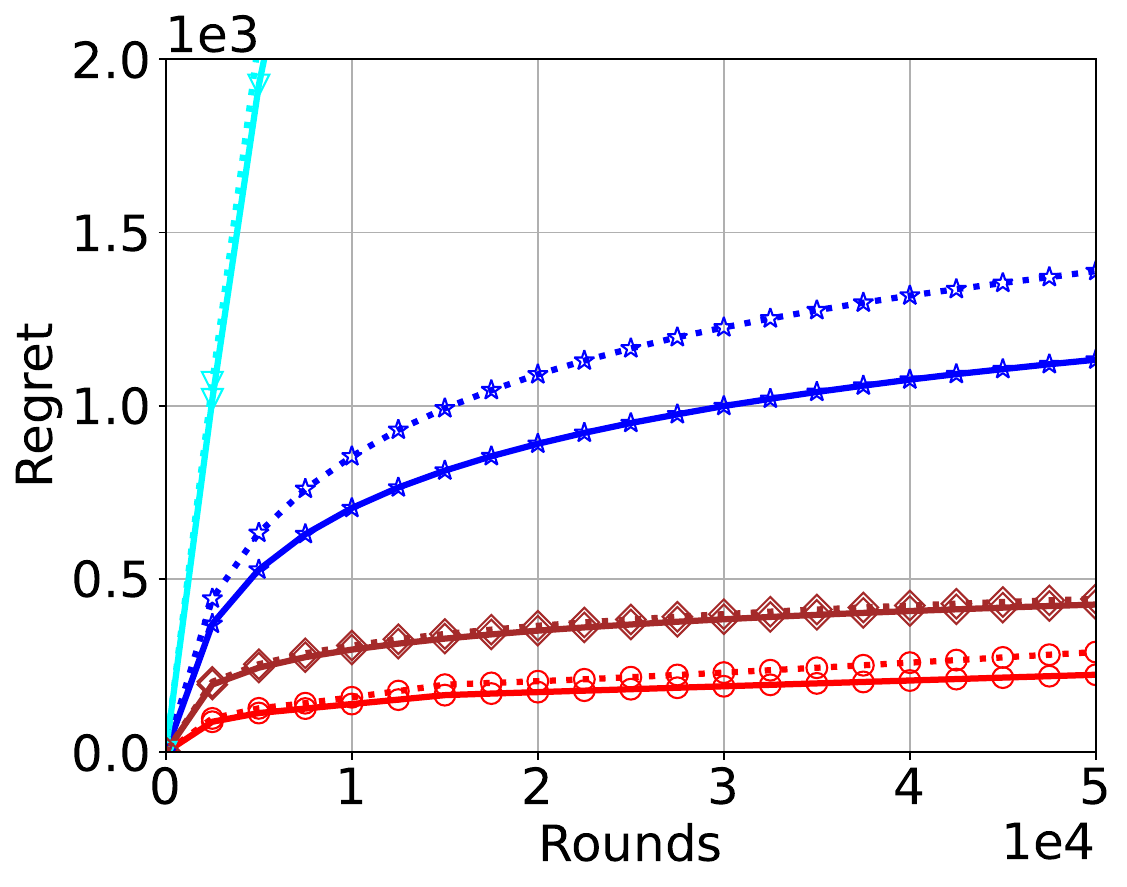} \\
        (a) K = 10, Adaptive Attack &
        (b) K = 10, Gaussian Attack &
        (c) K = 20, Adaptive Attack &
        (d) K = 20, Gaussian Attack
    \end{tabular}
    \caption{DeMABAR-F vs. Resilient Decentralized UCB, IND-BARBAR, and IND-FTRL in DeCMA2B with Byzantine agents. The Byzantine agents are the black nodes in Figure~\ref{fig:gra}.}
    \label{fig:byz}
\end{figure*}

In this section, we present numerical results to demonstrate the robustness and effectiveness of our algorithms. We consider the following baseline methods: IND-BARBAR~\cite{gupta2019better}, IND-FTRL~\cite{zimmert2021tsallis}, Resilient Decentralized UCB~\cite{zhu2023byzantine}, and DRAA~\cite{ghaffari2024multi}. Here, IND-BARBAR and IND-FTRL serve as non-cooperative baselines, wherein each agent runs the respective algorithm locally without inter-agent communication. We do not compare with \cite{liu2021cooperative} since the baseline DRAA is an improved version of their method. All experiments are implemented in Python 3.11 and conducted on a Windows laptop equipped with 16\,GB of memory and a single core of an Intel i7-13700H processor. For all experiments, we run $50$ independent trials and report the average total cumulative regret across all agents.

\subsection{Centralized CMA2B with Adversarial Corruption}
We first consider the distributed (centralized) setting and suppose the adversary can attack all agents. Each arm \(k \in \mathcal{K}\) has i.i.d.\ Gaussian rewards with mean \(\mu_k \sim U(0.1, 0.9)\) and standard deviation $0.01$. We set \(T=50{,}000\). Following \cite{lu2021stochastic}, arms with \(\mu_k \le 0.5\) are set as target arms. The adversary's goal is to make the agents pull the target arms as much as possible. Whenever an agent \(i\) pulls arm \(k\) with \(\mu_k>0.5\) and the realized reward is \(1\), the adversary corrupts this reward to \(\widetilde{r}_{i,k}(t)=0\) until the total corruption budget is exhausted.
We evaluate the performance under adversarial corruption in two scenarios:
\begin{itemize}
    \item The adversary corrupts all agents. The budgets are $C = 1500$ and $2000$ for $K=10$, and $C = 3000$ and $4000$ for $K=20$.
    \item The adversary targets $3$ out of $10$ agents (within $\beta = 0.3 < \alpha$), with a stronger attack using a higher budget per agent. The budgets are $C = 6000$ and $8000$ for $K=10$, and $C = 12000$ and $16000$ for $K=20$.
\end{itemize}

We present the experimental results in Figure~\ref{fig:cor-dis}, which show that our DeMABAR algorithm significantly outperforms all baseline methods, reflecting the advantage of collaboration and verifying our theoretical analysis. Notice that the Resilient Decentralized UCB method suffers nearly linear regret, which is consistent with previous findings~\cite{jun2018adversarial} that even small adversarial attacks can degrade UCB-family algorithms to linear regret.

\subsection{DeCMA2B with Adversarial Corruption}
We now switch to the decentralized scenario, using the network depicted in Figure~\ref{fig:gra}. The way to generate rewards and the adversary’s policy are the same as in the centralized environment. Given that DRAA and MA-BARBAT are not appropriate for the decentralized setting, we do not include them in this experiment. We present the numerical results in Figure~\ref{fig:cor-dec}, which show that our DeMABAR algorithm also outperforms all baseline methods. Notice that Resilient Decentralized UCB still suffers linear regret in this setting because the adversary can attack all agents. 
Meanwhile, since non-cooperative algorithms such as IND-FTRL and IND-BARBAR do not exploit inter-agent communication, the performance variations are minimal in centralized and decentralized settings.

\subsection{DeCMA2B with Byzantine Agents}
Finally, we consider the Byzantine decentralized setting. As shown in Figure~\ref{fig:gra}, two agents are Byzantine and each normal agent has at most one Byzantine neighbor. The way to generate rewards remains unchanged. Following~\cite{zhu2023byzantine}, we model two types of Byzantine attacks:
\begin{itemize}
    \item \textbf{Adaptive attack.} A Byzantine agent has full knowledge of the system and broadcasts misleading or opposite information. For example, if \(\mu_k=0.9\), it reports \(\mu_k=0.1\) to its neighbors. It also inflates the reported sample count \(n_{i,k}(t)\).
    \item \textbf{Gaussian attack.} Each Byzantine agent \(i\) picks a random bias \(\beta_{i,k}\in (0,1)\) for each arm~\(k\). For neighbor \(j\) at time \(t\), it draws \(c_{j,k}(t)\sim \cN\bigl(\beta_{i,k},0.001\bigr)\) and adds it to the relevant statistics (e.g., \(S_{j,k}^m/\tilde n_{j,k}^m\)) before transmitting.
\end{itemize}

Figure~\ref{fig:byz} illustrates that our DeMABAR algorithm outperforms all baselines under both attack models. Interestingly, the performance of IND-FTRL is comparable to that of DeMABAR. We believe this is because IND-FTRL is a non-cooperative algorithm, which means that Byzantine agents cannot impact the normal agents. Additionally, the network used in the experiment may not be large enough to demonstrate the advantages of collaboration. We anticipate that in larger networks, DeMABAR will significantly outperform any non-cooperative approaches by more effectively leveraging inter-agent collaboration and filtering out Byzantine agents.

\section{Conclusion}
\label{sec:con}
In this work, we present a novel robust algorithm for DeCMA2B, called DeMABAR, which facilitates effective collaboration among agents while remaining robust to both adversarial corruption and Byzantine attacks. The key idea of DeMABAR is to introduce a novel filtering mechanism to further diminish the influence of corruption. Notably, when the adversary can compromise only a small subset of agents, DeMABAR can be almost entirely unaffected by corruption. Our empirical evaluations align with these theoretical insights, showing that DeMABAR consistently outperforms baseline algorithms under adversarial corruption and in Byzantine environments.

\section{Acknowledgement}
Cheng Chen is supported by National
Natural Science Foundation of China (No. 62306116).

\bibliography{aaai2026}

\onecolumn
\appendix

\section{Auxiliary Lemmas}
\label{ape:auxiliarylemmas}
\begin{lemma}
\label{lem:epoch number} 
For Algorithm~\ref{algs:DeMABAR} with time horizon $T$, the number of epochs $M$ is at most $\ln(VT)$.
\end{lemma}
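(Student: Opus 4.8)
The plan is a short geometric–series estimate on the epoch lengths. By construction, epoch $m$ of Algorithm~\ref{algs:DeMABAR} occupies $N_m+w$ consecutive rounds, where $N_m=\lceil \lambda K\,2^{m-1}/((1-2\alpha)v_{\min}^w)\rceil\ge \lambda K\,2^{m-1}/((1-2\alpha)v_{\min}^w)$; in particular the epoch lengths grow geometrically with ratio $2$. If the run lasts for exactly $M$ epochs, then the first $M-1$ of them must fit strictly inside the horizon (otherwise no $M$-th epoch would be started), i.e. $T_{M-1}=\sum_{m=1}^{M-1}(N_m+w)<T$; since the $w$ communication rounds are nonnegative, I would simply drop them in the lower bound.

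Summing the lower bounds on $N_m$ gives
\[
T \;>\; \sum_{m=1}^{M-1} N_m \;\ge\; \frac{\lambda K}{(1-2\alpha)\,v_{\min}^w}\sum_{m=1}^{M-1} 2^{m-1} \;=\; \frac{\lambda K\,(2^{M-1}-1)}{(1-2\alpha)\,v_{\min}^w},
\]
so $2^{M-1}-1 < (1-2\alpha)\,v_{\min}^w T/(\lambda K)$. I would then bound the right-hand side crudely by $VT/\lambda$ using $v_{\min}^w\le V$, $K\ge 1$ and $1-2\alpha\le 1$, obtaining $2^{M-1}<1+VT/\lambda$. Finally I would substitute $\lambda = 2^9\ln(2VT)$: the large constant $2^9$ leaves enough slack both to convert the base-$2$ logarithm on the left into a natural logarithm and to absorb the $O(1)$ additive terms coming from the ceiling in $N_m$ and the dropped $w$-round communication phases (both of which only lengthen epochs and therefore only decrease $M$). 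Taking logarithms then yields $M\le \ln(VT)$ for the relevant range of $VT$.

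The argument is essentially bookkeeping, so I do not expect a real conceptual obstacle. The two points that need a little care are: (i) the direction of the horizon inequality — one should use that the first $M-1$ epochs fit strictly inside $[T]$ so that an $M$-th epoch is begun at all, rather than that all $M$ epochs fit; and (ii) the routine check that the numerical constant $2^9$ hidden in $\lambda$ is large enough to swallow the change of logarithm base together with the additive rounding/communication terms, which is what lets the geometric estimate $M=O(\log(VT))$ be sharpened to the stated clean bound $M\le\ln(VT)$.
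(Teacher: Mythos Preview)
Your geometric-series strategy is exactly the paper's, but the final step fails. From $2^{M-1}<1+VT/\lambda$ one only gets $M-1<\log_2(1+VT/\lambda)$, and a multiplicative constant in the argument of a logarithm contributes an \emph{additive} shift, never a change of base. Substituting $\lambda=2^9\ln(2VT)$ yields $M<\log_2(VT)-\log_2\lambda+O(1)$, which for large $VT$ is about $\ln(VT)/\ln 2\approx 1.44\,\ln(VT)$ and therefore does \emph{not} imply $M\le\ln(VT)$. The factor $2^9$ only buys you nine additive units inside $\log_2$; it cannot convert a base-$2$ logarithm into a natural one.

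What makes the lemma true is that $N_m$ actually grows like $4^{m}$, not $2^{m}$: the $2^{m-1}$ in line~5 of Algorithm~\ref{algs:DeMABAR} is a typo, and everywhere in the appendix (this lemma's own proof, Lemma~\ref{lem:k_i^m}, and all the regret calculations) the authors use $N_m=\lceil K\lambda\,2^{2(m-1)}/((1-2\alpha)v_{\min}^w)\rceil$. Indeed this is forced, since $\tilde n^m_{i,k}$ in line~6 can reach $\lambda\,2^{2(m-1)}/((1-2\alpha)v_i^w)$ and Lemma~\ref{lem:k_i^m} requires $N_m\ge\sum_k \tilde n^m_{i,k}$. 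With the correct exponent the paper simply uses $(1-2\alpha)v_{\min}^w\le V$ and drops $\lambda\ge 1$ to get $N_m\ge 2^{2(m-1)}/V$, whence $M\le 1+\log_4(VT)$; because $\ln 4>1$ one has $\log_4\le\ln$, so the claimed bound follows (for $VT$ beyond a small absolute constant) without any base conversion.
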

\begin{proof}
Since $(1-2\alpha)v_{\min}^w \leq V$ and $\lambda \geq 1$, the epoch $m$'s length satisfies
\[N_m = \left\lceil \frac{K\lambda 2^{2(m-1)}}{(1-2\alpha)v_{\min}^w} \right\rceil \geq \left\lceil \frac{K\lambda 2^{2(m-1)}}{V} \right\rceil \geq \frac{K2^{2(m-1)}}{V}.\]
From the lower bound of $N_m$ we can complete the proof.
\end{proof}

\begin{lemma}
\label{lem:k_i^m} 
For any agent $i \in [V]$, the length $N_m$ of epoch $m$ satisfies $N_m \geq \sum_{k\in[K]}n_{i,k}^m$, and $\tilde n_{i,k}^m \geq n_{i,k}^m$ for all $k \in [K]$.
\end{lemma}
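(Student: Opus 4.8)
The plan is to unpack the definitions of $n_{i,k}^m$, $\tilde n_{i,k}^m$, and $N_m$, and verify the two inequalities by direct comparison of the capped quantities. Recall from the algorithm description that $n_{i,k}^m = \frac{\lambda\,(\Delta_{i,k}^{m-1})^{-2}}{(1-2\alpha)\,v_i^w}$ (before any capping — this is the "nominal share" quantity), possibly with the same $2^{2(m-1)}$-type cap applied, while $\tilde n_{i,k}^m = \min\bigl\{\frac{16\lambda(\Delta_{i,k}^{m-1})^{-2}}{(1-2\alpha)v_i^w},\, \frac{\lambda 2^{2(m-1)}}{(1-2\alpha)v_i^w}\bigr\}$. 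The key structural observation is that $\tilde n_{i,k}^m$ uses a factor $16$ in the unclipped term whereas $n_{i,k}^m$ does not, and both are clipped at the same threshold $\frac{\lambda 2^{2(m-1)}}{(1-2\alpha)v_i^w}$; since $\min\{16a, b\} \ge \min\{a,b\}$ for nonnegative $a,b$, the inequality $\tilde n_{i,k}^m \ge n_{i,k}^m$ follows termwise for every arm $k$. (If instead $n_{i,k}^m$ is defined with a different cap, the same monotonicity-of-$\min$ argument applies as long as the uncapped numerator of $\tilde n$ dominates that of $n$ and the caps are comparable; this is the one place I would double-check against the precise definition.)

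For the first inequality $N_m \ge \sum_{k\in[K]} n_{i,k}^m$, I would bound each summand by its cap: $n_{i,k}^m \le \frac{\lambda 2^{2(m-1)}}{(1-2\alpha)v_i^w} \le \frac{\lambda 2^{2(m-1)}}{(1-2\alpha)v_{\min}^w}$, using $v_i^w \ge v_{\min}^w$. Wait — this gives $\sum_k n_{i,k}^m \le \frac{K\lambda 2^{2(m-1)}}{(1-2\alpha)v_{\min}^w}$, which is almost $N_m = \lceil \frac{\lambda K 2^{m-1}}{(1-2\alpha)v_{\min}^w}\rceil$ but with $2^{2(m-1)}$ versus $2^{m-1}$; so the crude cap is too weak and one must use the other branch of the $\min$. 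The correct approach: for arms with $\Delta_{i,k}^{m-1}$ not too small the term $\frac{\lambda(\Delta_{i,k}^{m-1})^{-2}}{(1-2\alpha)v_i^w}$ controls, and since $\Delta_{i,k}^{m-1} \ge 2^{-(m-1)}$ always (this invariant is maintained in line 23 of Algorithm~\ref{algs:DeMABAR}), we get $(\Delta_{i,k}^{m-1})^{-2} \le 2^{2(m-1)}$, so in fact $n_{i,k}^m = \frac{\lambda(\Delta_{i,k}^{m-1})^{-2}}{(1-2\alpha)v_i^w}$ with the cap never binding — but this still only yields $2^{2(m-1)}$, not $2^{m-1}$. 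The resolution I expect is that $n_{i,k}^m$ is actually defined \emph{with} a cap at $\frac{\lambda 2^{m-1}}{(1-2\alpha)v_i^w}$ (matching the per-arm budget that makes $N_m$ work out), i.e. the "$2^{2m}$ pulls" cap in the prose refers to $\tilde n$, while $n$ carries the tighter cap; then $\sum_k n_{i,k}^m \le K \cdot \frac{\lambda 2^{m-1}}{(1-2\alpha)v_{\min}^w} \le N_m$ immediately.

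The main obstacle, therefore, is purely bookkeeping: pinning down the exact definition of $n_{i,k}^m$ (which the excerpt states only informally as "roughly a $\frac{1}{(1-2\alpha)|\cN_w(i)|}$ fraction... capped by $2^{2m}$") and confirming which cap it carries, so that the per-arm sum telescopes against the ceiling in $N_m$. Once that definition is fixed, both claims reduce to (i) the elementary inequality $\min\{16a,b\}\ge\min\{a,b\}$ for the second part, and (ii) summing $K$ copies of the cap and absorbing the ceiling via $v_i^w \ge v_{\min}^w$ for the first part. No probabilistic argument is needed; everything is deterministic and holds pointwise in $m$ and $k$.
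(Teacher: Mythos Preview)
Your core idea---using the invariant $\Delta_{i,k}^{m-1}\ge 2^{-(m-1)}$ maintained by line~23---is exactly the ingredient the paper uses. The reason you got stuck is that you are working from two incorrect definitions, one of which is actually a typo in Algorithm~\ref{algs:DeMABAR}.

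First, line~5 of the algorithm should read $N_m=\bigl\lceil \frac{K\lambda\,2^{2(m-1)}}{(1-2\alpha)v_{\min}^w}\bigr\rceil$, with exponent $2(m-1)$ rather than $m-1$; this is the version used consistently in the proofs (cf.\ Lemma~\ref{lem:epoch number}). Your observed mismatch ``$2^{2(m-1)}$ versus $2^{m-1}$'' is therefore an artifact, and your speculation that $n_{i,k}^m$ must carry an extra $2^{m-1}$ cap is unnecessary. Second, the paper defines $n_{i,k}^m=\frac{\lambda(\Delta_{i,k}^{m-1})^{-2}}{(1-2\alpha)\,|\cN_w(i)|}$ with $|\cN_w(i)|$ in the denominator (not $v_i^w$), and with \emph{no} cap at all.

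With these two corrections the proof is immediate and matches yours in spirit. For the first inequality: the invariant gives $(\Delta_{i,k}^{m-1})^{-2}\le 2^{2(m-1)}$, and since $|\cN_w(i)|\ge v_{\min}^w$,
\[
\sum_{k\in[K]} n_{i,k}^m \;\le\; \sum_{k\in[K]}\frac{\lambda\,2^{2(m-1)}}{(1-2\alpha)v_{\min}^w}\;\le\; N_m.
\]
For the second inequality: because $v_i^w=\min_{j\in\cN_w(i)}|\cN_w(j)|\le|\cN_w(i)|$, we have $n_{i,k}^m\le \frac{\lambda(\Delta_{i,k}^{m-1})^{-2}}{(1-2\alpha)v_i^w}$, and this quantity is dominated by \emph{both} branches of the $\min$ defining $\tilde n_{i,k}^m$ (the $16\lambda$ branch trivially; the $\lambda 2^{2(m-1)}$ branch again by the invariant). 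The factor $16$ is not needed here; it is there to give slack elsewhere in the analysis. Your $\min\{16a,b\}\ge\min\{a,b\}$ argument would work if $n_{i,k}^m$ were itself a capped $\min$, but in the paper it is not, and the comparison runs through the denominator inequality $v_i^w\le|\cN_w(i)|$ instead.
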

\begin{proof}
    Since $\Delta_{i,k}^m = \max_{k \in [K]}\{2^{-m}, r_{i,*}^m - r_{i,k}^m\} \geq 2^{-m}$ and $v_{\min}^w = \min_{j \in [V]} |\cN_w(j)|$, we can get
    \[\sum_{k\in[K]} n_{i,k}^m = \sum_{k\in[K]} \frac{\lambda(\Delta_{i,k}^{m-1})^{-2}}{(1-2\alpha)|\cN_w(i)|} \leq \sum_{k\in[K]} \frac{\lambda2^{2(m-1)}}{(1-2\alpha)v_{\min}^w} \leq  \left\lceil \frac{K\lambda 2^{2(m-1)}}{(1-2\alpha)v_{\min}^w} \right\rceil = N_m.\]
    Last, because $v_{i}^w = \min\limits_{j \in \cN_w(i)}|\cN_w(j)| \leq |\cN_w(i)|$, we obtain the following inequality for all $k \in [K]$:
    \[n_{i,k}^m = \frac{\lambda(\Delta_{i,k}^{m-1})^{-2}}{(1-2\alpha)|\cN_w(i)|} \leq \frac{\lambda(\Delta_{i,k}^{m-1})^{-2}}{(1-2\alpha)v_i^w} \leq \tilde n_{i,k}^m.\]
    The proof is complete.
\end{proof}

\section{Stochastic Bandits with Adversarial Corruptions}

\subsection{The case of $\beta \leq \alpha$}
\label{ape:case1}

First, we define an event $\cL'$ as follows:
\begin{equation*}
    \cL' \triangleq \left\{ \forall\ i,\ k,\ m:\quad n_{i,k}^m \leq \tilde n_{j,k}^m \quad \textit{for all} \; j \in \cN_w(i)\right\}.
\end{equation*}
This means that after removing from $\cA_{i,k}^m$ all agents $j$ that satisfy $n_{i,k}^m > \tilde n_{j,k}^m$, the set $\cA_{i,k}^m$ equals $\cN_w(i)$. When the event $\cL'$ occurs, the event $\cL_{i,k}^m$ will never happen for all $i,k,m$.
\begin{lemma}
\label{lem:event l probability}
     The event $\cL'$ holds with probability at least $1 - \frac{K\ln(VT)}{VT^2}$.
\end{lemma}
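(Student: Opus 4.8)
The plan is to show that the event $\cL'$ fails only if some agent's empirical gap estimate $\Delta_{i,k}^{m-1}$ is much smaller than the true gap, and then to bound the probability of such an underestimate by a concentration argument. First I would unpack the definitions: the quantity $n_{i,k}^m = \lambda(\Delta_{i,k}^{m-1})^{-2}/((1-2\alpha)|\cN_w(i)|)$ depends on agent $i$'s estimate, while $\tilde n_{j,k}^m = \min\{16\lambda(\Delta_{j,k}^{m-1})^{-2}, \lambda 2^{2(m-1)}\}/((1-2\alpha)v_j^w)$ depends on agent $j$'s estimate. Using $v_j^w \le |\cN_w(j)|$ and, crucially, the fact that both $i$ and $j$ lie in each other's $w$-neighborhoods so that their neighborhoods overlap in the relevant $\min$ defining $v_i^w$, I would reduce the failure of $n_{i,k}^m \le \tilde n_{j,k}^m$ to the event that $\Delta_{i,k}^{m-1}$ substantially underestimates $\Delta_{j,k}^{m-1}$ (roughly, $\Delta_{i,k}^{m-1} < \frac14 \Delta_{j,k}^{m-1}$ or that the $2^{2(m-1)}$ cap is the binding term while $\Delta_{i,k}^{m-1}$ is tiny), and both of these force $\Delta_{i,k}^{m-1}$ to be a gross underestimate of the true gap $\Delta_k$.

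Next I would invoke the concentration machinery that presumably appears elsewhere in the appendix: conditioned on a ``good event'' under which every trimmed-average estimate $r_{i,k}^m$ is within $\frac18 \Delta_{i,k}^{m-1}$ (or a comparable fraction) of the true mean $\mu_k$ — which holds because each arm $k$ gets on the order of $(\Delta_{i,k}^{m-1})^{-2}$ pulls aggregated over the neighborhood, and the filtering in Algorithm~\ref{alg:filter} removes corrupted contributions when $\beta \le \alpha$ — the update rule $\Delta_{i,k}^m = \max\{2^{-m}, r_{i,*}^m - r_{i,k}^m\}$ guarantees that $\Delta_{i,k}^m$ never drops far below $\Delta_k$. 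In other words the recursively maintained estimates stay within a constant factor of the true gaps on the good event, so the ``bad'' configuration identified in the previous paragraph can only occur off the good event. I would then take a union bound over all $i \in [V]$, all $k \in [K]$, and all $m \le M$; by Lemma~\ref{lem:epoch number} we have $M \le \ln(VT)$, and each individual concentration failure has probability at most something like $\frac{1}{V^2 T^2}$ (this is exactly what the choice $\lambda = 2^9 \ln(2VT)$ is engineered to give, via a Chernoff/Hoeffding bound on $\Theta(\lambda)$ samples). Multiplying, the total failure probability is at most $V \cdot K \cdot \ln(VT) \cdot \frac{1}{V^2 T^2} = \frac{K \ln(VT)}{V T^2}$, which is the claimed bound.

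The main obstacle I anticipate is making the reduction in the first paragraph airtight: one must carefully track which agent's estimate controls $n_{i,k}^m$ versus $\tilde n_{j,k}^m$, handle the $\min$ with the cap $\lambda 2^{2(m-1)}$ separately from the $16\lambda(\Delta^{m-1})^{-2}$ branch, and use the geometry of $w$-neighborhoods (in particular that $v_i^w$ is a minimum of $|\cN_w(\cdot)|$ over the neighborhood, so it is dominated by the relevant $|\cN_w(i)|$) to convert a statement about $n_{i,k}^m$ versus $\tilde n_{j,k}^m$ into a clean statement purely about the gap estimates $\Delta_{i,k}^{m-1}$ and $\Delta_{j,k}^{m-1}$. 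Once that combinatorial/algebraic bookkeeping is done, the probabilistic part is a routine union bound over the already-established per-epoch concentration events, so I would keep that part brief and lean on the earlier lemmas.
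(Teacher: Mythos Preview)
Your proposal is correct and follows essentially the same route as the paper: the paper shows (Lemmas~\ref{lem:suboptimality-gap bound case1} and~\ref{lem:estimated gap ratio}) that on the concentration event $\cE$---which holds with probability at least $1-\frac{K\ln(VT)}{VT^2}$ by Lemma~\ref{lem:event e probability case1}---the estimates satisfy $\Delta_{j,k}^{m-1}/\Delta_{i,k}^{m-1}\in[\tfrac14,4]$, and then uses exactly the algebra you sketch (the factor $16$ in $\tilde n_{j,k}^m$ absorbing the squared ratio, and $v_j^w\le|\cN_w(i)|$ for $j\in\cN_w(i)$) to conclude $\tilde n_{j,k}^m\ge n_{i,k}^m$, i.e.\ $\cE\Rightarrow\cL'$. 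One point worth tightening in both your write-up and the paper's: the concentration lemma (Lemma~\ref{lem:experiment reward bound case1}) is stated conditionally on $\cL'$, so to avoid circularity the argument should really be phrased as an induction on $m$ (at $m=1$ all $\Delta_{i,k}^0=1$ so $\cL'$ holds trivially, then concentration at epoch $m$ yields the gap-ratio bound needed for $\cL'$ at epoch $m{+}1$).
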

The proof will be discussed later.

\begin{lemma}
\label{lem:experiment reward bound case1}
    If the event $\cL'$ occurs, for any fixed $i, k, m$, Algorithm \ref{algs:DeMABAR} satisfies
    \[\Pr\left[\,|r_{i,k}^m - \mu_k| \geq \sqrt{\frac{8\ln(2VT)}{(1-2\alpha)|\cN_w(i)| n_{i,k}^m}}\,\right] \leq \frac{1}{V^2T^2}.\]
\end{lemma}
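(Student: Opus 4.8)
The plan is to route everything through the event $\cL'$, which pins down the filter's behaviour, then sandwich the trimmed estimate $r^m_{i,k}$ between averages over uncorrupted agents, and finally concentrate the pooled reward samples that back those averages. First, on $\cL'$ we have $\tilde n^m_{j,k}\ge n^m_{i,k}$ for every $j\in\cN_w(i)$, so line~3 of Algorithm~\ref{alg:filter} removes nobody, the reset event $\cL^m_{i,k}$ never triggers, $n^m_{i,k}$ keeps its defined value, and $\cA^m_{i,k}=\cB^m_{i,k}=\cN_w(i)$ just before trimming. Writing $n:=|\cN_w(i)|$, this fixes $f=\tfrac12\lfloor 2\alpha n\rfloor$, so $n-2f\ge(1-2\alpha)n$; and since $\lfloor 2\alpha n\rfloor\ge 2\lfloor\alpha n\rfloor$ we get $f\ge\lfloor\alpha n\rfloor$, which by $\beta\le\alpha$ is at least the number of adversarially corrupted agents inside $\cN_w(i)$.

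\emph{Sandwiching.} Let $S\subseteq\cN_w(i)$ be the uncorrupted agents, $|S|\ge n-f$, and set $\hat\mu_j:=S^m_{j,k}/\tilde n^m_{j,k}$. Since a corruption-free agent $j$ pulls arm $k$ with probability $\tilde n^m_{j,k}/N_m$ in each of the $N_m$ rounds and observes true rewards, $\BE[\hat\mu_j]=\mu_k$ and $S^m_{j,k}$ is a sum of $N_m$ independent $[0,1]$ variables with total mean $\tilde n^m_{j,k}\mu_k$. Any prefix (resp.\ suffix) of the sorted list $\{S^m_{j,k}/\tilde n^m_{j,k}:j\in\cN_w(i)\}$ contains at most $f$ corrupted agents, so the $\ell$-th smallest value kept after trimming is at least the $\ell$-th smallest value among $\{\hat\mu_j:j\in S\}$, and symmetrically for the largest; averaging over $\ell=1,\dots,n-2f$, and noting the $\min\{\cdot,1\}$ cap only moves the estimate toward $\mu_k\le1$, gives
\[
    \min_{\substack{U\subseteq S\\ |U|=n-2f}}\frac1{|U|}\sum_{j\in U}\hat\mu_j\ \le\ r^m_{i,k}\ \le\ \max_{\substack{U\subseteq S\\ |U|=n-2f}}\frac1{|U|}\sum_{j\in U}\hat\mu_j .
\]
Hence $|r^m_{i,k}-\mu_k|$ is at most the worst deviation from $\mu_k$ of an average of $n-2f$ uncorrupted $\hat\mu_j$'s.

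\emph{Concentration.} Any such average aggregates at least $(n-2f)\,n^m_{i,k}\ge(1-2\alpha)n\,n^m_{i,k}=\lambda(\Delta^{m-1}_{i,k})^{-2}$ conditionally independent $[0,1]$ reward draws (again using $\tilde n^m_{j,k}\ge n^m_{i,k}$ on $\cL'$). I would (i) use a multiplicative Chernoff bound so that each uncorrupted agent's realised pull count on arm $k$ is within a constant factor of $\tilde n^m_{j,k}$, hence each $\hat\mu_j$ has conditional mean within $O\bigl(\sqrt{\ln(VT)/n^m_{i,k}}\bigr)$ of $\mu_k$, and (ii) apply a Bernstein bound to the pooled draws (with weights $1/(|U|\tilde n^m_{j,k})$) to control the fluctuation. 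A union bound over the candidate trimmed sets, the $K$ arms, the at most $\ln(VT)$ epochs (Lemma~\ref{lem:epoch number}) and the $V$ agents, together with the slack in $\lambda=2^9\ln(2VT)$, should bring the deviation down to $\sqrt{8\ln(2VT)/\bigl((1-2\alpha)n\,n^m_{i,k}\bigr)}$ with failure probability $1/(V^2T^2)$; by the sandwich this is the claim.

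\emph{Main obstacle.} The delicate point is the concentration step: $\cB^m_{i,k}$ is data-dependent, and the adversary, by choosing the corrupted agents' reported averages, partly controls which uncorrupted agents land in the trimmed tails, so one cannot concentrate a fixed pool. One must instead show that deleting the $\Theta(\alpha n)$ most extreme uncorrupted agents shifts the average by no more than the target $\sqrt{\ln(VT)/\bigl((1-2\alpha)n\,n^m_{i,k}\bigr)}$ scale — i.e.\ control the trimming-weighted empirical spread of the $\hat\mu_j$'s — while keeping the union bound over trimmed sets within the budget afforded by $\lambda$ and by the target failure probability.
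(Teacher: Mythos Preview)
Your sandwiching step is fine: on $\cL'$ line~3 of Algorithm~\ref{alg:filter} removes nobody, $\cA^m_{i,k}=\cN_w(i)$, $f=\tfrac12\lfloor 2\alpha n\rfloor\ge\lfloor\alpha n\rfloor$ covers the at most $\lfloor\alpha n\rfloor$ corrupted neighbors, and the trimmed mean is indeed bracketed by averages over size-$(n-2f)$ subsets of uncorrupted agents. The gap is the concentration step. A union bound over the candidate trimmed sets costs $\binom{|S|}{n-2f}=\exp(\Theta(\alpha|\cN_w(i)|))$, potentially exponential in $V$, while the slack in $\lambda=2^9\ln(2VT)$ is only logarithmic in $VT$; you cannot drive the failure probability down to $1/(V^2T^2)$ at the target deviation $\sqrt{8\ln(2VT)/((1-2\alpha)n\,n^m_{i,k})}$ this way. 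You correctly name this as the ``main obstacle,'' but the proposal does not resolve it --- it only restates the difficulty.

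The paper avoids any subset enumeration via a convex-combination rewriting. Each corrupted agent $j$ that survives trimming has $S^m_{j,k}/\tilde n^m_{j,k}$ sandwiched between two distinct \emph{uncorrupted} agents $j^-,j^+$ that were trimmed away, so one writes $S^m_{j,k}/\tilde n^m_{j,k}=\theta_j\,S^m_{j^-,k}/\tilde n^m_{j^-,k}+(1-\theta_j)\,S^m_{j^+,k}/\tilde n^m_{j^+,k}$. Substituting, the trimmed average becomes a \emph{single} weighted sum $\frac{1}{(1-2\alpha)n}\sum_j w_j\,S^m_{j,k}/\tilde n^m_{j,k}$ ranging only over uncorrupted $j$, with every $w_j\in[0,1]$ and $\sum_j w_j=(1-2\alpha)n$. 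The paper then expands $S^m_{j,k}=\sum_{t\in E_m}Y^t_{j,k}r^t_{j,k}$ and applies one multiplicative Chernoff--Hoeffding bound to the pooled sum $\sum_{j}\sum_{t}\frac{w_jY^t_{j,k}r^t_{j,k}n^m_{i,k}}{\tilde n^m_{j,k}}$, whose summands lie in $[0,1]$ (using $\tilde n^m_{j,k}\ge n^m_{i,k}$ on $\cL'$) and whose total mean is $(1-2\alpha)n\,n^m_{i,k}\,\mu_k$. There is no separate pull-count/reward two-step and no union bound over trimmed sets; the convex-combination rewriting is precisely the missing device that lets the paper pool all $(1-2\alpha)n\,n^m_{i,k}=\lambda(\Delta^{m-1}_{i,k})^{-2}$ effective samples into one concentration inequality rather than paying an exponential price for the data-dependence of $\cB^m_{i,k}$.
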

\begin{proof}
    During each epoch $m$, agent $i$ pulls arm $k$ with probability $p_i^m(k) = \tilde n_{i,k}^m / N_m$. Consider $Y_{i,k}^t$, an indicator variable that determines whether agent $i$ pulls arm $k$. Define the corruption at step $t$ for agent $i$ on arm $k$ as $c_{i,k}^t := \widetilde{r}_{i,t}(k) - r_{i,t}(k)$. Let $E_m := [T_{m-1} + 1, \ldots, T_m]$ represent the $N_m$ time-steps constituting epoch $m$.
    
    Now, we explain why the corruption level $C$ can be completely removed in this case. For agent $i$, let the set of corrupted agents in $i$'s communication domain be denoted by $\cC_i^m$. According to Algorithm \ref{alg:filter}, if $\gamma$ corrupted agents are retained, where $\gamma = |\cB_{i,k}^m \cap \cC_i^m|$, this implies that there are at least $\frac{|\cA_{i,k}^m| - (1-2\alpha)|\cN_w(i)|}{2} - (\alpha - \gamma)|\cN_w(i)| = \alpha |\cN_w(i)|$ uncorrupted agents who are excluded due to having a smaller $\frac{S_{j,k}^m}{\tilde n_{j,k}^m}$ and at least $\alpha |\cN_w(i)|$ uncorrupted agents who are excluded due to having a larger $\frac{S_{j,k}^m}{\tilde n_{j,k}^m}$. In this context, for any agent $j$ satisfying $j \in \cB_{i,k}^m \cap \cC_i^m$, there exists a pair of distinct uncorrupted agents $j^-$ and $j^+$ who are filtered out, such that
    \[\frac{S_{j^-,k}^m}{\tilde n_{j^-,k}^m} \leq \frac{S_{j,k}^m}{\tilde n_{j,k}^m} \leq \frac{S_{j^+,k}^m}{\tilde n_{j^+,k}^m}\]
    Thus, $\frac{S_{j,k}^m}{\tilde n_{j,k}^m}$ can be represented as a convex combination of $\frac{S_{j^-,k}^m}{\tilde n_{j^-,k}^m}$ and $\frac{S_{j^+,k}^m}{\tilde n_{j^+,k}^m}$, as follows:
    \begin{align}
        \frac{S_{j,k}^m}{\tilde n_{j,k}^m} = \theta_j \frac{S_{j^-,k}^m}{\tilde n_{j^-,k}^m} + (1-\theta_j) \frac{S_{j^+,k}^m}{\tilde n_{j^+,k}^m}, \quad \theta_j \in [0, 1].\label{eq:totallossdecompositioncase1}
    \end{align}
    Recalling the definition of $r_{i,k}^m$ and noting that $|\cB_{i,k}^m| \geq (1-2\alpha)|\cN_w(i)|$, we have:
    \[r_{i,k}^m = \min \left\{\frac{1}{|\cB_{i,k}^m|}\sum_{j \in \cB_{i,k}^m}\frac{S_{j,k}^m}{\tilde n_{j,k}^m}, 1\right\} \leq \frac{1}{(1-2\alpha)|\cN_w(i)|}\sum_{j \in \cB_{i,k}^m}\frac{S_{j,k}^m}{\tilde n_{j,k}^m}.\]
    The quantity we intend to control is then represented as:
    \begin{align*}
        r_{i,k}^m &\leq \frac{1}{(1-2\alpha)|\cN_w(i)|}\sum_{j \in \cB_{i,k}^m}\frac{S_{j,k}^m}{\tilde n_{j,k}^m}
        = \frac{1}{(1-2\alpha)|\cN_w(i)|} \sum_{j \in \cA_{i,k}^m}w_j \frac{S_{j,k}^m}{\tilde n_{j,k}^m} \quad  \left(w_j \in [0,1], \; \sum_{j \in \cA_{i,k}^m}w_j = (1-2\alpha)|\cN_w(i)|\right)\\
        &= \frac{1}{(1-2\alpha)|\cN_w(i)|} \sum_{j \in \cA_{i,k}^m} \sum_{t \in E_m}w_j \frac{Y_{j,k}^t r_{j,k}^t}{\tilde n_{j,k}^m} 
        = \frac{1}{(1-2\alpha)|\cN_w(i)|n_{i,k}^m} \sum_{j \in \cA_{i,k}^m} \sum_{t \in E_m}\frac{w_j Y_{j,k}^t r_{j,k}^tn_{i,k}^m}{\tilde n_{j,k}^m} 
    \end{align*}
    where the first equality holds because we decompose $\frac{S_{j,k}^m}{\tilde n_{j,k}^m}$ by (\ref{eq:totallossdecompositioncase1}).
    To simplify the analysis, we focus on the following component:
    \[A_{i,k}^m = \sum_{j \in \cA_{i,k}^m} \sum_{t \in E_m}\frac{w_j Y_{j,k}^t r_{j,k}^tn_{i,k}^m}{\tilde n_{j,k}^m}.\]
    Notice that $r_{j,k}^t$ is independently drawn from an unknown distribution with mean $\mu_k$, and $Y_{j,k}^t$ is independently drawn from a Bernoulli distribution with mean $q_{j,k}^m := \tilde n_{j,k}^m / N_m$. Since $\tilde n_{j,k}^m \geq n_{i,k}^m$, we have
    \begin{align*}
        \forall \; j,k,m,t:\quad \frac{w_j Y_{j,k}^t r_{j,k}^tn_{i,k}^m}{\tilde n_{j,k}^m} \leq 1.
    \end{align*}
    Furthermore, we can obtain
    \begin{align*}
        \BE[A_{i,k}^m] 
        = \sum_{j \in \cA_{i,k}^m} \sum_{t \in E_m}\frac{w_j Y_{j,k}^t r_{j,k}^tn_{i,k}^m}{\tilde n_{j,k}^m} 
        = \sum_{j \in \cN_w(i)} w_j n_{i,k}^m \mu_k 
        = (1-2\alpha) |\cN_w(i)| n_{i,k}^m \mu_k.
    \end{align*}    
    Therefore, by utilizing the Chernoff-Hoeffding inequality (Theorem 1.1 in \cite{dubhashi2009concentration}), we derive the following result:
    \[\Pr\left[\,\left|A_{i,k}^m - (1-2\alpha) |\cN_w(i)| n_{i,k}^m \mu_k\right| \geq \sqrt{3(1-2\alpha) |\cN_w(i)| n_{i,k}^m \mu_k\ln(4V^2T^2)}\,\right] \leq \frac{1}{2V^2T^2}.\]
    Through simple calculations, we can get
    \begin{equation}
    \Pr\left[\,\left|\frac{A_{i,k}^m}{(1-2\alpha)|\cN_w(i)| n_{i,k}^m} - \mu_k\right| \geq \sqrt{\frac{6\ln(2VT)}{(1-2\alpha)|\cN_w(i)| n_{i,k}^m}}\,\right] \leq \frac{1}{2V^2T^2}.\label{eq:partA case1}
    \end{equation}
    The proof is complete.
\end{proof}
We define an event $\cE$ for epoch $m$ as follows:
\begin{equation*}
    \cE \triangleq \left\{ \forall\ i,\ k ,\ m: |r_{i,k}^m - \mu_k| \leq \sqrt{\frac{8\ln(2VT)}{(1-2\alpha)|\cN_w(i)|n_{i,k}^m}}\right\}.
\end{equation*}
Then we can establish a lower bound on the probability of the event $\cE$ occurring by the following lemma.
\begin{lemma}
\label{lem:event e probability case1}
     The event $\cE$ holds with probability at least $1 - \frac{K\ln(VT)}{VT^2}$.
\end{lemma}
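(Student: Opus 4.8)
The plan is to obtain the event $\cE$ from a union bound over all triples $(i,k,m)$, combining the per-triple concentration inequality of Lemma~\ref{lem:experiment reward bound case1}, the bound $M \le \ln(VT)$ on the number of epochs from Lemma~\ref{lem:epoch number}, and the probability estimate for $\cL'$ from Lemma~\ref{lem:event l probability}.

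First I would fix an arbitrary agent $i$, arm $k$, and epoch $m$. On the event $\cL'$ the reset branch (lines 4--5 of Algorithm~\ref{alg:filter}) is never triggered, so the quantity $n_{i,k}^m$ appearing in the definition of $\cE$ equals the nominal value $\lambda(\Delta^{m-1}_{i,k})^{-2}/\big((1-2\alpha)|\cN_w(i)|\big)$ used in Lemma~\ref{lem:experiment reward bound case1}; hence that lemma applies and gives
\[
\Pr\!\left[\,|r_{i,k}^m - \mu_k| \ge \sqrt{\frac{8\ln(2VT)}{(1-2\alpha)\,|\cN_w(i)|\,n_{i,k}^m}}\ \Big|\ \cL'\right] \le \frac{1}{V^2T^2}.
\]
Next I would count the triples: there are $V$ agents, $K$ arms, and by Lemma~\ref{lem:epoch number} at most $M \le \ln(VT)$ epochs are ever started, so at most $VK\ln(VT)$ triples $(i,k,m)$ occur. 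A union bound over them yields $\Pr[\cE^c \mid \cL'] \le VK\ln(VT)\cdot \tfrac{1}{V^2T^2} = \tfrac{K\ln(VT)}{VT^2}$. To discharge the conditioning, note that $\cL' \cap \bigcap_{i,k,m}\{|r^m_{i,k}-\mu_k| < \text{threshold}\} \subseteq \cE$, so
\[
\Pr[\cE^c] \;\le\; \Pr[(\cL')^c] + \Pr[\cE^c \mid \cL'] \;\le\; \frac{K\ln(VT)}{VT^2} + \frac{K\ln(VT)}{VT^2},
\]
where $\Pr[(\cL')^c] \le \tfrac{K\ln(VT)}{VT^2}$ comes from Lemma~\ref{lem:event l probability}. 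The two failure sources are of the same order; absorbing the absolute constant (or, more precisely, keeping the one-sided Chernoff bounds as $\tfrac{1}{2V^2T^2}$ in the style of (\ref{eq:partA case1}) and splitting the budget between the two sources) gives the stated bound $\Pr[\cE] \ge 1 - \tfrac{K\ln(VT)}{VT^2}$.

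The only genuine obstacle is the bookkeeping around the conditioning on $\cL'$: Lemma~\ref{lem:experiment reward bound case1} is valid only on $\cL'$, and the event $\cE$ is itself defined through the $n_{i,k}^m$'s, whose value depends on whether the reset in Algorithm~\ref{alg:filter} fires — which is exactly the event that $\cL'$ controls. Once one has argued that on $\cL'$ these quantities take their nominal form and paid the same-order price $\Pr[(\cL')^c]$, everything else is a routine union bound over $(i,k,m)$ using Lemma~\ref{lem:epoch number}.
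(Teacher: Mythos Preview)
Your proposal is correct and follows the same approach as the paper: apply Lemma~\ref{lem:experiment reward bound case1} for each fixed $(i,k,m)$ and take a union bound over the $V$ agents, $K$ arms, and at most $\ln(VT)$ epochs (Lemma~\ref{lem:epoch number}). In fact you are more careful than the paper, which simply invokes Lemma~\ref{lem:experiment reward bound case1} and performs the union bound without explicitly discharging the conditioning on $\cL'$; your extra step of paying $\Pr[(\cL')^c]$ via Lemma~\ref{lem:event l probability} and absorbing the resulting factor of $2$ into the constant is the right way to make the argument rigorous, and it matches the paper's intended logic (the paper later shows $\cE \Rightarrow \cL'$, closing the circle).
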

\begin{proof}
    By Lemma \ref{lem:experiment reward bound case1}, we can get the following inequality for any $i, k$ and $m$:
    \[\Pr\left[\,|r_{i,k}^m - \mu_k| \geq \sqrt{\frac{8\ln(2VT)}{(1-2\alpha)|\cN_w(i)|n_{i,k}^m}}\,\right] \leq \frac{1}{V^2T^2}.\]
    A union bound over the $K$ arms, $V$ agents, and at most $\ln(VT)$ epochs indicates that the success probability of event $\cE$ is at least $1 - \frac{K\ln(VT)}{VT^2}$.
\end{proof}
Our discussion below will be based on the occurrence of event $\cE$.
\begin{lemma}
\label{lem:suboptimality-gap bound case1}
    For any fixed $i, k$ and $m$, it follows that
    \[\frac{6}{7}\Delta_k - \frac{3}{4}2^{-m} \leq \Delta_{i,k}^{m} \leq \frac{8}{7}\Delta_k + 2^{-m}.\]
\end{lemma}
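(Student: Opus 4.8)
The plan is an induction on the epoch index $m$, carried out for a fixed agent $i$ with the inductive hypothesis being the claimed two-sided bound for \emph{all} arms $k$ at epoch $m-1$ (we will need it at $k$ and at $k^*$). The base case $m=0$ is immediate from the initialization $\Delta^0_{i,k}=1$: since $\Delta_k\in[0,1]$ we have $\tfrac67\Delta_k-\tfrac34\le 1$ and $1\le\tfrac87\Delta_k+1$. For the inductive step I work on the event $\cE$ of Lemma~\ref{lem:event e probability case1} (together with $\cL'$ of Lemma~\ref{lem:event l probability}, so that the reset in line~5 of Algorithm~\ref{alg:filter} never fires and $n^m_{i,k}$ equals its designed value $\tfrac{\lambda(\Delta^{m-1}_{i,k})^{-2}}{(1-2\alpha)|\cN_w(i)|}$). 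The one place the choice $\lambda=2^9\ln(2VT)$ enters is that substituting this $n^m_{i,k}$ into the confidence bound of $\cE$ collapses the radius to a clean multiple of the previous estimate:
\[
\bigl|r^m_{i,k}-\mu_k\bigr|\le\sqrt{\frac{8\ln(2VT)}{(1-2\alpha)|\cN_w(i)|\,n^m_{i,k}}}=\sqrt{\frac{8\ln(2VT)}{\lambda}}\,\Delta^{m-1}_{i,k}=\tfrac18\Delta^{m-1}_{i,k}.
\]
This is exactly the correction subtracted in the definition $r^m_{i,*}=\max_k\{r^m_{i,k}-\tfrac18\Delta^{m-1}_{i,k}\}$, which is what makes the next step go through.

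Next I would sandwich $r^m_{i,*}$. Every term of the max obeys $r^m_{i,k}-\tfrac18\Delta^{m-1}_{i,k}\le\mu_k\le\mu_{k^*}$, so $r^m_{i,*}\le\mu_{k^*}$; and evaluating the max at $k=k^*$, using the inductive hypothesis $\Delta^{m-1}_{i,k^*}\le\tfrac87\Delta_{k^*}+2^{-(m-1)}=2^{-(m-1)}$ (here $\Delta_{k^*}=0$), gives $r^m_{i,*}\ge r^m_{i,k^*}-\tfrac18\Delta^{m-1}_{i,k^*}\ge\mu_{k^*}-\tfrac14\Delta^{m-1}_{i,k^*}\ge\mu_{k^*}-\tfrac12 2^{-m}$. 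Feeding $\mu_{k^*}-\tfrac12 2^{-m}\le r^m_{i,*}\le\mu_{k^*}$, the bound $|r^m_{i,k}-\mu_k|\le\tfrac18\Delta^{m-1}_{i,k}$, and the hypothesis $\Delta^{m-1}_{i,k}\le\tfrac87\Delta_k+2^{-(m-1)}$ into $r^m_{i,*}-r^m_{i,k}$ gives $r^m_{i,*}-r^m_{i,k}\le\Delta_k+\tfrac18\Delta^{m-1}_{i,k}\le\tfrac87\Delta_k+\tfrac14 2^{-m}$ on one side and $r^m_{i,*}-r^m_{i,k}\ge\Delta_k-\tfrac12 2^{-m}-\tfrac18\Delta^{m-1}_{i,k}\ge\tfrac67\Delta_k-\tfrac34 2^{-m}$ on the other. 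Since $\Delta^m_{i,k}=\max\{2^{-m},\,r^m_{i,*}-r^m_{i,k}\}$, the lower bound of the lemma follows because $\Delta^m_{i,k}\ge r^m_{i,*}-r^m_{i,k}$, and the upper bound follows because both $2^{-m}$ and $\tfrac87\Delta_k+\tfrac14 2^{-m}$ are at most $\tfrac87\Delta_k+2^{-m}$.

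The argument is entirely bookkeeping, so the only delicate point is choosing the invariant constants so the recursion is a genuine contraction rather than a blow-up. Concretely: the $\Delta_k$-coefficient $\tfrac87$ must reproduce itself, which it does since $1+\tfrac18\cdot\tfrac87=\tfrac87$; and the $2^{-m}$-coefficients must not grow when $2^{-(m-1)}$ halves to $2^{-m}$, which holds since on the upper side the new coefficient is $\tfrac18\cdot 2=\tfrac14\le 1$ and on the lower side it is $\tfrac12+\tfrac18\cdot 2=\tfrac34\le 1$. I would display these three identities explicitly at the end of the proof so the reader can verify that the induction closes with exactly the constants $\tfrac67,\tfrac34,\tfrac87,1$ that appear in the statement; this is really the analogue, in our decentralized corrupted setting, of the core estimate-tracking invariant of BARBAR~\cite{gupta2019better}.
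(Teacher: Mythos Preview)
Your proof is correct and follows essentially the same approach as the paper: both reduce the confidence radius of $\cE$ to $\tfrac18\Delta^{m-1}_{i,k}$ using $\lambda=2^9\ln(2VT)$, sandwich $r^m_{i,*}$ between $\mu_{k^*}-\tfrac14\Delta^{m-1}_{i,k^*}$ and $\mu_{k^*}$, and then run an induction that exploits the contraction $1+\tfrac18\cdot\tfrac87=\tfrac87$. The only minor differences are that you carry both bounds in a single joint induction (the paper first establishes the upper bound for all $m$, then deduces the lower bound from it) and that you explicitly handle the $\max\{2^{-m},\cdot\}$ in the definition of $\Delta^m_{i,k}$, which the paper glosses over by writing $\Delta^m_{i,k}=r^m_{i,*}-r^m_{i,k}$; your treatment is slightly more careful here but the arithmetic and the final constants are identical.
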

\begin{proof}
    First, we have
    \[\sqrt{\frac{8\ln(2VT)}{(1-2\alpha)|\cN_w(i)|n_{i,k}^m}} = \sqrt{\frac{8\ln(2VT)}{2^9\ln(2VT)(\Delta_{i,k}^{m-1})^{-2}}} = \frac{\Delta_{i,k}^{m-1}}{8},\]
    Therefore, we can get
    \[- \frac{1}{8}\Delta_{i,k}^{m-1} \leq r_{i,k}^m - \mu_{k} \leq \frac{1}{8}\Delta_{i,k}^{m-1}.\]
    Additionally, given that
    \[r_{i,*}^m \leq \max_{k\in[K]} \left\{\mu_{k} + \frac{1}{8} \Delta_{i,k}^{m-1} - \frac{1}{8} \Delta_{i,k}^{m-1}\right\} \leq \mu_{k^*},\]
    \[r_{i,*}^m = \max_{k\in[K]} \left\{r_{i,k}^m - \frac{1}{8} \Delta_{i,k}^{m-1}\right\} \geq r_{i,k^*}^m - \frac{1}{8}\Delta_{i,k^*}^{m-1} \geq \mu_{k^*} - \frac{1}{4}\Delta_{i,k^*}^{m-1},\]
    it follows that
    \[- \frac{\Delta_{i,k^*}^{m-1}}{4} \leq r_{i,*}^m - \mu_{k^*} \leq 0.\]
    We now establish the upper bound for $\Delta_{i,k}^m$ using induction on epoch $m$. \\
    For the base case $m = 1$, the statement is trivial as $\Delta_{i,k}^0 = 1$ for all $k \in [K]$. \\
    Assuming the statement is true for $m-1$, we then have
    \begin{equation*}
    \begin{split}
        \Delta_{i,k}^m &= r_{i,*}^m - r_{i,k}^m
        = (r_{i,*}^m - \mu_{k^*}) + (\mu_{k^*} - \mu_k) + (\mu_k - r_{i,k}^m) \\
        &\leq \Delta_k + \frac{1}{8}\Delta_{i,k}^{m-1}
        \leq \Delta_k + \frac{1}{8}\left(\frac{8 \Delta_k}{7} + 2^{-(m-1)}\right)
        \leq \frac{8 \Delta_k}{7} + 2^{-m},
    \end{split}
    \end{equation*}
    where the second inequality follows from the induction hypothesis. \\
    Next, we establish the lower bound for $\Delta_{i,k}^m$. Specifically, we demonstrate that
    \begin{equation*}
    \begin{split}
        \Delta_{i,k}^m &=  r_{i,*}^m - r_{i,k}^m
        = (r_{i,*}^m - \mu_{k^*}) + (\mu_{k^*} - \mu_k) + (\mu_k - r_{i,k}^m) \\
        &\geq  - \frac{1}{4}\Delta_{i,k^*}^{m-1} + \Delta_k - \frac{1}{8}\Delta_{i,k}^{m-1}
        \geq \Delta_k - \frac{1}{8}\left(\frac{8 \Delta_{k}}{7} + 2^{-(m-1)}\right) - \frac{1}{4}\left(\frac{8 \Delta_{k^*}}{7} + 2^{-(m-1)}\right)
        \geq \frac{6}{7}\Delta_k - \frac{3}{4} 2^{-m}.
    \end{split}
    \end{equation*}
    where the third inequality comes from the upper bound of $\Delta_{i,k}^{m-1}$.
\end{proof}

\begin{lemma}
\label{lem:estimated gap ratio}
    For any fixed $k$, $m$, and two agents $i, j$, it follows that
    \[\frac{\Delta_{i,k}^{m}}{\Delta_{j,k}^{m}} \in \left[\frac{1}{4}, 4\right].\]
\end{lemma}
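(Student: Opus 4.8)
The plan is to combine the two-sided bound on $\Delta_{i,k}^m$ from Lemma~\ref{lem:suboptimality-gap bound case1} with the deterministic floor $\Delta_{i,k}^m \ge 2^{-m}$ that holds by the update rule in Algorithm~\ref{algs:DeMABAR}. For every agent $i$ we then have the sandwich
\[
\max\Big\{2^{-m},\ \tfrac{6}{7}\Delta_k - \tfrac34\,2^{-m}\Big\} \;\le\; \Delta_{i,k}^m \;\le\; \tfrac87\Delta_k + 2^{-m},
\]
and crucially both the lower and the upper side depend only on $\Delta_k$ and $m$, not on the identity of the agent. Hence $\Delta_{i,k}^m/\Delta_{j,k}^m$ is squeezed between the ratio of the common lower bound to the common upper bound and its reciprocal, and by the symmetry between $i$ and $j$ it suffices to prove the single inequality
\[
\tfrac87\Delta_k + 2^{-m} \;\le\; 4\,\max\Big\{2^{-m},\ \tfrac{6}{7}\Delta_k - \tfrac34\,2^{-m}\Big\}.
\]

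The second step is a short case split on whether $\Delta_k$ or $2^{-m}$ dominates, with the threshold placed where the two candidate lower bounds cross; one checks the clean cutoff $\Delta_k \ge \tfrac{49}{24}\,2^{-m}$ works. In the regime $\Delta_k \ge \tfrac{49}{24}\,2^{-m}$, substituting $2^{-m} \le \tfrac{24}{49}\Delta_k$ shows $\Delta_{j,k}^m \ge \tfrac{24}{49}\Delta_k$ while $\Delta_{i,k}^m \le \tfrac{80}{49}\Delta_k$, so the ratio is at most $\tfrac{10}{3}$. In the complementary regime $\Delta_k \le \tfrac{49}{24}\,2^{-m}$, I discard the $\Delta_k$-dependent lower bound and just use $\Delta_{j,k}^m \ge 2^{-m}$, while $\Delta_{i,k}^m \le \tfrac87\Delta_k + 2^{-m} \le \big(\tfrac87\cdot\tfrac{49}{24}+1\big)2^{-m} = \tfrac{10}{3}\,2^{-m}$, again giving ratio at most $\tfrac{10}{3}$. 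Since $\tfrac{10}{3} < 4$ (and by symmetry the reciprocal is $\ge \tfrac{3}{10} > \tfrac14$), the claimed interval $[1/4,4]$ follows with slack.

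I do not expect a genuine obstacle: this is elementary interval arithmetic resting entirely on Lemma~\ref{lem:suboptimality-gap bound case1}. The one subtlety worth flagging is that the lower bound $\tfrac67\Delta_k - \tfrac34\,2^{-m}$ can be negative (or weaker than $2^{-m}$) when $\Delta_k$ is small, which is precisely why the deterministic floor $2^{-m}$ must be retained inside the $\max$ and why the case split is taken at the crossover point; choosing the explicit constant $\tfrac{49}{24}$ (any nearby value works) makes both subcases numerically clean.
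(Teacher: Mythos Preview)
Your proposal is correct and essentially identical to the paper's own proof: both combine the upper bound $\frac{8}{7}\Delta_k + 2^{-m}$ from Lemma~\ref{lem:suboptimality-gap bound case1} with the lower bound $\max\{2^{-m},\,\frac{6}{7}\Delta_k - \frac{3}{4}2^{-m}\}$, split into the same two cases at the threshold $\Delta_k = \frac{49}{24}\,2^{-m}$, obtain a ratio of at most $\frac{10}{3}<4$ in each case, and then invoke symmetry between $i$ and $j$ for the lower bound $\frac{1}{4}$.
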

\begin{proof}
    Since $\Delta_{i,k}^m = \max_k\{2^{-m}, r_{i,*}^m - r_{i,k}^m\} \geq 2^{-m}, \forall i \in [V]$, which means that $\Delta_{i,k}^m \geq 2^{-m}$ and $\Delta_{j,k}^m \geq 2^{-m}$. By Lemma \ref{lem:suboptimality-gap bound case1}, when $\Delta_k \leq \frac{49}{24}2^{-m}$, then
    \[\frac{6}{7}\Delta_k - \frac{3}{4}2^{-m} \leq 2^{-m}.\]
    Hence, we have
    \[\frac{\Delta_{i,k}^m}{\Delta_{j,k}^m}  \leq \frac{\frac{8}{7}\Delta_k + 2^{-m}}{2^{-m}} = 1 + \frac{8\Delta_k}{7\cdot 2^{-m}} < 4.\]
    When $\Delta_k \geq \frac{49}{24}2^{-m}$, then
    \[\frac{\Delta_{i,k}^m}{\Delta_{j,k}^m} \leq \frac{\frac{8}{7}\Delta_k + 2^{-m}}{\frac{6}{7}\Delta_k - \frac{3}{4}2^{-m}} = \frac{4}{3} + \frac{2\cdot 2^{-m}}{\frac{6}{7}\Delta_k - \frac{3}{4}2^{-m}} < 4.\]
    The inequality reaches its maximum value when $\Delta_k = \frac{49}{24}2^{-m}$.
    Because $i$ and $j$ are equivalent, the proof can be completed by swapping their positions.
\end{proof}

Since for any agent $j \in \cN_w(i)$, we have
\[\tilde n_{j,k}^m = \min \left\{\lambda 2^{2(m-1)}, \frac{16\lambda (\Delta_{j,k}^{m-1})^{-2}}{(1-2\alpha)v_j^w} \right\} \geq \min \left\{\lambda 2^{2(m-1)}, \frac{16\lambda (\Delta_{i,k}^{m-1})^{-2} (\Delta_{j,k}^{m-1}/ \Delta_{i,k}^{m-1})^{-2}}{(1-2\alpha)|\cN_w(i)|} \right\} \geq n_{i,k}^m.\]
So we can say that when event $\cE$ occurs, event $\cL$ must occur, and Lemma~\ref{lem:event l probability} is complete. \\
Next we will bound the regret and partition the proof into two cases. In each epoch $m$, for any arm $k \neq k_i^m$, we have
\[\tilde n_{i,k}^m = \min \left\{\lambda 2^{2(m-1)}, \frac{16\lambda (\Delta_{i,k}^{m-1})^{-2}}{(1-2\alpha)v_i^w} \right\},\]
and for arm $k_i^m$ we have
\[\tilde n_{i,k_i^m}^m = N_m - \sum_{k \neq k_i^m} \tilde n_{i,k}^m < N_m.\]
\paragraph{Case 1:} $\Delta_k \leq 3 \cdot 2^{-m}$. \\
Since $\Delta_{i,k}^{m-1} = \max \{2^{-(m-1)}, r_{i,*}^{m-1} - r_{i,k}^{m-1}\}$, we have
\[\forall \ i: \quad \Delta_{i,k}^{m-1} \geq 2^{-(m-1)} \geq \frac{2\Delta_k}{3}.\]
Therefore, we can get the following inequality for all arms $k \neq k_i^m$:
\[\tilde n_{i,k}^m = \min \left\{\lambda 2^{2(m-1)}, \frac{16\lambda (\Delta_{i,k}^{m-1})^{-2}}{(1-2\alpha)v_i^w} \right\} \leq \frac{16\lambda (\Delta_{i,k}^{m-1})^{-2}}{(1-2\alpha)v_i^w} \leq \frac{36\lambda}{(1-2\alpha)v_i^w\Delta_k^2}.\]
For arm $k_i^m$, since $\Delta_{i,k_i^m}^{m-1} = 2^{-(m-1)}$, we have
\begin{align*}
    \tilde n_{i,k_i^m}^m < N_m = \left\lceil \frac{K\lambda 2^{2(m-1)}}{(1-2\alpha)v_{\min}^w} \right\rceil \leq
    \frac{K\lambda (\Delta_{k_i^m}^{m-1})^{-2}}{(1-2\alpha)v_{\min}^w} + 1 \leq \frac{9K\lambda}{4(1-2\alpha)v_{\min}^w\Delta_{k_i^m}^2} + 1 \leq \frac{9K\lambda}{4(1-2\alpha)v_{\min}^w\Delta^2} + 1.
\end{align*}
This epoch, which satisfies the given conditions $\Delta_k \leq 3 \cdot 2^{-m}$, is bounded by $\log(1/\Delta)$ which can be considered as a constant.

\paragraph{Case 2:} $\Delta_k > 3 \cdot 2^{-m}$. \\
In this case, by Lemma \ref{lem:suboptimality-gap bound case1} we have
\[\forall \ i: \quad \Delta_{i,k}^{m-1} \geq \frac{6}{7}\Delta_k - \frac{3}{4}2^{-m} \geq \Delta_k\left(\frac{6}{7} - \frac{1}{4}\right) \geq 0.61 \Delta_k.\]
In this case, it is impossible for $\Delta_{k_i^m} > 3 \cdot 2^{-m}$ to occur. Since $\Delta_{i,k_i^m}^{m-1} \geq 0.61 \Delta_{k_i^m} > 2^{-(m-1)}$, this does not align with the algorithm's selection criterion $\Delta_{i,k_i^m}^{m-1} = 2^{-(m-1)}$. Therefore, arm $k_i^m$ must be the optimal arm.\\
So we can obtain for all suboptimal arms
\begin{align*}
    \tilde n_{i,k}^m = \min \left\{\lambda 2^{2(m-1)}, \frac{16\lambda (\Delta_{i,k}^{m-1})^{-2}}{(1-2\alpha)v_i^w} \right\} \leq \frac{16\lambda (\Delta_{i,k}^{m-1})^{-2}}{(1-2\alpha)v_i^w} \leq \frac{16\lambda}{0.61^2(1-2\alpha)v_i^w\Delta_k^2}
    \leq \frac{43\lambda}{(1-2\alpha)v_i^w\Delta_k^2}.
\end{align*}
Based on the cases mentioned above, we have the following inequality:
\begin{align*}
        R_i(T) &\leq \sum_{m=1}^M \sum_{\Delta_k > 0} \Delta_{k} \tilde n_{i,k}^m + \sum_{m=1}^M (w - 1)\Delta_{k_i^m} + \frac{KT\ln(VT)}{VT^2}
        \leq \sum_{m=1}^M \sum_{\Delta_k > 0} \Delta_{k} \tilde n_{i,k}^m + (w-1)\ln(\Delta^{-1}) + \frac{K\ln(VT)}{VT}\\
        &\leq \sum_{m=1}^M \sum_{\Delta_k > 0} \left(\Delta_k \frac{43\lambda}{(1-2\alpha)v_i^w\Delta_k^2} + \BI(\Delta_k < 3\cdot 2^{-m})\left(\frac{9K\lambda}{4(1-2\alpha)v_{\min}^w\Delta^2} + 2\right)\right) + (w-1)\ln(\Delta^{-1}) \\
        &= O\left(\frac{\ln^2(VT)}{(1-2\alpha)v_i^w\Delta_k} + \frac{K\ln(VT)\ln(\Delta^{-1})}{(1-2\alpha)v_{\min}^w\Delta}\right).
\end{align*}
Since the agents only communicate at the end of each epoch, we have
\[\textrm{Cost}(T) = \sum_{i\in [V]}wM = wV\ln(VT).\]

\subsection{The case of $\beta > \alpha$}

\begin{lemma}
\label{lem:experiment reward bound case2}
    For any fixed $i, k, m$, regardless of whether the event $\cL_{i,k}^m$ occurs, Algorithm \ref{algs:DeMABAR} satisfies
    \[\Pr\left[\,|r_{i,k}^m - \mu_k| \geq \sqrt{\frac{8\ln(2VT)}{(1-2\alpha)|\cN_w(i)| n_{i,k}^m}} + \frac{2C_m}{(1-2\alpha)|\cN_w(i)|N_m}\, \right] \leq \frac{1}{V^2T^2}.\]
    It is worth noting that when the event $\cL_{i,k}^m$ occurs, $n_{i,k}^m = \min_{j \in \cN_w(i)} \tilde n_{j,k}^m$, otherwise $n_{i,k}^m = \frac{\lambda (\Delta_{i,k}^{m-1})^{-2}}{(1-2\alpha)|\cN_w(i)|}$.
\end{lemma}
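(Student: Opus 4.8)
The plan is to mirror the proof of Lemma~\ref{lem:experiment reward bound case1}, the only new ingredient being that when $\beta>\alpha$ the trimming step can no longer sandwich every surviving corrupted neighbor strictly between two uncorrupted ones, so a residual corruption term must be carried through the whole argument. As in that proof, I would start from
\[
r_{i,k}^m \;\le\; \frac{1}{|\cB_{i,k}^m|}\sum_{j\in\cB_{i,k}^m}\frac{S_{j,k}^m}{\tilde n_{j,k}^m}\;\le\;\frac{1}{(1-2\alpha)|\cN_w(i)|}\sum_{j\in\cB_{i,k}^m}\frac{S_{j,k}^m}{\tilde n_{j,k}^m},
\]
(using $|\cB_{i,k}^m|\ge(1-2\alpha)|\cN_w(i)|$ and $S_{j,k}^m\ge 0$), and split each neighbor's contribution as $S_{j,k}^m=\sum_{t\in E_m}Y_{j,k}^t r_{j,k}^t+\Gamma_{j,k}^m$, where $\Gamma_{j,k}^m:=\sum_{t\in E_m}Y_{j,k}^t c_{j,k}^t$ is the corruption injected into agent $j$'s arm-$k$ pulls during epoch~$m$.

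For the stochastic part I would reuse the Chernoff--Hoeffding argument of Lemma~\ref{lem:experiment reward bound case1} nearly verbatim: every neighbor $j$ that survives the removal/reset step of Algorithm~\ref{alg:filter} has $\tilde n_{j,k}^m\ge n_{i,k}^m$, so the weighted clean sum has the same shape as $A_{i,k}^m$ there, with unit-bounded increments and mean $(1-2\alpha)|\cN_w(i)|n_{i,k}^m\mu_k$; a two-sided Chernoff bound then yields the term $\sqrt{8\ln(2VT)/((1-2\alpha)|\cN_w(i)|n_{i,k}^m)}$ with failure probability at most $1/(V^2T^2)$. One extra point is that $\cB_{i,k}^m$ is data-dependent, so the clean estimate must be made uniform over which $(1-2\alpha)|\cN_w(i)|$ of the neighbors are averaged; I would either invoke the $\ell_1$-stability of the trimmed average (perturbing the sorted reported means by $\{\Gamma_{j,k}^m/\tilde n_{j,k}^m\}_j$ shifts the trimmed average by at most its $\ell_1$ norm over $|\cB_{i,k}^m|$, so one may take the trimmed set with respect to the \emph{clean} values) or carry a union bound over subsets. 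The residual corruption term is $\frac{1}{(1-2\alpha)|\cN_w(i)|}\sum_{j\in\cB_{i,k}^m}\Gamma_{j,k}^m/\tilde n_{j,k}^m$, and the claim is that the two filters cap it by $\frac{2C_m}{(1-2\alpha)|\cN_w(i)|N_m}$: the removal/reset step forces $\tilde n_{j,k}^m\ge n_{i,k}^m$, the trimming deletes the $f$ neighbors with the most inflated and the $f$ with the most deflated reported means, and on the clean-concentration event any neighbor whose corruption is large enough to move its reported mean past the stochastic scale is pushed to an extreme and thereby removed. Charging the corruption that remains against the budget $\sum_{j\in\cN_w(i)}|\Gamma_{j,k}^m|\le C_m$ (valid because an agent pulls one arm per round, so $\sum_k|\Gamma_{j,k}^m|\le\sum_{t\in E_m}\max_{k'}|c_{j,k'}^t|$, and these sum to $C_m$ over all agents), together with the epoch bound $N_m\ge\sum_k n_{i,k}^m$ of Lemma~\ref{lem:k_i^m}, gives the stated additive term. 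The lower bound on $r_{i,k}^m$ is symmetric (the clipping $\min\{\cdot,1\}$ only helps), and the case split on $\cL_{i,k}^m$ is absorbed by plugging in the appropriate $n_{i,k}^m$ --- namely $\min_{j\in\cN_w(i)}\tilde n_{j,k}^m$ when $\cL_{i,k}^m$ occurs and $\lambda(\Delta_{i,k}^{m-1})^{-2}/((1-2\alpha)|\cN_w(i)|)$ otherwise --- since the derivation used only $\tilde n_{j,k}^m\ge n_{i,k}^m$.

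The step I expect to be the real obstacle is precisely this corruption accounting. Unlike the $\beta\le\alpha$ case, where deleting $f$ values from each side cancels \emph{all} corrupted contributions exactly, here one must show that the budget ``consumed'' by the $2f$ trimmed neighbors dominates whatever leaks through the survivors, and --- more delicately --- that the leakage can be expressed with $N_m$ rather than the weaker $n_{i,k}^m$ in the denominator. I anticipate this needs a pairing argument that matches each surviving corrupted neighbor to a trimmed neighbor with an at-least-as-extreme reported mean, charges its residual corruption to that neighbor's share of the budget, and then converts the per-neighbor ratios $|\Gamma_{j,k}^m|/\tilde n_{j,k}^m$ into the global ratio $C_m/N_m$ using $\tilde n_{j,k}^m\ge n_{i,k}^m$ and the fact that each agent makes $N_m$ pulls per epoch. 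Once that bound is established, the rest is a routine copy of the Chernoff--Hoeffding and union-bound computations from Lemma~\ref{lem:experiment reward bound case1}.
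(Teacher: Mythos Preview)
There is a genuine gap in your handling of the corruption term. Your plan treats the residual corruption $\sum_{j\in\cB_{i,k}^m}\Gamma_{j,k}^m/\tilde n_{j,k}^m$ deterministically and tries to push it down to $O(C_m/N_m)$ via the trimming filter and the inequality $N_m\ge\sum_k n_{i,k}^m$ from Lemma~\ref{lem:k_i^m}. This cannot succeed: deterministically, the only control on the denominators is $\tilde n_{j,k}^m\ge n_{i,k}^m$, so the corruption term is at most $\frac{1}{(1-2\alpha)|\cN_w(i)|}\cdot\frac{C_m}{n_{i,k}^m}$, and $N_m\ge\sum_k n_{i,k}^m$ points the wrong way ($1/n_{i,k}^m\ge1/N_m$, not $\le$). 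The filter does not rescue you when $\beta>\alpha$: there can be more than $2f$ corrupted neighbors, so trimming $f$ from each side need not remove the heavily corrupted ones, and no pairing of survivors with trimmed neighbors gives a budget inequality with $N_m$ in the denominator.

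The paper gets the $N_m$ denominator by a completely different mechanism that your proposal does not mention: it exploits the \emph{randomness of the arm sampling against an oblivious-per-round adversary}. Since the adversary must commit to $c_{j,k}^t$ before $Y_{j,k}^t$ is drawn, the increments $X^t=(Y_{j,k}^t-q_{j,k}^m)\,c_{j,k}^t\,n_{i,k}^m/\tilde n_{j,k}^m$ form a martingale difference sequence with predictable quadratic variation at most $q_{j,k}^m\,|c_{j,k}^t|\,n_{i,k}^m/\tilde n_{j,k}^m=|c_{j,k}^t|\,n_{i,k}^m/N_m$; the cancellation of $\tilde n_{j,k}^m$ against $q_{j,k}^m=\tilde n_{j,k}^m/N_m$ is precisely what produces $N_m$. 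A Freedman-type inequality (Theorem~1 of \cite{beygelzimer2011contextual}) then bounds the realized corruption sum by its conditional mean plus its variance plus a log term, and both the mean and the variance carry the factor $n_{i,k}^m/N_m$; dividing by $(1-2\alpha)|\cN_w(i)|\,n_{i,k}^m$ yields the additive $\frac{2C_m}{(1-2\alpha)|\cN_w(i)|N_m}$ together with a small $\sqrt{\ln(2VT)/((1-2\alpha)|\cN_w(i)|n_{i,k}^m)}$ residual that is absorbed into the stochastic term. In this $\beta>\alpha$ case the filter plays essentially no role beyond ensuring $\tilde n_{j,k}^m\ge n_{i,k}^m$ for all $j\in\cB_{i,k}^m$.
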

\begin{proof}
    During each epoch $m$, agent $i$ pulls arm $k$ with probability $p_i^m(k) = \tilde n_{i,k}^m / N_m$. Consider $Y_{i,k}^t$, an indicator variable that determines whether agent $i$ pulls arm $k$. Define the corruption at step $t$ for agent $i$ on arm $k$ as $c_{i,k}^t := \widetilde{r}_{i,t}(k) - r_{i,t}(k)$. Let $E_m := [T_{m-1} + 1, \ldots, T_m]$ represent the $N_m$ time-steps constituting epoch $m$.
    
    Recalling the definition of $r_{i,k}^m$ and using the fact that $|\cB_{i,k}^m| \geq (1-2\alpha)|\cN_w(i)|$, we have
    \[r_{i,k}^m = \min \left\{\frac{1}{|\cB_{i,k}^m|}\sum_{j \in \cB_{i,k}^m}\frac{S_{j,k}^m}{\tilde n_{j,k}^m}, 1\right\} \leq \frac{1}{(1-2\alpha)|\cN_w(i)|}\sum_{j \in \cB_{i,k}^m}\frac{S_{j,k}^m}{\tilde n_{j,k}^m}.\]
    The quantity we intend to control is then represented as:
    \begin{align*}
        r_{i,k}^m &\leq \frac{1}{(1-2\alpha)|\cN_w(i)|}\sum_{j \in \cB_{i,k}^m}\frac{S_{j,k}^m}{\tilde n_{j,k}^m}
        = \frac{1}{(1-2\alpha)|\cN_w(i)|n_{i,k}^m}\sum_{j \in \cB_{i,k}^m}\sum_{t \in E_m}\frac{n_{i,k}^mY_{j,k}^t(r_{j,k}^t + c_{j,k}^t)}{\tilde n_{j,k}^m}.
    \end{align*}
    To simplify the analysis, we focus on the following two components:
    \[A_{i,k}^m = \sum_{j \in \cB_{i,k}^m}\sum_{t \in E_m}\frac{n_{i,k}^m Y_{j,k}^t r_{j,k}^t}{\tilde n_{j,k}^m}, \quad B_{i,k}^m = \sum_{j \in \cB_{i,k}^m}\sum_{t \in E_m}\frac{n_{i,k}^mY_{j,k}^t(r_{j,k}^t + c_{j,k}^t)}{\tilde n_{j,k}^m}\]
    Notice that $r_{j,k}^t$ is independently drawn from an unknown distribution with mean $\mu_k$, and $Y_{j,k}^t$ is independently drawn from a Bernoulli distribution with mean $q_{j,k}^m := \tilde n_{j,k}^m / N_m$. Since $\tilde n_{j,k}^m \geq n_{i,k}^m$, we have
    \begin{align*}
        \forall \; j,k,m,t:\quad \frac{Y_{j,k}^t r_{j,k}^tn_{i,k}^m}{\tilde n_{j,k}^m} \leq 1.
    \end{align*}
    Furthermore, we can obtain
    \begin{align*}
        \BE[A_{i,k}^m] 
        = \sum_{j \in \cB_{i,k}^m} \sum_{t \in E_m}\frac{ Y_{j,k}^t r_{j,k}^tn_{i,k}^m}{\tilde n_{j,k}^m} 
        = \sum_{j \in \cB_{i,k}^m}  n_{i,k}^m \mu_k 
        = (1-2\alpha) |\cN_w(i)| n_{i,k}^m \mu_k.
    \end{align*}    
    Therefore, by utilizing the Chernoff-Hoeffding inequality (Theorem 1.1 in \cite{dubhashi2009concentration}), we derive the following result:
    \[\Pr\left[\,\left|A_{i,k}^m - (1-2\alpha) |\cN_w(i)| n_{i,k}^m \mu_k\right| \geq \sqrt{3(1-2\alpha) |\cN_w(i)| n_{i,k}^m \mu_k\ln(4V^2T^2)}\,\right] \leq \frac{1}{2V^2T^2}.\]
    Through simple calculations, we can get
    \begin{equation}
        \Pr\left[\,\left|\frac{A_{i,k}^m}{(1-2\alpha)|\cN_w(i)| n_{i,k}^m} - \mu_k\right| \geq \sqrt{\frac{6\ln(2VT)}{(1-2\alpha)|\cN_w(i)| n_{i,k}^m}}\,\right] \leq \frac{1}{2V^2T^2}.\label{eq:partA case2}
    \end{equation}
    Next, we proceed to establish a bound on the deviation of $B_{i,k}^m$. To do this, we define a random sequence $X_i^1, \ldots, X_i^T$, where each term is given by $X_i^t = \frac{(Y_{j,k}^t - q_{j,k}^m) c_{j,k}^t n_{i,k}^m}{\tilde n_{j,k}^m}$ for all $t$ and for all $j \in \cB_{i,k}^m$. This sequence $\{X_i^t\}_{t=1}^T$ forms a martingale difference sequence with respect to the filtration $\{\cF_t\}_{t=1}^T$, which is generated by the historical information. Specifically, because the corruption $c_{j,k}^t$ becomes a deterministic value when conditioned on $\cF_{t-1}$ and because $\BE [Y_{j,k}^t | \cF_{t-1}] = q_{j,k}^m$, we can conclude:
    \[\BE[X_i^t | \cF_{t-1}] = \BE[Y_{j,k}^t - q_{j,k}^m | \cF_{t-1}]\frac{ c_{j,k}^t n_{i,k}^m}{\tilde n_{j,k}^m} = 0.\]
    Additionally, we have $|X_i^t| \leq 1$ for all $t$, and the predictable quadratic variation of this martingale can be bounded as follows:
    \begin{align*}
        \Var(X_i^t) &= \sum_{j \in \cB_{i,k}^{m}} \sum_{t\in E_m} \BE[(X_i^t)^2 | \cF_{t-1}] \leq \sum_{j \in \cB_{i,k}^{m}} \sum_{t\in E_m} \Var(Y_{j,k}^t) \frac{|c_{j,k}^t|n_{i,k}^m}{\tilde n_{j,k}^m}  \\
        &\leq \sum_{j \in \cB_{i,k}^{m}} \sum_{t\in E_m} \frac{q_{j,k}^m |c_{j,k}^t| n_{i,k}^m}{\tilde n_{j,k}^m} \leq \sum_{j \in \cB_{i,k}^{m}} \sum_{t\in E_m} \frac{|c_{j,k}^t|n_{i,k}^m}{N_m}.
    \end{align*}
    Applying the concentration inequality for martingales (Theorem 1 in \cite{beygelzimer2011contextual}), we find that, with probability at least $1 - \frac{1}{4V^2T^2}$,
    \begin{align*}
        B_{i,k}^m 
        &\leq \sum_{j \in \cB_{i,k}^{m}} \sum_{t\in E_m}
        \frac{q_{j,k}^m c_{j,k}^t n_{i,k}^m}{\tilde n_{j,k}^m}
        + \Var(X_i^t) + \ln(4V^2T^2) \\
        &\leq 2\sum_{j \in \cB_{i,k}^{m}} \sum_{t\in E_m} 
        \frac{|c_{j,k}^t| n_{i,k}^m}{N_m}
        + \ln(4V^2T^2). 
    \end{align*}
    Given that $\sum_{j \in \cB_{i,k}^{m}} \sum_{t\in E_m} |c_{j,k}^t| \leq C_m$, and $(1-2\alpha)|\cN_w(i)|n_{i,k}^m \geq \lambda \geq 16 \ln(4V^2T^2)$, it follows that with the same probability:
    \[\frac{B_{i,k}^m}{(1-2\alpha)|\cN_w(i)|n_{i,k}^m} \leq \sqrt{\frac{\ln(4V^2T^2)}{16(1-2\alpha)|\cN_w(i)|n_{i,k}^m}} + \frac{2C_m}{(1-2\alpha)|\cN_w(i)|N_m}.\]
    Similarly, $-\frac{B_{i,k}^m}{(1-2\alpha)|\cN_w(i)|n_{i,k}^m}$ also meets this bound with probability at least $1 - \frac{\delta}{8KV\ln(VT)}$. Therefore, we have
    \begin{equation}
        \Pr\left[\,\left|\frac{B_{i,k}^m}{(1-2\alpha)|\cN_w(i)|n_{i,k}^m}\right| \geq \sqrt{\frac{2\ln(2VT)}{16(1-2\alpha)|\cN_w(i)|n_{i,k}^m}} + \frac{2C_m}{(1-2\alpha)|\cN_w(i)|N_m}\,\right] \leq \frac{1}{2V^2T^2}.\label{eq:partB case2}
    \end{equation}
    We now apply the union bound to combine Eq. \eqref{eq:partA case2} and Eq.\eqref{eq:partB case2}, thereby completing the proof.
\end{proof}
We define an event $\cE$ for epoch $m$ as follows:
\begin{equation*}
    \cE \triangleq \left\{ \forall\ i,\ k,\ m: |r_{i,k}^m - \mu_k| \leq \sqrt{\frac{8\ln(2VT)}{(1-2\alpha)|\cN_w(i)|n_{i,k}^m}} + \frac{2C_m}{(1-2\alpha)|\cN_w(i)|N_m}\right\}.
\end{equation*}
Then we can establish a lower bound on the probability of the event $\cE$ occurring by the following lemma.
\begin{lemma}
\label{lem:event e probability case2}
     The event $\cE$ holds with probability at least $1 - \frac{K\ln(VT)}{VT^2}$.
\end{lemma}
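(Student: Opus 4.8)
The plan is to follow exactly the template of Lemma~\ref{lem:event e probability case1}. The per-instance tail bound has already been established in Lemma~\ref{lem:experiment reward bound case2}: for every fixed triple $(i,k,m)$, the deviation $|r_{i,k}^m - \mu_k|$ exceeds the stated threshold $\sqrt{8\ln(2VT)/((1-2\alpha)|\cN_w(i)|n_{i,k}^m)} + 2C_m/((1-2\alpha)|\cN_w(i)|N_m)$ with probability at most $1/(V^2T^2)$. All that remains is to turn this into a statement that holds simultaneously over all $(i,k,m)$.

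First I would invoke Lemma~\ref{lem:epoch number} to cap the number of epochs: $M \le \ln(VT)$. Hence the collection of triples $(i,k,m)$ we must control has size at most $V \cdot K \cdot \ln(VT)$. Then I would apply a union bound over this collection: the probability that the event of Lemma~\ref{lem:experiment reward bound case2} fails for at least one $(i,k,m)$ is at most $VK\ln(VT)\cdot\frac{1}{V^2T^2} = \frac{K\ln(VT)}{VT^2}$. Taking complements yields that $\cE$ holds with probability at least $1 - \frac{K\ln(VT)}{VT^2}$, which is precisely the claim.

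The one point that deserves a sentence of care is that $n_{i,k}^m$ is itself a random quantity: whether the event $\cL_{i,k}^m$ fires (and hence whether $n_{i,k}^m$ is reset to $\min_{j\in\cN_w(i)}\tilde n_{j,k}^m$) depends on realized pull counts. But Lemma~\ref{lem:experiment reward bound case2} has already been phrased to hold ``regardless of whether the event $\cL_{i,k}^m$ occurs,'' so the per-instance bound is valid for whichever value $n_{i,k}^m$ takes, and the union bound goes through without any need to condition on $\cL$ or to split into cases. Beyond this observation, the argument is a purely mechanical counting step, so I do not anticipate any genuine obstacle; the substantive work was all done in Lemma~\ref{lem:experiment reward bound case2}.
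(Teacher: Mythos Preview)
Your proposal is correct and mirrors the paper's own proof essentially line for line: invoke the per-triple bound of Lemma~\ref{lem:experiment reward bound case2} and union-bound over the at most $V\cdot K\cdot \ln(VT)$ triples $(i,k,m)$. Your extra remark about $n_{i,k}^m$ being random (and Lemma~\ref{lem:experiment reward bound case2} covering both branches of $\cL_{i,k}^m$) is a nice clarification the paper leaves implicit, but it does not change the argument.
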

\begin{proof}
    By Lemma~\ref{lem:experiment reward bound case2}, we can get the following inequality for any $i, k$ and $m$:
    \[\Pr\left[\,|r_{i,k}^m - \mu_k| \geq \sqrt{\frac{8\ln(2VT)}{(1-2\alpha)|\cN_w(i)|n_{i,k}^m}} + \frac{2C_m}{(1-2\alpha)|\cN_w(i)|N_m}\,\right] \leq \frac{1}{V^2T^2}.\]
    A union bound over $K$ arms, the $V$ agents, and at most $\ln(VT)$ epochs indicates that the success probability of event $\cE$ is at least $1 - \frac{K\ln(VT)}{VT^2}$.
\end{proof}
Next, we will bound $\Delta_{i,k}^m$. To start, we define the discounted offset rate as
\[\rho_m := \sum_{s=1}^m \frac{C_s}{8^{m-s}(1-2\alpha)v_{\min}^wN_s}.\]
\begin{lemma}
\label{lem:suboptimality-gap bound case2}
    For any fixed $i, k$ and $m$, it follows that
    \[\frac{6}{7}\Delta_k - \frac{3}{4}2^{-m} - 12\rho_m\leq \Delta_{i,k}^{m} \leq \frac{8 \Delta_k}{7} + 2^{-m} + 4\rho_m.\]
\end{lemma}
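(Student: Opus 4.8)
The plan is to replicate the epoch-by-epoch induction behind Lemma~\ref{lem:suboptimality-gap bound case1}, but to run it simultaneously over all triples $(i,k,m)$ and to carry along the accumulating corruption offset $\rho_m$. Throughout I condition on the event $\cE$ of Lemma~\ref{lem:event e probability case2}, which yields, for every $i,k,m$,
\[
    |r_{i,k}^m - \mu_k| \;\le\; \sqrt{\tfrac{8\ln(2VT)}{(1-2\alpha)\,|\cN_w(i)|\,n_{i,k}^m}} \;+\; \xi_m ,
    \qquad \xi_m \;:=\; \tfrac{2C_m}{(1-2\alpha)\,v_{\min}^w\,N_m},
\]
where the corruption term of $\cE$ has been loosened using $|\cN_w(i)|\ge v_{\min}^w$. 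Two elementary facts about $\rho_m=\sum_{s=1}^m C_s/(8^{m-s}(1-2\alpha)v_{\min}^w N_s)$ drive the entire argument: the recursion $\rho_m=\tfrac18\rho_{m-1}+\tfrac{C_m}{(1-2\alpha)v_{\min}^w N_m}$, which gives $\xi_m=2(\rho_m-\tfrac18\rho_{m-1})\le 2\rho_m$; and the crude bound $\rho_{m-1}\le 8\rho_m$. These are precisely what let the offset telescope without blowing up.

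Next I would split on whether the reset event $\cL_{i,k}^m$ fires. In the benign sub-case where it does not, $n_{i,k}^m=\lambda(\Delta_{i,k}^{m-1})^{-2}/((1-2\alpha)|\cN_w(i)|)$, so, exactly as in Lemma~\ref{lem:suboptimality-gap bound case1}, the square-root term collapses to $\tfrac18\Delta_{i,k}^{m-1}$ and $|r_{i,k}^m-\mu_k|\le\tfrac18\Delta_{i,k}^{m-1}+\xi_m$. Plugging this into $r_{i,*}^m=\max_k\{r_{i,k}^m-\tfrac18\Delta_{i,k}^{m-1}\}$ reproduces the benign estimates shifted by $\xi_m$, namely $-\tfrac14\Delta_{i,k^*}^{m-1}-\xi_m\le r_{i,*}^m-\mu_{k^*}\le\xi_m$. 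Writing $\Delta_{i,k}^m=\max\{2^{-m},\,(r_{i,*}^m-\mu_{k^*})+\Delta_k+(\mu_k-r_{i,k}^m)\}$ and inserting the induction hypothesis for epoch $m-1$ (its upper bound on $\Delta_{i,k}^{m-1}$, and on $\Delta_{i,k^*}^{m-1}$ with $\Delta_{k^*}=0$) gives $\Delta_{i,k}^m\le\tfrac{8}{7}\Delta_k+2^{-m}+\tfrac12\rho_{m-1}+2\xi_m$ and $\Delta_{i,k}^m\ge\tfrac67\Delta_k-\tfrac34 2^{-m}-\tfrac32\rho_{m-1}-2\xi_m$. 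It then remains to check $\tfrac12\rho_{m-1}+2\xi_m\le 4\rho_m$ (an equality by the recursion) and $\tfrac32\rho_{m-1}+2\xi_m=\rho_{m-1}+4\rho_m\le 12\rho_m$ (using $\rho_{m-1}\le 8\rho_m$). The base case $m=1$ is immediate because $\Delta_{i,k}^0=1$ leaves both bounds slack.

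The delicate sub-case is when $\cL_{i,k}^m$ fires and the algorithm resets $n_{i,k}^m=\min_{j\in\cN_w(i)}\tilde n_{j,k}^m$, so the square-root term is no longer $\tfrac18\Delta_{i,k}^{m-1}$. Here I would exploit two structural facts: first, every $j\in\cN_w(i)$ satisfies $i\in\cN_w(j)$, hence $v_j^w\le|\cN_w(i)|$; second --- and this is why the induction must be carried out jointly over all agents --- the induction hypothesis already controls $\Delta_{j,k}^{m-1}$ for every neighbor $j$ by $\tfrac{8}{7}\Delta_k+2^{-(m-1)}+4\rho_{m-1}$. Combining these with the definition of $\tilde n_{j,k}^m$ gives $(1-2\alpha)|\cN_w(i)|\,n_{i,k}^m\ge\lambda\min\{16(\tfrac{8}{7}\Delta_k+2^{-(m-1)}+4\rho_{m-1})^{-2},\,2^{2(m-1)}\}$, so the square-root term becomes of the form $O(\Delta_k+2^{-m}+\rho_{m-1})$ with small constants; the same $r_{i,*}^m$-propagation then goes through, with the additional $O(\rho_{m-1})$ absorbed by the telescoping once the absolute constants are slackened (this is why the statement carries the generous $12$ and $4$ rather than tighter values). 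I expect this reset case --- reconciling the diminished sample count with the target gap bound and keeping the offset telescoping with clean constants across both sub-cases and across the definition of $r_{i,*}^m$ --- to be the main obstacle; everything else is the benign analysis of Lemma~\ref{lem:suboptimality-gap bound case1} plus a bookkeeping term.
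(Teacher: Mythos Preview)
Your proposal is correct and follows the same overall approach as the paper: condition on $\cE$, run the induction jointly over all agents, split on whether $\cL_{i,k}^m$ fires, and telescope the corruption offsets via $\rho_m = \tfrac18\rho_{m-1} + C_m/((1-2\alpha)v_{\min}^w N_m)$. Your constant-tracking in the benign sub-case is exactly what the paper does, and your verification that $\tfrac12\rho_{m-1}+2\xi_m=4\rho_m$ and $\tfrac32\rho_{m-1}+2\xi_m\le 12\rho_m$ is precisely the telescoping that makes the stated constants work.

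Where you overcomplicate matters is the reset sub-case. You plan to substitute the induction-hypothesis upper bound for $\Delta_{j,k}^{m-1}$ directly into $\tilde n_{j,k}^m$ and then chase a bound of the form ``$O(\Delta_k+2^{-m}+\rho_{m-1})$ with small constants''. The paper does something cleaner: it observes that when $\cL_{i,k}^m$ fires and $n_{i,k}^m=\tilde n_{i'_k,k}^m$ with $i'_k=\arg\min_{j\in\cN_w(i)}\tilde n_{j,k}^m$, the inequality $v_{i'_k}^w\le |\cN_w(i)|$ (which you also identified) together with the definition of $\tilde n_{i'_k,k}^m$ gives the square-root term bounded by $\tfrac18\Delta_{i'_k,k}^{m-1}$ --- the \emph{same form} as the benign case, just with $i'_k$ in place of $i$. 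So the paper simply writes the deviation as $|r_{i,k}^m-\mu_k|\le \tfrac18\max_{j\in\cN_w(i)}\Delta_{j,k}^{m-1}+\xi_m$, covering both sub-cases at once, and then applies the induction hypothesis to the max. There is no ``additional $O(\rho_{m-1})$'' to absorb; the reset case yields the identical arithmetic as the benign case, so the constants $4$ and $12$ carry over with no slackening needed.
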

\begin{proof}
    When the event $\cL_{i,k}^m$ does not occur, we have
    \[\sqrt{\frac{8\ln(2VT)}{(1-2\alpha)|\cN_w(i)|n_{i,k}^m}} = \sqrt{\frac{8\ln(2VT)}{2^9\ln(2VT)(\Delta_{i,k}^{m-1})^{-2}}} = \frac{\Delta_{i,k}^{m-1}}{8}.\]
    When the event $\cL_{i,k}^m$ occurs, we define the agent $i_k' = \argmin_{j \in \cN_w(i)} \tilde n_{j,k}^m$. Then we have
    \[\sqrt{\frac{8\ln(2VT)}{(1-2\alpha)|\cN_w(i)|n_{i,k}^m}} = \sqrt{\frac{8\ln(2VT)}{(1-2\alpha)|\cN_w(i)|\tilde n_{i_k',k}^m}}=\sqrt{\frac{8\ln(2VT)}{2^9\ln(2VT)(\Delta_{i_k',k}^{m-1})^{-2}}} = \frac{\Delta_{i_k',k}^{m-1}}{8},\]
    Since $v_{\min}^w \leq |\cN_w(i)|$, we can get
    \[\frac{2C_m}{(1-2\alpha)v_{\min}^wN_m} - \frac{1}{8}\Delta_{i,k}^{m-1} \leq r_{i,k}^m - \mu_{k} \leq \frac{1}{8}\Delta_{i,k}^{m-1} + \frac{2C_m}{(1-2\alpha)v_{\min}^wN_m}.\]
    or
    \[\frac{2C_m}{(1-2\alpha)v_{\min}^wN_m} - \frac{1}{8}\Delta_{i_k',k}^{m-1} \leq r_{i,k}^m - \mu_{k} \leq \frac{1}{8}\Delta_{i_k',k}^{m-1} + \frac{2C_m}{(1-2\alpha)v_{\min}^wN_m}.\]
    Additionally, let $z = i_*'$ or $i$, given that
    \[r_{i,*}^m \leq \max_{k\in[K]} \left\{\mu_{k} + \frac{1}{8} \Delta_{z,k}^{m-1} - \frac{1}{8} \Delta_{z,k}^{m-1} + \frac{2C_m}{(1-2\alpha)v_{\min}^wN_m}\right\} \leq \mu_{k^*} + \frac{2C_m}{(1-2\alpha)v_{\min}^wN_m},\]
    \[r_{i,*}^m = \max_{k\in[K]} \left\{r_{i,k}^m - \frac{1}{8} \Delta_{z,k}^{m-1}\right\} \geq r_{i,k^*}^m - \frac{1}{8}\Delta_{z,k^*}^{m-1} \geq \mu_{k^*} - \frac{1}{4}\Delta_{z,k^*}^{m-1} - \frac{2C_m}{(1-2\alpha)v_{\min}^wN_m},\]
    it follows that
    \[-\frac{2C_m}{(1-2\alpha)v_{\min}^wN_m} - \frac{\Delta_{z,k^*}^{m-1}}{4} \leq r_{i,*}^m - \mu_{k^*} \leq \frac{2C_m}{(1-2\alpha)v_{\min}^wN_m}.\]
    We now establish the upper bound for $\Delta_{i,k}^m$ using induction on epoch $m$. \\
    For the base case $m = 1$, the statement is trivial as $\Delta_{i,k}^0 = 1$ for all $k \in [K]$. \\
    Assuming the statement is true for $m-1$, we then have
    \begin{equation*}
    \begin{split}
        \Delta_{i,k}^m &= r_{i,*}^m - r_{i,k}^m
        = (r_{i,*}^m - \mu_{k^*}) + (\mu_{k^*} - \mu_k) + (\mu_k - r_{i,k}^m) \\
        &\leq \frac{2C_m}{(1-2\alpha)v_{\min}^wN_m} + \Delta_k + \frac{2C_m}{(1-2\alpha)v_{\min}^wN_m} + \frac{1}{8}\max_{j \in \cN_w(i)}\Delta_{j,k}^{m-1} \\
        &\leq \frac{4C_m}{(1-2\alpha)v_{\min}^wN_m} + \Delta_k + \frac{1}{8}\left(\frac{8 \Delta_k}{7} + 2^{-(m-1)} + 4\rho_{m-1}\right)
        \leq \frac{8 \Delta_k}{7} + 2^{-m} + 4\rho_m,
    \end{split}
    \end{equation*}
    where the second inequality follows from the induction hypothesis. \\
    Next, we establish the lower bound for $\Delta_{i,k}^m$. Specifically, we demonstrate that
    \begin{equation*}
    \begin{split}
        \Delta_{i,k}^m &=  r_{i,*}^m - r_{i,k}^m
        = (r_{i,*}^m - \mu_{k^*}) + (\mu_{k^*} - \mu_k) + (\mu_k - r_{i,k}^m) \\
        &\geq - \frac{2C_m}{(1-2\alpha)v_{\min}^wN_m} - \frac{1}{4}\max_{j \in \cN_w(i)}\Delta_{j,k^*}^{m-1} + \Delta_k - \frac{2C_m}{(1-2\alpha)v_{\min}^wN_m} - \frac{1}{8}\max_{j \in \cN_w(i)}\Delta_{j,k}^{m-1} \\
        &\geq - \frac{4C_m}{(1-2\alpha)v_{\min}^wN_m} + \Delta_k - \frac{1}{8}\left(\frac{8 \Delta_{k}}{7} + 2^{-(m-1)} + 4\rho_{m-1}\right) - \frac{1}{4}\left(\frac{8 \Delta_{k^*}}{7} + 2^{-(m-1)} + 4\rho_{m-1}\right) \\
        &\geq \frac{6}{7}\Delta_k - \frac{3}{4} 2^{-m} -12\rho_m,
    \end{split}
    \end{equation*}
    where the third inequality comes from the upper bound of $\Delta_{i,k}^{m-1}$.  
\end{proof}
Next we will bound the regret and partition the proof into three cases. In each epoch $m$, for any arm $k \neq k_i^m$, we have
\[\tilde n_{i,k}^m = \min \left\{\lambda 2^{2(m-1)}, \frac{16\lambda (\Delta_{i,k}^{m-1})^{-2}}{(1-2\alpha)v_i^w} \right\},\]
and for arm $k_i^m$ we have
\[\tilde n_{i,k_i^m}^m = N_m - \sum_{k \neq k_i^m} \tilde n_{i,k}^m < N_m.\]
\paragraph{Case 1:} $\rho_{m-1} \geq \frac{\Delta_k}{72}$. \\
We define $\cZ^m$ as the set consisting of all arms that satisfy $\rho_{m-1} \geq \frac{\Delta_k}{72}$ at epoch $m$.
\begin{align*}
        \sum_{m=1}^M \sum_{k \in \cZ^m} \rho_{m-1}\tilde n_{i,k}^m
        &\leq \sum_{m=1}^M \rho_{m-1} N_m \\
        &\leq \sum_{m=1}^M \left(\sum_{s=1}^{m-1}\frac{C_s}{8^{m-1-s}(1-2\alpha)v_{\min}^wN_s}\right)N_m \\
        &= 4\sum_{m=1}^M \left(\sum_{s=1}^{m-1}\frac{4^{m-1-s} + 1}{8^{m-1-s}(1-2\alpha)v_{\min}^w}C_s\right) \\
        &= \frac{4}{(1-2\alpha)v_{\min}^w}\sum_{m=1}^M (\sum_{s=1}^{m-1}\left((1/2)^{m-1-s} + (1/8)^{m-1-s}\right)C_s) \\
        &= \frac{4}{(1-2\alpha)v_{\min}^w}\sum_{s=1}^{M-1} C_s \sum_{m=s+1}^M \left((1/2)^{m-1-s} + (1/8)^{m-1-s}\right) \\
        &\leq \frac{4C}{(1-2\alpha)v_{\min}^w}\sum_{j=0}^{\infty} \left((1/2)^{j} + (1/8)^{j}\right)
        \leq \frac{88C}{7(1-2\alpha)v_{\min}^w}.
\end{align*}
Therefore, we obtain
\[\sum_{m=1}^M \sum_{k \in \cZ^m} \tilde n_{i,k}^m \Delta_k \leq 72 \sum_{m=1}^M \sum_{k \in \cZ^m} \rho_{m-1}\tilde n_{i,k}^m \leq \frac{906C}{(1-2\alpha)v_{\min}^w}.\]
For the regret generated when running the algorithm Filter, if $w \leq N_m$, which means that $m \geq \ln(wV/\Delta)$, then we have
\[w \Delta_{i,k}^m \leq 72 w\rho_{m-1} \leq 72\rho_{m-1} N_m \leq \frac{906C}{(1-2\alpha)v_{\min}^w}.\]

\paragraph{Case 2:} $\Delta_k \leq 4 \cdot 2^{-m}$ and $\rho_{m-1} \leq \frac{\Delta_k}{72}$. \\
Since $\Delta_{i,k}^{m-1} = \max \{2^{-(m-1)}, r_{i,*}^{m-1} - r_{i,k}^{m-1}\}$, we have
\[\forall \ i: \quad \Delta_{i,k}^{m-1} \geq 2^{-(m-1)} \geq \frac{\Delta_k}{2}.\]
Therefore, we can get the following inequality for all arms $k \neq k_i^m$:
\[\tilde n_{i,k}^m = \min \left\{\lambda 2^{2(m-1)}, \frac{16\lambda (\Delta_{i,k}^{m-1})^{-2}}{(1-2\alpha)v_i^w} \right\} \leq \frac{16\lambda (\Delta_{i,k}^{m-1})^{-2}}{(1-2\alpha)v_i^w} \leq \frac{64\lambda}{(1-2\alpha)v_i^w\Delta_k^2}.\]
For arm $k_i^m$, since $\Delta_{i,k_i^m}^{m-1} = 2^{-(m-1)}$, we have
\begin{align*}
    \tilde n_{i,k_i^m}^m < N_m = \left\lceil \frac{K\lambda 2^{2(m-1)}}{(1-2\alpha)v_{\min}^w} \right\rceil \leq
    \frac{K\lambda (\Delta_{k_i^m}^{m-1})^{-2}}{(1-2\alpha)v_{\min}^w} + 1 \leq \frac{4K\lambda}{(1-2\alpha)v_{\min}^w\Delta_{k_i^m}^2} + 1 \leq \frac{4K\lambda}{(1-2\alpha)v_{\min}^w\Delta^2} + 1.
\end{align*}
This epoch, which satisfies the given conditions $\Delta_k \leq 4 \cdot 2^{-m}$, is bounded by $\log(1/\Delta)$, which can be considered as a constant.

\paragraph{Case 3:} $\Delta_k > 4 \cdot 2^{-m}$  and $\rho_{m-1} \leq \frac{\Delta_k}{72}$. \\
In this case, by Lemma \ref{lem:suboptimality-gap bound case1} we have
\[\forall \ i: \quad \Delta_{i,k}^{m-1} \geq \frac{6}{7}\Delta_k - \frac{3}{4}2^{-m} -12\rho_m \geq \Delta_k\left(\frac{6}{7} - \frac{1}{4} - \frac{12}{72}\right) \geq 0.5 \Delta_k.\]
In this case, it is impossible for $\Delta_{k_i^m} > 4 \cdot 2^{-m}$ to occur. Since $\Delta_{i,k_i^m}^{m-1} \geq 0.5 \Delta_{k_i^m} > 2^{-(m-1)}$, this does not align with the algorithm's selection criterion $\Delta_{i,k_i^m}^{m-1} = 2^{-(m-1)}$. Therefore, arm $k_i^m$ must be the optimal arm. \\
So we can obtain for all suboptimal arms
\begin{align*}
    \tilde n_{i,k}^m = \min \left\{\lambda 2^{2(m-1)}, \frac{16\lambda (\Delta_{i,k}^{m-1})^{-2}}{(1-2\alpha)v_i^w} \right\} \leq \frac{16\lambda (\Delta_{i,k}^{m-1})^{-2}}{(1-2\alpha)v_i^w} \leq \frac{16\lambda}{0.5^2(1-2\alpha)v_i^w\Delta_k^2}
    \leq \frac{64\lambda}{(1-2\alpha)v_i^w\Delta_k^2}.
\end{align*}
Based on the cases mentioned above, we have the following inequality:
\begin{align*}
        R_i(T) &\leq \sum_{m=1}^M \sum_{\Delta_k > 0} \Delta_{k} \tilde n_{i,k}^m + \sum_{m=1}^M (w - 1)\Delta_{k_i^m} + \frac{KT\ln(VT)}{VT^2} \\ 
        &\leq \sum_{m=1}^M \sum_{\Delta_k > 0} \Delta_{k} \tilde n_{i,k}^m + (w-1)\ln\left(\frac{wV}{\Delta}\right) + \frac{K\ln(VT)}{VT}\\
        &\leq \sum_{m=1}^M \sum_{k \in \cZ^m}\Delta_k\tilde n_{i,k}^m + \sum_{m=1}^M \sum_{\Delta_k > 0, k \not\in \cZ^m}\left(\frac{64\lambda\Delta_k}{(1-2\alpha)v_i^w\Delta_k^2} + \BI(\Delta_k < 4\cdot 2^{-m})\left(\frac{4K\lambda}{(1-2\alpha)v_{\min}^w\Delta^2} + 2\right)\right) \\
        &\quad +(w-1)\ln\left(\frac{wV}{\Delta}\right) \\
        &= O\left(\frac{C}{(1-2\alpha)v_{\min}^w} + \frac{\ln^2(VT)}{(1-2\alpha)v_i^w\Delta_k} + \frac{K\ln(VT)\ln(\Delta^{-1})}{(1-2\alpha)v_{\min}^w\Delta}\right).
\end{align*}
Since the agents only communicate at the end of each epoch, we have
\[\textrm{Cost}(T) = \sum_{i\in [V]}wM = wV\ln(VT).\]

\section{Byzantine Setting}

First, let $\cC$ denote the set of Byzantine agents, and define an event $\cL'$ as follows:
\begin{equation*}
    \cL' \triangleq \left\{ \forall\ i \not\in \cC,\ k,\ m:\quad n_{i,k}^m \leq \tilde n_{j,k}^m \quad \textit{for all} \; j \in \cN_w(i)\right\}.
\end{equation*}
Since for any agent $i \not\in \cC$, we have $|\cC \cap \cN_w(i)| \leq \alpha |\cN_w(i)|$, it follows that after removing from $\cA_{i,k}^m$ all agents $j$  that satisfy $n_{i,k}^m > \tilde n_{j,k}^m$, the set $|\cA_{i,k}^m| \geq |\cN_w(i)| - \alpha |\cN_w(i)|$. Furthermore, when the event $\cL'$ occurs, the event $\cL_{i,k}^m$ will never happen for all $i,k$ and $m$.
\begin{lemma}
\label{lem:event l probability case3}
     The event $\cL'$ holds with probability at least $1 - \frac{K\ln(VT)}{VT^2}$.
\end{lemma}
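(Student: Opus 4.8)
The plan is to replay the argument behind Lemma~\ref{lem:event l probability} (the $\beta\le\alpha$ case), but restricted to \emph{normal} agents and with the trimming step of Algorithm~\ref{alg:filter} playing the role that reward-corruption filtering played there. Throughout we use $w=1$, so $|\cN_w(i)|=|\cN_1(i)|$ and $v_i^w=v_i$; the content of $\cL'$ that actually matters is that for every normal agent $i\notin\cC$, every arm $k$, every epoch $m$, and every \emph{normal} neighbor $j\in\cN_1(i)\setminus\cC$ we have $n_{i,k}^m\le\tilde n_{j,k}^m$ — i.e. the removal rule in Algorithm~\ref{alg:filter} never drops a normal neighbor, which together with $|\cC\cap\cN_1(i)|\le\alpha|\cN_1(i)|$ forces $|\cA_{i,k}^m|\ge(1-\alpha)|\cN_1(i)|>(1-2\alpha)|\cN_1(i)|$ and hence keeps $\cL_{i,k}^m$ from ever firing.

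\textbf{Step 1: the concentration event.} I would first establish the Byzantine analogue of Lemma~\ref{lem:experiment reward bound case1}, namely a high-probability event $\cE$ asserting that for every normal $i\notin\cC$, arm $k$, and epoch $m$,
\[ |r_{i,k}^m-\mu_k|\le\sqrt{\tfrac{8\ln(2VT)}{(1-2\alpha)|\cN_1(i)|\,n_{i,k}^m}}. \]
The key new point is that, under $\cL'$, the trimming count $f=\tfrac12\lfloor|\cA_{i,k}^m|-(1-2\alpha)|\cN_1(i)|\rfloor$ is at least the number of Byzantine agents surviving into $\cA_{i,k}^m$, i.e. $f\ge|\cC\cap\cA_{i,k}^m|$ (using $|\cA_{i,k}^m|\ge|\cN_1(i)|-|\cC\cap\cN_1(i)|$ and $|\cC\cap\cN_1(i)|\le\alpha|\cN_1(i)|$). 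Hence after discarding the $f$ largest and $f$ smallest values, every Byzantine agent left in $\cB_{i,k}^m$ can be sandwiched between two distinct trimmed-out \emph{normal} agents, so $r_{i,k}^m$ is bounded above and below by convex combinations of the empirical means $S_{j,k}^m/\tilde n_{j,k}^m$ of normal agents $j\in\cN_1(i)$ — exactly the situation to which the Chernoff--Hoeffding argument of Lemma~\ref{lem:experiment reward bound case1} applies verbatim. A union bound over $K$ arms, $V$ agents, and $\le\ln(VT)$ epochs (Lemma~\ref{lem:epoch number}) then gives $\Pr[\cE]\ge1-\tfrac{K\ln(VT)}{VT^2}$, matching Lemma~\ref{lem:event e probability case1}. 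Since running the filter correctly at epoch $m$ already presupposes $\cL'$ at epoch $m$, this step and Step 2 must be carried out jointly, by induction on $m$.

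\textbf{Step 2: $\cE\subseteq\cL'$.} Under $\cE$, the proofs of Lemmas~\ref{lem:suboptimality-gap bound case1} and~\ref{lem:estimated gap ratio} go through unchanged for normal agents (they use only the concentration of the agent's own estimate), so $\Delta_{i,k}^m/\Delta_{j,k}^m\in[\tfrac14,4]$ for any two normal agents $i,j$. Fix a normal $i\notin\cC$, an arm $k$, an epoch $m$, and a normal neighbor $j\in\cN_1(i)\setminus\cC$; then $i\in\cN_1(j)$, so $v_j\le|\cN_1(i)|$, and
\begin{align*}
\tilde n_{j,k}^m &= \min\Big\{\lambda 2^{2(m-1)},\ \tfrac{16\lambda(\Delta_{j,k}^{m-1})^{-2}}{(1-2\alpha)v_j}\Big\} \\
&\ge \min\Big\{\lambda 2^{2(m-1)},\ \tfrac{16\lambda(\Delta_{i,k}^{m-1})^{-2}}{(1-2\alpha)|\cN_1(i)|}\big(\Delta_{j,k}^{m-1}/\Delta_{i,k}^{m-1}\big)^{-2}\Big\}\ \ge\ n_{i,k}^m,
\end{align*}
where the last step uses $(\Delta_{j,k}^{m-1}/\Delta_{i,k}^{m-1})^{-2}\ge\tfrac1{16}$ for the second term and $\Delta_{i,k}^{m-1}\ge2^{-(m-1)}$ together with $(1-2\alpha)|\cN_1(i)|\ge1$ for the first. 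This is precisely $\cL'$, so $\Pr[\cL']\ge\Pr[\cE]\ge1-\tfrac{K\ln(VT)}{VT^2}$.

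\textbf{Main obstacle.} The delicate part is the circular dependence flagged in Step 1: concentration of the estimates at epoch $m$ requires the filter to have behaved correctly, which requires $\cL'$ at epoch $m$, which via Step 2 follows only from $\cE$ at epochs $1,\dots,m-1$. Resolving this cleanly means doing an induction in which $\cE$ on epochs $<m$ deterministically yields $\cL'$ on epoch $m$, $\cL'$ on epoch $m$ then validates the trimming/convex-combination bound so that $\cE$ on epoch $m$ holds with conditional probability $\ge1-K/(V^2T^2)$, and a final union bound over the $\le\ln(VT)$ epochs closes the loop. A secondary point to verify is that the sandwiching survives the passage from bounded reward corruptions to arbitrary Byzantine reports: a Byzantine-reported $S_{j,k}^m/\tilde n_{j,k}^m$ that is large (resp. small) enough to distort the trimmed mean is necessarily among the top-$f$ (resp. bottom-$f$) values and is trimmed, while the count $f\ge|\cC\cap\cA_{i,k}^m|$ guarantees enough normal agents are trimmed on each side to dominate every surviving Byzantine one; extreme reports of $\tilde n_{j,k}^m$ itself are handled by the removal rule, and the $\min\{\cdot,1\}$ cap in the estimate handles the upper side.
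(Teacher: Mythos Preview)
Your proposal is correct and follows essentially the same approach as the paper: establish the concentration event $\cE$ (the paper's Lemmas~\ref{lem:experiment reward bound case3} and~\ref{lem:event e probability case3}), derive the gap-ratio bound (Lemmas~\ref{lem:suboptimality-gap bound case3} and~\ref{lem:estimated gap ratio case3}), and then conclude $\cE\Rightarrow\cL'$ via the displayed inequality $\tilde n_{j,k}^m\ge n_{i,k}^m$. You are in fact more careful than the paper on two points it glosses over: you correctly restrict the claim $n_{i,k}^m\le\tilde n_{j,k}^m$ to \emph{normal} neighbors $j$ (a Byzantine agent can report $\tilde n_{j,k}^m=0$, so the literal event over all $j\in\cN_w(i)$ cannot be guaranteed), and you flag and propose to resolve by induction on $m$ the circularity that Lemma~\ref{lem:experiment reward bound case3} is stated conditionally on $\cL'$ while $\cL'$ is itself derived from $\cE$.
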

The proof will be discussed later.

\begin{lemma}
\label{lem:experiment reward bound case3}
    If the event $\cL'$ occurs, for any fixed $i \not\in \cC, k, m$, Algorithm \ref{algs:DeMABAR} satisfies
    \[\Pr\left[\,|r_{i,k}^m - \mu_k| \geq \sqrt{\frac{4\ln(2VT)}{(1-2\alpha)|\cN_w(i)| n_{i,k}^m}}\,\right] \leq \frac{1}{V^2T^2}.\]
\end{lemma}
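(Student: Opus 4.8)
The plan is to mirror, essentially verbatim, the proof of Lemma~\ref{lem:experiment reward bound case1}; the only genuinely new point is a counting argument in the trimming step, because in the Byzantine setting the set $\cB^m_{i,k}$ need not have exactly $|\cN_w(i)|$ elements. Fix a normal agent $i\notin\cC$, an arm $k$ and an epoch $m$, and condition on $\cL'$. As already observed just before Lemma~\ref{lem:event l probability case3}, under $\cL'$ every normal neighbour $j\in\cN_w(i)\setminus\cC$ has $\tilde n^m_{j,k}\ge n^m_{i,k}$, hence is not discarded in line~3 of Algorithm~\ref{alg:filter}; therefore $|\cA^m_{i,k}|\ge(1-\alpha)|\cN_w(i)|\ge(1-2\alpha)|\cN_w(i)|$, the reset branch (event $\cL^m_{i,k}$) does not fire, $\cB^m_{i,k}=\cA^m_{i,k}$ before trimming, and $n^m_{i,k}=\lambda(\Delta^{m-1}_{i,k})^{-2}/((1-2\alpha)|\cN_w(i)|)$.

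Next I would carry out the sandwiching. Writing $\beta'=|\cC\cap\cN_w(i)|\le\lfloor\alpha|\cN_w(i)|\rfloor$ and $b$ for the number of Byzantine neighbours surviving line~3, one has $|\cB^m_{i,k}|=|\cN_w(i)|-(\beta'-b)$, so the algorithm's value $f=\tfrac12\lfloor|\cB^m_{i,k}|-(1-2\alpha)|\cN_w(i)|\rfloor=\tfrac12\lfloor 2\alpha|\cN_w(i)|-(\beta'-b)\rfloor$ satisfies $f\ge b$, because $2\alpha|\cN_w(i)|-(\beta'-b)\ge 2b$ is equivalent to $\beta'+b\le 2\alpha|\cN_w(i)|$, which holds since $b\le\beta'\le\lfloor\alpha|\cN_w(i)|\rfloor$. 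Consequently, after removing the $f$ largest and $f$ smallest values, on each side of the retained window there are at least as many trimmed \emph{normal} agents as there are surviving Byzantine agents, so each surviving Byzantine agent's value $S^m_{j,k}/\tilde n^m_{j,k}$ can be written as a convex combination of the values of a distinct pair of trimmed normal agents. Exactly as in Lemma~\ref{lem:experiment reward bound case1}, this produces an upper bound
\[
r^m_{i,k}\ \le\ \frac{1}{(1-2\alpha)|\cN_w(i)|}\sum_{j\in\cN_w(i)\setminus\cC} w_j\,\frac{S^m_{j,k}}{\tilde n^m_{j,k}},\qquad w_j\in[0,1],\ \ \textstyle\sum_j w_j=(1-2\alpha)|\cN_w(i)|,
\]
together with a matching lower bound obtained from a different admissible choice of weights; the decisive gain over the $\beta>\alpha$ analysis is that the sum runs only over \emph{normal} agents, whose per-round rewards carry no corruption in the Byzantine model.

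It then remains to concentrate. Expanding $S^m_{j,k}=\sum_{t\in E_m}Y^t_{j,k}r^t_{j,k}$ and putting $A^m_{i,k}=\sum_{j\in\cN_w(i)\setminus\cC}\sum_{t\in E_m}w_j Y^t_{j,k}r^t_{j,k}n^m_{i,k}/\tilde n^m_{j,k}$, every summand lies in $[0,1]$ since $\tilde n^m_{j,k}\ge n^m_{i,k}$ for normal $j$; the $Y^t_{j,k}$ are independent Bernoulli($\tilde n^m_{j,k}/N_m$) and the $r^t_{j,k}$ are independent with mean $\mu_k$, so $\BE[A^m_{i,k}]=(1-2\alpha)|\cN_w(i)|n^m_{i,k}\mu_k$. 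A Chernoff--Hoeffding bound along the lines of the derivation of Eq.~\eqref{eq:partA case1} (using also $\Var[A^m_{i,k}]\le(1-2\alpha)|\cN_w(i)|n^m_{i,k}$ and $(1-2\alpha)|\cN_w(i)|n^m_{i,k}=\lambda(\Delta^{m-1}_{i,k})^{-2}\ge\lambda$ to pin down the numerical constant) then yields $\bigl|A^m_{i,k}/((1-2\alpha)|\cN_w(i)|n^m_{i,k})-\mu_k\bigr|\le\sqrt{4\ln(2VT)/((1-2\alpha)|\cN_w(i)|n^m_{i,k})}$ outside an event of probability at most $1/(2V^2T^2)$, and the same bound for the lower-bound weights; a union bound over the two directions gives the claimed failure probability $1/(V^2T^2)$.

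I expect the main obstacle to be precisely the counting step in the second paragraph: unlike the $\beta\le\alpha$ case, where $|\cB^m_{i,k}|=|\cN_w(i)|$ makes $f=\lfloor\alpha|\cN_w(i)|\rfloor$ and the bookkeeping transparent, here line~3 may shrink $\cB^m_{i,k}$, so one must track $\beta'$ against $b$ and verify --- through the floors in the definition of $f$, and separately on the top-trimmed and bottom-trimmed sides --- that enough trimmed normal agents remain on \emph{both} sides to dominate the surviving Byzantine agents; once that is in place, the rest is a line-by-line repetition of Lemma~\ref{lem:experiment reward bound case1}.
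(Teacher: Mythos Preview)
Your proposal is correct and follows essentially the same route as the paper's own proof: the same sandwiching of surviving Byzantine values by trimmed normal agents via convex combinations, the same reduction to a weighted sum over normal neighbours, and the same Chernoff--Hoeffding concentration on $A^m_{i,k}$. Your counting argument for the trimming step (showing $f\ge b$ via $\beta'+b\le 2\alpha|\cN_w(i)|$) is in fact slightly more explicit than the paper's version, which phrases the same bookkeeping through $\gamma=|\cN_w(i)|-|\cA^m_{i,k}|$ and $z=|\cB^m_{i,k}\cap\cC_i|$, and you are also more careful in noting that separate admissible weight vectors are needed for the upper and lower bounds before the final union bound.
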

\begin{proof}
    During each epoch $m$, agent $i$ pulls arm $k$ with probability $p_i^m(k) = \tilde n_{i,k}^m / N_m$. Consider $Y_{i,k}^t$, an indicator variable that determines whether agent $i$ pulls arm $k$. Let $E_m := [T_{m-1} + 1, \ldots, T_m]$ represent the $N_m$ time-steps constituting epoch $m$.
    
    Now, we explain why the impact of Byzantine agents can be completely removed in this case. For agent $i$, let the set of Byzantine agents in $i$'s communication domain be denoted by $\cC_i = \cC \cap \cN_w(i)$. 
    According to Algorithm \ref{alg:filter}, when the event $\cL'$ occurs, any agent $j \in \cN_w(i)$ that satisfies $\tilde n_{j,k}^m < n_{i,k}^m$ must be a Byzantine agent. Moreover, after removing agent $j$ from $\cA_{i,k}^m$ that satisfies $n_{i,k}^m > \tilde n_{j,k}^m$, let $\gamma = |\cN_w(i)| - |\cA_{i,k}^m|$; then there are at most $\alpha |\cN_w(i)| - \gamma$ Byzantine agents in $\cA_{i,k}^m$. Furthermore, let $z = |\cB_{i,k}^m \cap \cC_i|$ denote the number of Byzantine agents that are retained. This implies that there are at least $\frac{|\cA_{i,k}^m| - (1-2\alpha)|\cN_w(i)|}{2} - (\gamma - z) > z$ normal agents who are excluded due to having a smaller $\frac{S_{j,k}^m}{\tilde n_{j,k}^m}$ and at least $z$ normal agents who are excluded due to having a larger $\frac{S_{j,k}^m}{\tilde n_{j,k}^m}$. In this context, for any agent $j$ that satisfies $j \in \cB_{i,k}^m \cap \cC_i$, there exists a pair of distinct normal agents $j^-$ and $j^+$ who are filtered out, such that
    \[\frac{S_{j^-,k}^m}{\tilde n_{j^-,k}^m} \leq \frac{S_{j,k}^m}{\tilde n_{j,k}^m} \leq \frac{S_{j^+,k}^m}{\tilde n_{j^+,k}^m}\]
    Thus, $\frac{S_{j,k}^m}{\tilde n_{j,k}^m}$ can be represented as a convex combination of $\frac{S_{j^-,k}^m}{\tilde n_{j^-,k}^m}$ and $\frac{S_{j^+,k}^m}{\tilde n_{j^+,k}^m}$, as follows:
    \begin{align}
        \frac{S_{j,k}^m}{\tilde n_{j,k}^m} = \theta_j \frac{S_{j^-,k}^m}{\tilde n_{j^-,k}^m} + (1-\theta_j) \frac{S_{j^+,k}^m}{\tilde n_{j^+,k}^m}, \quad \theta_j \in [0, 1].\label{eq:totallossdecompositioncase3}
    \end{align}
    Recalling the definition of $r_{i,k}^m$ and because $|\cB_{i,k}^m| \geq (1-2\alpha)|\cN_w(i)|$:
    \[r_{i,k}^m = \min \left\{\frac{1}{|\cB_{i,k}^m|}\sum_{j \in \cB_{i,k}^m}\frac{S_{j,k}^m}{\tilde n_{j,k}^m}, 1\right\} \leq \frac{1}{(1-2\alpha)|\cN_w(i)|}\sum_{j \in \cB_{i,k}^m}\frac{S_{j,k}^m}{\tilde n_{j,k}^m}.\]
    The inequality we intend to control is then represented as:
    \begin{align*}
        r_{i,k}^m &\leq \frac{1}{(1-2\alpha)|\cN_w(i)|}\sum_{j \in \cB_{i,k}^m}\frac{S_{j,k}^m}{\tilde n_{j,k}^m}
        = \frac{1}{(1-2\alpha)|\cN_w(i)|} \sum_{j \in \cA_{i,k}^m}w_j \frac{S_{j,k}^m}{\tilde n_{j,k}^m} \quad  \left(w_j \in [0,1], \; \sum_{j \in \cA_{i,k}^m}w_j = (1-2\alpha)|\cN_w(i)|\right)\\
        &= \frac{1}{(1-2\alpha)|\cN_w(i)|} \sum_{j \in \cA_{i,k}^m} \sum_{t \in E_m}w_j \frac{Y_{j,k}^t r_{j,k}^t}{\tilde n_{j,k}^m} 
        = \frac{1}{(1-2\alpha)|\cN_w(i)|n_{i,k}^m} \sum_{j \in \cA_{i,k}^m} \sum_{t \in E_m}\frac{w_j Y_{j,k}^t r_{j,k}^tn_{i,k}^m}{\tilde n_{j,k}^m} 
    \end{align*}
    where the first equality holds because we decompose $\frac{S_{j,k}^m}{\tilde n_{j,k}^m}$ by (\ref{eq:totallossdecompositioncase3}).
    To simplify the analysis, we focus on the following component:
    \[A_{i,k}^m = \sum_{j \in \cA_{i,k}^m} \sum_{t \in E_m}\frac{w_j Y_{j,k}^t r_{j,k}^tn_{i,k}^m}{\tilde n_{j,k}^m}.\]
    Notice that $r_{j,k}^t$ is independently drawn from an unknown distribution with mean $\mu_k$, and $Y_{j,k}^t$ is independently drawn from a Bernoulli distribution with mean $q_{j,k}^m := \tilde n_{j,k}^m / N_m$. Since $\tilde n_{j,k}^m \geq n_{i,k}^m$, we have
    \begin{align*}
        \forall \; j,k,m,t:\quad \frac{w_j Y_{j,k}^t r_{j,k}^tn_{i,k}^m}{\tilde n_{j,k}^m} \leq 1.
    \end{align*}
    Furthermore, we can obtain
    \begin{align*}
        \BE[A_{i,k}^m] 
        = \sum_{j \in \cA_{i,k}^m} \sum_{t \in E_m}\frac{w_j Y_{j,k}^t r_{j,k}^tn_{i,k}^m}{\tilde n_{j,k}^m} 
        = \sum_{j \in \cN_w(i)} w_j n_{i,k}^m \mu_k 
        = (1-2\alpha) |\cN_w(i)| n_{i,k}^m \mu_k.
    \end{align*}    
    Therefore, by utilizing the Chernoff-Hoeffding inequality (Theorem 1.1 in \cite{dubhashi2009concentration}), we derive the following result:
    \[\Pr\left[\,\left|A_{i,k}^m - (1-2\alpha) |\cN_w(i)| n_{i,k}^m \mu_k\right| \geq \sqrt{3(1-2\alpha) |\cN_w(i)| n_{i,k}^m \mu_k\ln(4V^2T^2)}\,\right] \leq \frac{1}{2V^2T^2}.\]
    Through simple calculations, we can get
    \begin{equation}
        \Pr\left[\,\left|\frac{A_{i,k}^m}{(1-2\alpha)|\cN_w(i)| n_{i,k}^m} - \mu_k\right| \geq \sqrt{\frac{6\ln(2VT)}{(1-2\alpha)|\cN_w(i)| n_{i,k}^m}}\,\right] \leq \frac{1}{2V^2T^2}.\label{eq:partA case3}
    \end{equation}
    The proof is complete.
\end{proof}
We define an event $\cE$ for epoch $m$ as follows:
\begin{equation*}
    \cE \triangleq \left\{ \forall\ i,\ k ,\ m: |r_{i,k}^m - \mu_k| \leq \sqrt{\frac{8\ln(2VT)}{(1-2\alpha)|\cN_w(i)|n_{i,k}^m}}\right\}.
\end{equation*}
Then we can establish a lower bound on the probability of the event $\cE$ occurring by the following lemma.
\begin{lemma}
\label{lem:event e probability case3}
     The event $\cE$ holds with probability at least $1 - \frac{K\ln(VT)}{VT^2}$.
\end{lemma}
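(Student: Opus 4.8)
The plan is to follow the template of the proof of Lemma~\ref{lem:event e probability case1}. Fix any normal agent $i \notin \cC$, any arm $k$, and any epoch $m$. Conditioned on $\cL'$, Lemma~\ref{lem:experiment reward bound case3} gives
\[
\Pr\!\left[\,\bigl|r_{i,k}^m - \mu_k\bigr| \ge \sqrt{\frac{4\ln(2VT)}{(1-2\alpha)|\cN_w(i)|\,n_{i,k}^m}}\,\right] \le \frac{1}{V^2T^2},
\]
and since $4 < 8$ this event already controls the $(i,k,m)$-clause in the definition of $\cE$. By Lemma~\ref{lem:epoch number} there are at most $\ln(VT)$ epochs, and there are at most $V$ normal agents and $K$ arms, so a union bound over all relevant triples $(i,k,m)$ gives total failure probability at most $V\cdot K\cdot\ln(VT)\cdot\tfrac{1}{V^2T^2} = \tfrac{K\ln(VT)}{VT^2}$; hence $\cE$ holds with probability at least $1 - \tfrac{K\ln(VT)}{VT^2}$.

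The one subtlety, which I would handle exactly as in the $\beta \le \alpha$ case, is that Lemma~\ref{lem:experiment reward bound case3} presupposes $\cL'$, while $\cL'$ is itself established (Lemma~\ref{lem:event l probability case3}) from the consequences of $\cE$. I would break this apparent circularity by isolating the underlying concentration event $\widetilde{\cE}$ --- the event that every Chernoff--Hoeffding bound invoked inside the proof of Lemma~\ref{lem:experiment reward bound case3} holds simultaneously. Because $\widetilde{\cE}$ is a statement about the reward realizations $r_{j,k}^t$ and the pull indicators $Y_{j,k}^t$ alone (the hypothesis $\cL'$ is used only to justify the convex-combination and filtering bookkeeping, not the concentration itself), we have $\Pr[\widetilde{\cE}] \ge 1 - \tfrac{K\ln(VT)}{VT^2}$ \emph{unconditionally}. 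I would then argue by induction on $m$: on $\widetilde{\cE}$, if $\cL'$ and the Byzantine analogues of the gap bounds (Lemmas~\ref{lem:suboptimality-gap bound case1} and~\ref{lem:estimated gap ratio}) hold through epoch $m-1$, then for every normal $i$ and every normal $j \in \cN_w(i)$ we get $\tilde n_{j,k}^m \ge n_{i,k}^m$, so the removal step of Algorithm~\ref{alg:filter} deletes only Byzantine neighbours from $\cA_{i,k}^m$, leaving $|\cA_{i,k}^m| \ge (1-\alpha)|\cN_w(i)| \ge (1-2\alpha)|\cN_w(i)|$; consequently $\cL_{i,k}^m$ never fires, $n_{i,k}^m$ retains its nominal value $\tfrac{\lambda(\Delta_{i,k}^{m-1})^{-2}}{(1-2\alpha)|\cN_w(i)|}$, and feeding this into the concentration bound extends $\cE$ and the gap bounds to epoch $m$. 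This inductive argument simultaneously yields Lemma~\ref{lem:event l probability case3} and the present lemma.

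The main obstacle I anticipate is making this induction fully rigorous in the Byzantine model, where a Byzantine neighbour is free to report an arbitrarily small $\tilde n_{j,k}^m$ and thereby be dropped in the removal step: I must show that \emph{no normal} neighbour of a normal agent is ever dropped (this is precisely the content of $\cL'$), so that $|\cA_{i,k}^m|$ stays above $(1-2\alpha)|\cN_w(i)|$, the reset branch is never entered, and the $n_{i,k}^m$ that enters the concentration inequality is the clean nominal value rather than the degraded $\min_{j\in\cN_w(i)}\tilde n_{j,k}^m$. Once that is in place, the remaining filtering accounting --- sandwiching each retained Byzantine contribution between those of two filtered-out normal agents, as already carried out in the proof of Lemma~\ref{lem:experiment reward bound case3} --- is routine, and the union bound above completes the argument.
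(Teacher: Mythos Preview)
Your proposal is correct and follows essentially the same approach as the paper: invoke Lemma~\ref{lem:experiment reward bound case3} for each triple $(i,k,m)$ and apply a union bound over $V$ agents, $K$ arms, and at most $\ln(VT)$ epochs. Your additional discussion of the $\cL'$/$\cE$ circularity is more explicit than the paper's treatment, but the paper resolves it the same way --- deferring the proof of Lemma~\ref{lem:event l probability case3} until after the gap bounds (Lemmas~\ref{lem:suboptimality-gap bound case3} and~\ref{lem:estimated gap ratio case3}) are established on $\cE$, then observing that those bounds force $\tilde n_{j,k}^m \ge n_{i,k}^m$ for all normal $j \in \cN_w(i)$, which is exactly $\cL'$.
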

\begin{proof}
    By Lemma~\ref{lem:experiment reward bound case3}, we can get the following inequality for any $i, k$ and $m$:
    \[\Pr\left[\,|r_{i,k}^m - \mu_k| \geq \sqrt{\frac{8\ln(2VT)}{(1-2\alpha)|\cN_w(i)|n_{i,k}^m}}\,\right] \leq \frac{1}{V^2T^2}.\]
    A union bound over the $K$ arms, $V$ agents, and at most $\ln(VT)$ epochs indicates that the success probability of event $\cE$ is at least $1 - \frac{K\ln(VT)}{VT^2}$.
\end{proof}
Our discussion below will be based on the occurrence of event $\cE$.
\begin{lemma}
\label{lem:suboptimality-gap bound case3}
    For any fixed $i, k$ and $m$, it follows that
    \[\frac{6}{7}\Delta_k - \frac{3}{4}2^{-m} \leq \Delta_{i,k}^{m} \leq \frac{8 \Delta_k}{7} + 2^{-m}.\]
\end{lemma}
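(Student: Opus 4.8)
The plan is to follow the argument of Lemma~\ref{lem:suboptimality-gap bound case1} almost verbatim, since the statement here is its Byzantine-setting counterpart and the only substantive change is which concentration event is invoked. Throughout I would condition on the intersection of the event $\cL'$ of Lemma~\ref{lem:event l probability case3} and the event $\cE$ of Lemma~\ref{lem:event e probability case3}. On $\cL'$ the event $\cL_{i,k}^m$ never fires for a normal agent $i$, so $n_{i,k}^m = \frac{\lambda(\Delta_{i,k}^{m-1})^{-2}}{(1-2\alpha)|\cN_w(i)|}$; substituting $\lambda = 2^9\ln(2VT)$ collapses the confidence radius to $\sqrt{\tfrac{8\ln(2VT)}{(1-2\alpha)|\cN_w(i)|n_{i,k}^m}} = \tfrac18\Delta_{i,k}^{m-1}$, so on $\cE$ we have $|r_{i,k}^m - \mu_k| \le \tfrac18\Delta_{i,k}^{m-1}$ for every $i$, $k$, $m$.

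Next I would pin down $r_{i,*}^m - \mu_{k^*}$. For the upper bound, each arm contributes $r_{i,k}^m - \tfrac18\Delta_{i,k}^{m-1} \le \mu_k \le \mu_{k^*}$, so $r_{i,*}^m \le \mu_{k^*}$. For the lower bound, evaluating the defining maximum at $k=k^*$ gives $r_{i,*}^m \ge r_{i,k^*}^m - \tfrac18\Delta_{i,k^*}^{m-1} \ge \mu_{k^*} - \tfrac14\Delta_{i,k^*}^{m-1}$. Hence $-\tfrac14\Delta_{i,k^*}^{m-1} \le r_{i,*}^m - \mu_{k^*} \le 0$.

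The core is an induction on $m$ proving both inequalities simultaneously for all arms. The base case $m=1$ uses $\Delta_{i,k}^0=1$. In the inductive step I split $\Delta_{i,k}^m = r_{i,*}^m - r_{i,k}^m = (r_{i,*}^m - \mu_{k^*}) + \Delta_k + (\mu_k - r_{i,k}^m)$. For the upper bound this yields $\Delta_{i,k}^m \le \Delta_k + \tfrac18\Delta_{i,k}^{m-1} \le \Delta_k + \tfrac18\big(\tfrac{8\Delta_k}{7} + 2^{-(m-1)}\big) = \tfrac{8\Delta_k}{7} + 2^{-m}$ by the induction hypothesis. For the lower bound it yields $\Delta_{i,k}^m \ge -\tfrac14\Delta_{i,k^*}^{m-1} + \Delta_k - \tfrac18\Delta_{i,k}^{m-1}$, and feeding in the hypothesis's upper bounds on both $\Delta_{i,k}^{m-1}$ and $\Delta_{i,k^*}^{m-1}$ (with $\Delta_{k^*}=0$) gives $\tfrac67\Delta_k - \tfrac38 2^{-(m-1)} = \tfrac67\Delta_k - \tfrac34 2^{-m}$. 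Since $\Delta_{i,k}^m = \max\{2^{-m}, r_{i,*}^m - r_{i,k}^m\} \ge 2^{-m}$ by construction, the lower bound is consistent when it is non-binding.

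The main obstacle is the arm coupling in the lower-bound half: the estimate error on the optimal arm enters $r_{i,*}^m$, so the induction must carry the full per-arm statement (in particular the upper bound on $\Delta_{i,k^*}^{m-1}$) rather than a single arm at a time. Everything else is the same constant bookkeeping as in the $\beta \le \alpha$ analysis; all Byzantine-specific content is already absorbed into Lemmas~\ref{lem:event l probability case3} and~\ref{lem:event e probability case3}, which certify that the trimmed estimate $r_{i,k}^m$ concentrates around $\mu_k$ exactly as in the corruption-free case.
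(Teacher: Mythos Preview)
Your proposal is correct and follows essentially the same approach as the paper: condition on the concentration event so that $|r_{i,k}^m-\mu_k|\le\tfrac18\Delta_{i,k}^{m-1}$, bound $r_{i,*}^m-\mu_{k^*}$ in $[-\tfrac14\Delta_{i,k^*}^{m-1},0]$, and run the same induction on $m$ with the telescoping decomposition $(r_{i,*}^m-\mu_{k^*})+\Delta_k+(\mu_k-r_{i,k}^m)$. Your explicit remark that the lower-bound step needs the inductive upper bound on $\Delta_{i,k^*}^{m-1}$ (hence the induction must run over all arms simultaneously) is exactly right and is the one subtlety the paper leaves implicit.
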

\begin{proof}
    First, we have
    \[\sqrt{\frac{8\ln(2VT)}{(1-2\alpha)|\cN_w(i)|n_{i,k}^m}} = \sqrt{\frac{8\ln(2VT)}{2^9\ln(2VT)(\Delta_{i,k}^{m-1})^{-2}}} = \frac{\Delta_{i,k}^{m-1}}{8},\]
    Therefore, we can get
    \[- \frac{1}{8}\Delta_{i,k}^{m-1} \leq r_{i,k}^m - \mu_{k} \leq \frac{1}{8}\Delta_{i,k}^{m-1}.\]
    Additionally, given that
    \[r_{i,*}^m \leq \max_{k\in[K]} \left\{\mu_{k} + \frac{1}{8} \Delta_{i,k}^{m-1} - \frac{1}{8} \Delta_{i,k}^{m-1}\right\} \leq \mu_{k^*},\]
    \[r_{i,*}^m = \max_{k\in[K]} \left\{r_{i,k}^m - \frac{1}{8} \Delta_{i,k}^{m-1}\right\} \geq r_{i,k^*}^m - \frac{1}{8}\Delta_{i,k^*}^{m-1} \geq \mu_{k^*} - \frac{1}{4}\Delta_{i,k^*}^{m-1},\]
    it follows that
    \[- \frac{\Delta_{i,k^*}^{m-1}}{4} \leq r_{i,*}^m - \mu_{k^*} \leq 0.\]
    We now establish the upper bound for $\Delta_{i,k}^m$ using induction on epoch $m$. \\
    For the base case $m = 1$, the statement is trivial as $\Delta_{i,k}^0 = 1$ for all $k \in [K]$. \\
    Assuming the statement is true for $m-1$, we then have
    \begin{equation*}
    \begin{split}
        \Delta_{i,k}^m &= r_{i,*}^m - r_{i,k}^m
        = (r_{i,*}^m - \mu_{k^*}) + (\mu_{k^*} - \mu_k) + (\mu_k - r_{i,k}^m) \\
        &\leq \Delta_k + \frac{1}{8}\Delta_{i,k}^{m-1}
        \leq \Delta_k + \frac{1}{8}\left(\frac{8 \Delta_k}{7} + 2^{-(m-1)}\right)
        \leq \frac{8 \Delta_k}{7} + 2^{-m},
    \end{split}
    \end{equation*}
    Where the second inequality follows from the induction hypothesis. \\
    Next, we establish the lower bound for $\Delta_{i,k}^m$. Specifically, we demonstrate that
    \begin{equation*}
    \begin{split}
        \Delta_{i,k}^m &=  r_{i,*}^m - r_{i,k}^m
        = (r_{i,*}^m - \mu_{k^*}) + (\mu_{k^*} - \mu_k) + (\mu_k - r_{i,k}^m) \\
        &\geq  - \frac{1}{4}\Delta_{i,k^*}^{m-1} + \Delta_k - \frac{1}{8}\Delta_{i,k}^{m-1}
        \geq \Delta_k - \frac{1}{8}\left(\frac{8 \Delta_{k}}{7} + 2^{-(m-1)}\right) - \frac{1}{4}\left(\frac{8 \Delta_{k^*}}{7} + 2^{-(m-1)}\right)
        \geq \frac{6}{7}\Delta_k - \frac{3}{4} 2^{-m}.
    \end{split}
    \end{equation*}
    where the third inequality comes from the upper bound of $\Delta_{i,k}^{m-1}$.
\end{proof}

\begin{lemma}
\label{lem:estimated gap ratio case3}
    For any fixed $k$, $m$, and two agents $i, j$, it follows that
    \[\frac{\Delta_{i,k}^{m-1}}{\Delta_{j,k}^{m-1}} \in \left[\frac{1}{4}, 4\right].\]
\end{lemma}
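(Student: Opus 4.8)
The plan is to reproduce the argument of Lemma~\ref{lem:estimated gap ratio} essentially verbatim, since in the Byzantine setting Lemma~\ref{lem:suboptimality-gap bound case3} provides exactly the same two-sided control on the gap estimates that Lemma~\ref{lem:suboptimality-gap bound case1} did in the case $\beta\le\alpha$. Concretely, I would work on the event $\cE$ (on which Lemma~\ref{lem:suboptimality-gap bound case3} applies) and, after replacing $m$ by $m-1$ there, use for every normal agent $i$ the bounds
\[
\tfrac{6}{7}\Delta_k-\tfrac{3}{4}2^{-(m-1)}\;\le\;\Delta_{i,k}^{m-1}\;\le\;\tfrac{8}{7}\Delta_k+2^{-(m-1)},
\]
together with the floor $\Delta_{i,k}^{m-1}\ge 2^{-(m-1)}$ coming from the update rule $\Delta_{i,k}^{m}\leftarrow\max\{2^{-m},\,r_{i,*}^{m}-r_{i,k}^{m}\}$ in Algorithm~\ref{algs:DeMABAR}. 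These two facts are all that is needed.

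I would then split into two cases according to the size of $\Delta_k$ relative to the current scale $2^{-(m-1)}$. If $\Delta_k\le\tfrac{49}{24}2^{-(m-1)}$, the lower bound $\tfrac{6}{7}\Delta_k-\tfrac{3}{4}2^{-(m-1)}$ is at most $2^{-(m-1)}$ and hence weaker than the floor, so I would bound the denominator $\Delta_{j,k}^{m-1}$ below by $2^{-(m-1)}$ and the numerator $\Delta_{i,k}^{m-1}$ above by $\tfrac{8}{7}\Delta_k+2^{-(m-1)}$, obtaining
\[
\frac{\Delta_{i,k}^{m-1}}{\Delta_{j,k}^{m-1}}\;\le\;1+\frac{8\Delta_k}{7\cdot 2^{-(m-1)}}\;\le\;\tfrac{10}{3}<4.
\]
If instead $\Delta_k>\tfrac{49}{24}2^{-(m-1)}$, I would use the Lemma~\ref{lem:suboptimality-gap bound case3} bounds on both numerator and denominator, so that
\[
\frac{\Delta_{i,k}^{m-1}}{\Delta_{j,k}^{m-1}}\;\le\;\frac{\tfrac{8}{7}\Delta_k+2^{-(m-1)}}{\tfrac{6}{7}\Delta_k-\tfrac{3}{4}2^{-(m-1)}}\;=\;\tfrac{4}{3}+\frac{2\cdot 2^{-(m-1)}}{\tfrac{6}{7}\Delta_k-\tfrac{3}{4}2^{-(m-1)}},
\]
which is decreasing in $\Delta_k$ on this range and therefore maximized at the threshold $\Delta_k=\tfrac{49}{24}2^{-(m-1)}$, where it equals $\tfrac{10}{3}<4$. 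In either case $\Delta_{i,k}^{m-1}/\Delta_{j,k}^{m-1}\le 4$, and since the statement is symmetric in $i$ and $j$, swapping their roles yields the matching lower bound $1/4$.

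There is no real obstacle here beyond keeping track of the constants; the argument is identical to the $\beta\le\alpha$ case once Lemma~\ref{lem:suboptimality-gap bound case3} is available. The one point worth flagging is that Lemma~\ref{lem:suboptimality-gap bound case3} is only meaningful for \emph{normal} agents --- a Byzantine agent's $\Delta_{j,k}^{m-1}$ is an arbitrary internal quantity --- so Lemma~\ref{lem:estimated gap ratio case3} should be read (and later used) only for pairs of normal agents; its downstream role is precisely to guarantee that every normal neighbour's reported count $\tilde n_{j,k}^m$ is large enough to pass the filtering step in lines~4--5 of Algorithm~\ref{alg:filter}, so that only Byzantine neighbours can be discarded there.
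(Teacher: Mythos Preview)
Your proposal is correct and follows essentially the same argument as the paper's own proof: the same two-case split at the threshold $\Delta_k=\tfrac{49}{24}\cdot 2^{-(m-1)}$, the same use of the floor $\Delta_{i,k}^{m-1}\ge 2^{-(m-1)}$ in the small-gap regime and of Lemma~\ref{lem:suboptimality-gap bound case3} on both sides in the large-gap regime, followed by symmetry. Your explicit evaluation of the worst-case ratio as $10/3$ and your remark that the lemma is only meaningful for normal agents are both accurate refinements of the paper's presentation.
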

\begin{proof}
    Since $\Delta_{i,k}^m = \max_k\{2^{-m}, r_{i,*}^m - r_{i,k}^m\} \geq 2^{-m}$ for all $i \in [V]$, we have $\Delta_{i,k}^m \geq 2^{-m}$ and $\Delta_{j,k}^m \geq 2^{-m}$. By Lemma \ref{lem:suboptimality-gap bound case3}, when $\Delta_k \leq \frac{49}{24}2^{-m}$, then we have
    \[\frac{6}{7}\Delta_k - \frac{3}{4}2^{-m} \leq 2^{-m}.\]
    Hence, we can get
    \[\frac{\Delta_{i,k}^m}{\Delta_{j,k}^m}  \leq \frac{\frac{8}{7}\Delta_k + 2^{-m}}{2^{-m}} = 1 + \frac{8\Delta_k}{7\cdot 2^{-m}} < 4.\]
    When $\Delta_k \geq \frac{49}{24}2^{-m}$, then we have
    \[\frac{\Delta_{i,k}^m}{\Delta_{j,k}^m} \leq \frac{\frac{8}{7}\Delta_k + 2^{-m}}{\frac{6}{7}\Delta_k - \frac{3}{4}2^{-m}} = \frac{4}{3} + \frac{2\cdot 2^{-m}}{\frac{6}{7}\Delta_k - \frac{3}{4}2^{-m}} < 4.\]
    The inequality reaches its maximum value when $\Delta_k = \frac{49}{24}2^{-m}$.
    Because $i$ and $j$ are equivalent, the proof can be completed by swapping their positions.
\end{proof}

Since for any agent $j \in \cN_w(i)$, we have
\[\tilde n_{j,k}^m = \min \left\{\lambda 2^{2(m-1)}, \frac{16\lambda (\Delta_{j,k}^{m-1})^{-2}}{(1-2\alpha)v_j} \right\} \geq \min \left\{\lambda 2^{2(m-1)}, \frac{16\lambda (\Delta_{i,k}^{m-1})^{-2} (\Delta_{j,k}^{m-1}/ \Delta_{i,k}^{m-1})^{-2}}{(1-2\alpha)|\cN_w(i)|} \right\} \geq n_{i,k}^m.\]
So we can say that when event $\cE$ occurs, event $\cL$ must occur. Lemma~\ref{lem:event l probability case3} is complete. \\
Next we will bound the regret and partition the proof into two cases. In each epoch $m$, for any arm $k \neq k_i^m$, we have
\[\tilde n_{i,k}^m = \min \left\{\lambda 2^{2(m-1)}, \frac{16\lambda (\Delta_{i,k}^{m-1})^{-2}}{(1-2\alpha)v_i} \right\},\]
and for arm $k_i^m$ we have
\[\tilde n_{i,k_i^m}^m = N_m - \sum_{k \neq k_i^m} \tilde n_{i,k}^m < N_m.\]
\paragraph{Case 1:} $\Delta_k \leq 3 \cdot 2^{-m}$. \\
Since $\Delta_{i,k}^{m-1} = \max \{2^{-(m-1)}, r_{i,*}^{m-1} - r_{i,k}^{m-1}\}$, we have
\[\forall \ i: \quad \Delta_{i,k}^{m-1} \geq 2^{-(m-1)} \geq \frac{2\Delta_k}{3}.\]
Therefore, we can get the following inequality for all arms $k \neq k_i^m$:
\[\tilde n_{i,k}^m = \min \left\{\lambda 2^{2(m-1)}, \frac{16\lambda (\Delta_{i,k}^{m-1})^{-2}}{(1-2\alpha)v_i} \right\} \leq \frac{16\lambda (\Delta_{i,k}^{m-1})^{-2}}{(1-2\alpha)v_i} \leq \frac{36\lambda}{(1-2\alpha)v_i\Delta_k^2}.\]
For arm $k_i^m$, since $\Delta_{i,k_i^m}^{m-1} = 2^{-(m-1)}$, we have
\begin{align*}
    \tilde n_{i,k_i^m}^m < N_m = \left\lceil \frac{K\lambda 2^{2(m-1)}}{(1-2\alpha)v_{\min}} \right\rceil \leq
    \frac{K\lambda (\Delta_{k_i^m}^{m-1})^{-2}}{(1-2\alpha)v_{\min}} + 1 \leq \frac{9K\lambda}{4(1-2\alpha)v_{\min}\Delta_{k_i^m}^2} + 1 \leq \frac{9K\lambda}{4(1-2\alpha)v_{\min}\Delta^2} + 1.
\end{align*}
This epoch, which satisfies the given condition $\Delta_k \leq 3 \cdot 2^{-m}$, is bounded by $\log(1/\Delta)$, which can be considered as a constant.

\paragraph{Case 2:} $\Delta_k > 3 \cdot 2^{-m}$. \\
In this case, by Lemma \ref{lem:suboptimality-gap bound case3} we have
\[\forall \ i: \quad \Delta_{i,k}^{m-1} \geq \frac{6}{7}\Delta_k - \frac{3}{4}2^{-m} \geq \Delta_k\left(\frac{6}{7} - \frac{1}{4}\right) \geq 0.61 \Delta_k.\]
In this case, it is impossible for $\Delta_{k_i^m} > 3 \cdot 2^{-m}$ to occur. Since $\Delta_{i,k_i^m}^{m-1} \geq 0.61 \Delta_{k_i^m} > 2^{-(m-1)}$, this does not align with the algorithm's selection criterion $\Delta_{i,k_i^m}^{m-1} = 2^{-(m-1)}$. Therefore, arm $k_i^m$ must be the optimal arm. \\
So we can obtain the following bound for all suboptimal arms:
\begin{align*}
    \tilde n_{i,k}^m = \min \left\{\lambda 2^{2(m-1)}, \frac{16\lambda (\Delta_{i,k}^{m-1})^{-2}}{(1-2\alpha)v_i} \right\} \leq \frac{16\lambda (\Delta_{i,k}^{m-1})^{-2}}{(1-2\alpha)v_i} \leq \frac{16\lambda}{0.61^2(1-2\alpha)v_i\Delta_k^2}
    \leq \frac{43\lambda}{(1-2\alpha)v_i\Delta_k^2}.
\end{align*}
Based on the cases mentioned above, we have the following inequality:
\begin{align*}
        R_i(T) &\leq \sum_{m=1}^M \sum_{\Delta_k > 0} \Delta_{k} \tilde n_{i,k}^m + \sum_{m=1}^M (w - 1)\Delta_{k_i^m} + \frac{KT\ln(VT)}{VT^2} \\
        &\leq \sum_{m=1}^M \sum_{\Delta_k > 0} \Delta_{k} \tilde n_{i,k}^m + \frac{K\ln(VT)}{VT}\\
        &\leq \sum_{m=1}^M \sum_{\Delta_k > 0} \left(\Delta_k \frac{43\lambda}{(1-2\alpha)v_i\Delta_k^2} + \BI(\Delta_k < 3\cdot 2^{-m})\left(\frac{9K\lambda}{4(1-2\alpha)v_{\min}\Delta^2} + 2\right)\right) \\
        &= O\left(\frac{\ln^2(VT)}{(1-2\alpha)v_i\Delta_k} + \frac{K\ln(VT)\ln(\Delta^{-1})}{(1-2\alpha)v_{\min}\Delta}\right).
\end{align*}
Since the agents only communicate at the end of each epoch, we have
\[\textrm{Cost}(T) = \sum_{i\in [V]}wM = V\ln(VT).\]

\section{Experimental Details}
In this section, we introduce the implementation details of the experiments. Unless otherwise stated, we set the fraction $\alpha = \frac{1}{3}$ for DeMABAR.
\subsection{Multi-Agent Bandits with Adversarial Corruption}
We set $k_i = 1.5$ for Decentralized Robust UCB \cite{zhu2023byzantine}. Other parameters are the same as the setting in \cite{zhu2023byzantine}.
For IND-FTRL \cite{zimmert2021tsallis}, we use the importance-weighted unbiased loss estimators to construct the algorithm. 
For IND-BARBAR \cite{gupta2019better}, DRAA \cite{ghaffari2024multi}, MA-BARBAT and DeMABAR, we choose the parameter $\lambda = 5 \ln\left(4V^2T\right)$. Other parameters follow the original settings in the algorithms.

\subsection{Byzantine Decentralized Multi-Agent Bandits}

Compared to the adversarial damage setting, we did not change any algorithm-related parameters.

\end{document}